\def\ci{\perp\hspace{-6pt}\perp}
\tikzstyle{block}=[draw opacity=0.7,line width=1.4cm]
\begin{document}

\title{Order-independent constraint-based causal structure learning}

\author{\name Diego Colombo \email colombo@stat.math.ethz.ch \\
       \name Marloes H. Maathuis \email maathuis@stat.math.ethz.ch \\
       \addr Seminar for Statistics,\\
       ETH Zurich,\\
       8092 Zurich, Switzerland}

\editor{}

\maketitle

\begin{abstract}
  We consider constraint-based methods for causal structure learning, such
  as the PC-, \mbox{FCI-,} RFCI- and CCD- algorithms
  (\citet{SpirtesEtAl93,SpirtesEtAl00}, \cite{Richardson96},
  \cite{CoMaKaRi2012}, \cite{ClaassenEtAl13}). The first step of all these
  algorithms consists of the PC-algorithm. This algorithm is known to be
  order-dependent, in the sense that the output can depend on the order in
  which the variables are given. This order-dependence is a minor issue in
  low-dimensional settings. We show, however, that it can be very
  pronounced in high-dimensional settings, where it can lead to highly
  variable results. We propose several modifications of the PC-algorithm
  (and hence also of the other algorithms) that remove part or all of this
  order-dependence. All proposed modifications are consistent in
  high-dimensional settings under the same conditions as their original
  counterparts. We compare the PC-, FCI-, and RFCI-algorithms and their
  modifications in simulation studies and on a yeast gene expression data
  set. We show that our modifications yield similar performance in
  low-dimensional settings and improved performance in high-dimensional
  settings. All software is implemented in the R-package
  \texttt{pcalg}.
\end{abstract}

\begin{keywords}
  directed acyclic graph, PC-algorithm, FCI-algorithm, CCD-algorithm,
  order-dependence, consistency, high-dimensional data
\end{keywords}


\section{Introduction}\label{sec.introduction}

Constraint-based methods for causal structure learning use conditional
independence tests to obtain information about the underlying causal
structure. We start by discussing several prominent examples of such
algorithms, designed for different settings.


The PC-algorithm (\cite{SpirtesEtAl93,SpirtesEtAl00}) was designed for learning directed \emph{acyclic} graphs (DAGs) under the assumption of \emph{causal sufficiency}, i.e., no unmeasured common
causes and no selection variables. It learns a Markov equivalence class of DAGs that can be uniquely described by  a
so-called completed partially directed acyclic graph (CPDAG) (see Section
\ref{sec.graph.def} for a precise definition).
The PC-algorithm is widely
used
in high-dimensional settings (e.g.,
\cite{KalischEtAl10, NagarajanEtAl10, StekhovenEtAll12, ZhangEtAl11}), since it is computationally feasible
for sparse graphs with up to thousands of variables, and open-source
software is available (e.g., \texttt{pcalg} \citep{KalischEtAl12} and
TETRAD IV \citep{SpirtesEtAl00}). Moreover, the PC-algorithm has been shown
to be consistent for high-dimensional sparse graphs
\citep{KalischBuehlmann07a, HarrisDrton12}.

The FCI- and RFCI-algorithms and their modifications
(\cite{SpirtesEtAl93,SpirtesEtAl00,SpirtesMeekRichardson99,Spirtes01-anytime,
  Zhang08-orientation-rules}, \cite{CoMaKaRi2012}, \cite{ClaassenEtAl13})
were designed for learning directed \emph{acyclic} graphs when
\emph{allowing for latent and selection variables}. Thus, these algorithms
learn a Markov equivalence class of DAGs with latent and selection
variables, which can be uniquely represented by a partial ancestral graph
(PAG). These algorithms first employ the PC-algorithm, and then perform
additional conditional independence tests because of the latent variables.

Finally, the CCD-algorithm \citep{Richardson96} was designed for learning
Markov equivalence classes of directed (\emph{not necessarily acyclic})
graphs under the assumption of \emph{causal sufficiency}. Again, the first
step of this algorithm consists of the PC-algorithm.

Hence, all these algorithms share the PC-algorithm as a common first step. We will therefore focus our analysis on this algorithm, since any improvements to the PC-algorithm can be directly carried over to the other algorithms.
When the PC-algorithm is applied to data, it is generally order-dependent,
in the sense that its output depends on the order in which the variables
are given. \cite{DashDruzdzel99} exploit the
order-dependence to obtain candidate graphs
for a score-based approach. \cite{CanoEtAl08} resolve the order-dependence
via a rather involved method based on measuring edge
strengths. \cite{SpirtesEtAl00} (Section 5.4.2.4) propose a method that
removes the ``weakest" edges as early as possible. Overall,
however, the order-dependence of the PC-algorithm has received relatively little
attention in the literature, suggesting that it seems to  be regarded as a minor
issue. We found, however, that the order-dependence can become very
problematic for high-dimensional data, leading to highly variable results
and conclusions for different variable orderings.

In particular, we analyzed a yeast
gene expression data set (\cite{Hughes00}; see Section
\ref{sec.simulation.realdata} for more detailed information) containing
gene expression levels of 5361 genes for 63 wild-type yeast
organisms. First, we considered estimating the skeleton of the CPDAG, that
is, the undirected graph obtained by discarding all arrowheads in the
CPDAG. Figure \ref{fig.levelplot.1} shows the large variability in the
estimated skeletons for 25 random orderings of the
variables. Each estimated skeleton consists of roughly 5000 edges which
can be divided into three groups: about 1500 are highly
stable and occur in all orderings, about 1500 are moderately stable and
occur in at least $50\%$ of the orderings, and about 2000 are unstable
and occur in at most $50\%$ of the orderings. Since the FCI- and CCD-algorithms
employ the PC-algorithm as a first step, their resulting skeletons for these data
are also highly order-dependent.

An important motivation for learning DAGs lies in their
causal interpretation. We therefore also investigated the effect of
different variable orderings on causal inference that is based on the PC-algorithm. In particular, we applied the IDA algorithm
\citep{MaathuisColomboKalischBuhlmann10, MaathuisKalischBuehlmann09} to the
yeast gene expression data discussed above. The IDA algorithm conceptually
consists of two-steps: one first estimates the Markov equivalence class of
DAGs using the PC-algorithm, and one then applies Pearl's do-calculus
\citep{Pearl00} to each DAG in the Markov equivalence class. (The algorithm
uses a fast local implementation that does not require listing all DAGs in the
equivalence class.) One can then obtain estimated lower bounds on the sizes
of the causal effects between all pairs of genes. For each of the 25 random
variable orderings, we ranked the gene pairs according to these lower bounds, and
compared these rankings to a gold standard set of large causal
effects computed from gene knock-out data. Figure \ref{fig.roc.old} shows
the large variability in the resulting receiver operating
characteristic (ROC) curves. The ROC curve that was published in
\cite{MaathuisColomboKalischBuhlmann10} was significantly better than
random guessing with $p<0.001$, and is somewhere in the middle. Some of the
other curves are much better, while there are also curves that are
indistinguishable from random guessing.

\begin{figure}[!h]\centering%
  \subfigure[Edges occurring in the estimated skeletons for 25
  random variable orderings, as well as for the original ordering
  (shown as variable ordering 26). A black entry for an edge $i$ and a variable ordering $j$ means that edge $i$ occurred in the estimated skeleton for the $j$th variable ordering. The edges along the $x$-axis are ordered
  according to their frequency of occurrence in the estimated skeletons, from
  edges that occurred always to edges that occurred only once.
  Edges that did not occur for any of the variable orderings were
  omitted.  For technical reasons, only every 10th edge is actually
  plotted.]{\label{fig.levelplot.1}
     \includegraphics[scale=0.32,angle=0]{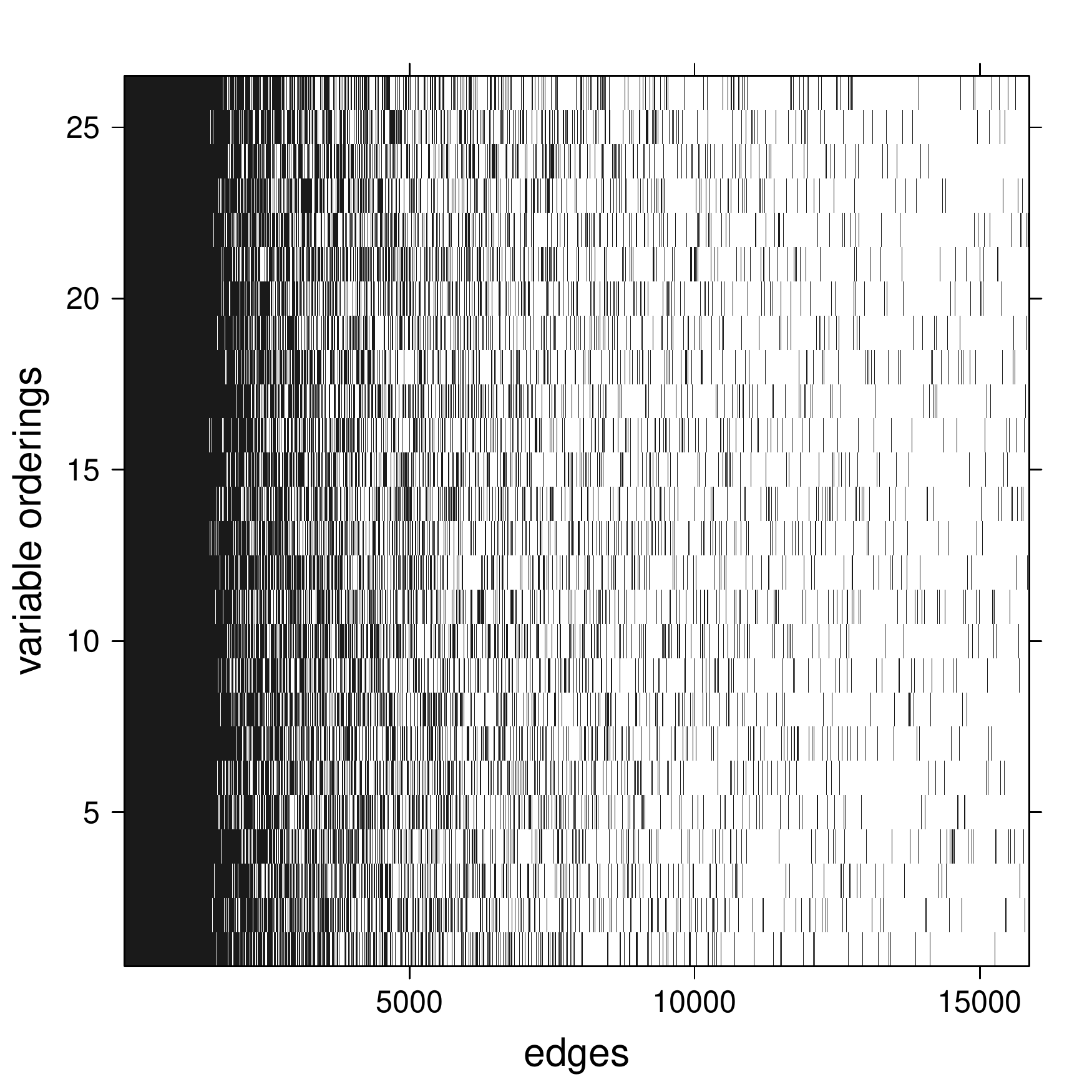}
  }\qquad
   \subfigure[ROC curves corresponding to the 25 random orderings of the
   variables (solid black), where the curves are generated exactly as in
   \cite{MaathuisColomboKalischBuhlmann10}. The ROC curve for the original
   ordering of the variables (dashed blue) was published in
   \cite{MaathuisColomboKalischBuhlmann10}. The dashed-dotted red curve
   represents random guessing.]{\label{fig.roc.old}
     \includegraphics[scale=0.34,angle=0]{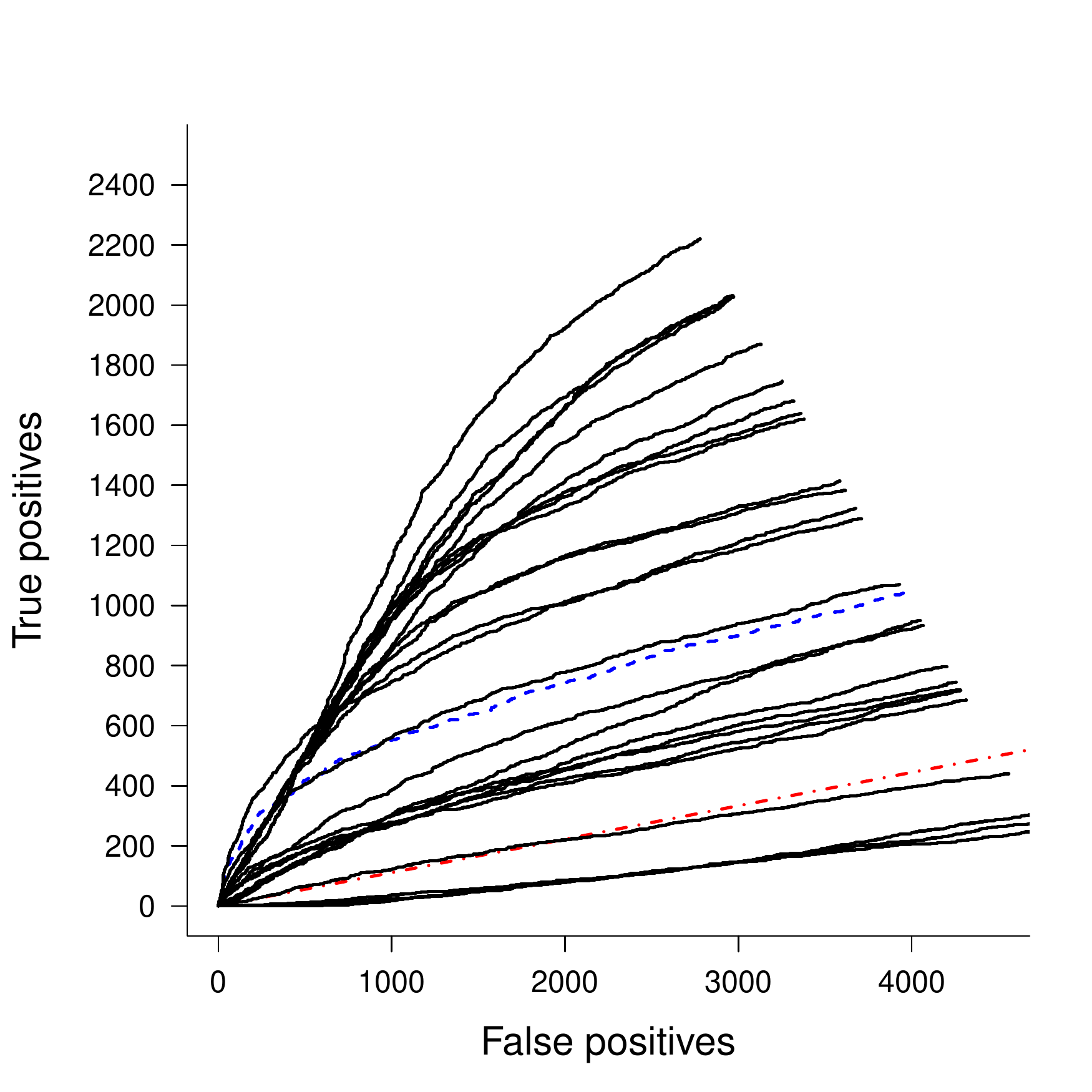}
  }
  \label{fig.hughesetal.1}
  \caption{Analysis of the yeast gene expression data \citep{Hughes00} for 25
    random orderings of the variables, using tuning parameter
    $\alpha=0.01$. The estimated graphs and resulting causal rankings are
    highly order-dependent.}
\end{figure}


The remainder of the paper is organized as follows. In Section
\ref{sec.graph.def} we discuss some background and terminology.
Section \ref{sec.original.pc} explains the original PC-algorithm. Section
\ref{sec.modified.pc} introduces modifications of the PC-algorithm (and
hence also of the (R)FCI- and CCD-algorithms) that remove part or all of the
order-dependence. These modifications are identical to their original
counterparts when perfect conditional independence information is
used. When applied to data, the modified algorithms are partly or fully
order-independent. Moreover, they are consistent in high-dimensional
settings under the same conditions as the original algorithms. Section
\ref{sec.simulation.results} compares all algorithms in simulations, and
Section \ref{sec.simulation.realdata} compares them on the yeast gene
expression data discussed above. We close with a discussion in Section
\ref{sec.discussion}.

\section{Preliminaries}\label{sec.graph.def}

\subsection{Graph terminology}

A graph $\mathcal{G}=(\mathbf{V},\mathbf{E})$ consists of a vertex set
$\mathbf{V}=\{X_1,\dots,X_p\}$ and an edge set $\mathbf{E}$.
The vertices represent random variables and the edges
represent relationships between pairs of variables.

A graph
containing only directed edges ($\rightarrow$) is \emph{directed},
one containing only undirected edges ($-$) is \emph{undirected}, and
one containing directed and/or undirected edges is \emph{partially
directed}. The \emph{skeleton} of a partially directed graph is the
undirected graph that results when all directed edges are replaced by
undirected edges.

All graphs we consider are \textit{simple}, meaning that there is at most one
edge between any pair of vertices. If an edge is present, the vertices are said
to be \textit{adjacent}. If all pairs of vertices in a graph are adjacent,
the graph is called \emph{complete}. The \textit{adjacency set} of a vertex
$X_i$ in a graph $\mathcal{G}=(\mathbf{V},\mathbf{E})$, denoted by
adj$(\mathcal{G},X_i)$, is the set of all vertices in $\mathbf{V}$ that are
adjacent to $X_i$ in $\mathcal{G}$. A vertex $X_j$ in adj$(\mathcal{G},X_i)$
is called a \textit{parent} of $X_i$ if $X_j \rightarrow X_i$. The
corresponding set of parents is denoted by pa$(\mathcal{G},X_i)$.

A \textit{path} is a sequence of distinct adjacent
vertices. A \textit{directed path} is a path along directed
edges that follows the direction of the arrowheads. A \emph{directed cycle} is
formed by a directed path from $X_i$ to $X_j$ together with the edge $X_j
\rightarrow X_i$. A (partially) directed graph is called a
\emph{(partially) directed acyclic graph} if it does not contain
directed cycles.

A triple $(X_i,X_j,X_k)$ in a graph $\mathcal{G}$ is
\emph{unshielded} if $X_i$ and $X_j$ as well as $X_j$ and $X_k$ are
adjacent, but $X_i$ and $X_k$ are not adjacent in $\mathcal{G}$. A
\emph{v-structure} $(X_i,X_j,X_k)$ is an unshielded triple in a graph
$\mathcal{G}$ where the edges are oriented as $X_i \rightarrow X_j
\leftarrow X_k$.

\subsection{Probabilistic and causal interpretation of DAGs}

We use the notation $X_i \ci X_j | \mathbf{S}$ to indicate that $X_i$ is
independent of $X_j$ given $\mathbf{S}$, where $\mathbf{S}$ is a set of
variables not containing $X_i$ and $X_j$ \citep{Dawid80}. If $\mathbf{S}$ is
the empty set, we simply write $X_i \ci X_j$. If $X_i\ci X_j | \mathbf{S}$,
we refer to $\mathbf{S}$ as a \emph{separating set} for $(X_i,X_j)$. A
separating set $\mathbf{S}$ for $(X_i,X_j)$ is called \emph{minimal} if
there is no proper subset $\mathbf{S'}$ of $\mathbf{S}$ such that $X_i\ci
X_j | \mathbf{S'}$.

A distribution $Q$ is said to \emph{factorize} according to a DAG
$\mathcal{G}=(\mathbf{V},\mathbf{E})$ if the joint density of $\mathbf{V} =
(X_1,\dots,X_p)$ can be written as the product of the conditional densities
of each variable given its parents in $\mathcal G$: $q(X_1,\dots,X_p) =
\prod_{i=1}^{p} q(X_i\vert\text{pa}(\mathcal{G}, X_i))$.

A DAG entails conditional independence relationships via a graphical
criterion called \emph{d-separation} \citep{Pearl00}. If two vertices $X_i$ and
$X_j$ are not adjacent in a DAG $\mathcal{G}$, then they are 
d-separated in $\mathcal G$ by a subset $\mathbf{S}$ of
the remaining vertices. If $X_i$ and $X_j$ are d-separated by $\mathbf{S}$, then $X_i \ci X_j | \mathbf{S}$ in any distribution
$Q$ that factorizes according to $\mathcal{G}$. A distribution $Q$ is said to be
\emph{faithful} to a DAG $\mathcal{G}$ if the reverse implication also
holds, that is, if the conditional independence relationships in $Q$ are
exactly the same as those that can be inferred from $\mathcal{G}$ using
d-separation.

Several DAGs can describe exactly the same conditional independence
information. Such DAGs are called Markov equivalent and form a Markov
equivalence class. Markov equivalent DAGs have the same skeleton and the same
v-structures, and a Markov equivalence class can be described
uniquely by a completed partially directed acyclic graph (CPDAG)
\citep{AndersonEtAll97, Chickering02}. A CPDAG is a partially directed acyclic graph
with the following properties: every directed edge exists in every DAG in
the Markov equivalence class, and for every undirected edge $X_i -
X_j$ there exists a DAG with $X_i \rightarrow X_j$ and a DAG with $X_i
\leftarrow X_j$ in the Markov equivalence class. A CPDAG $\mathcal C$ is said to
\emph{represent} a DAG $\mathcal G$ if $\mathcal G$ belongs to the
Markov equivalence class described by $\mathcal C$.

A DAG can be interpreted causally in the following way
\citep{Pearl00,Pearl09,SpirtesEtAl00}: $X_1$ is a direct cause of $X_2$
only if $X_1 \rightarrow X_2$, and $X_1$ is a possibly indirect cause of
$X_2$ only if there is a directed path from $X_1$ to $X_2$.


\section{The PC-algorithm}\label{sec.original.pc}

We now describe the PC-algorithm in detail. In Section
\ref{sec.oracle.orig.pc}, we discuss the algorithm under the
assumption that we have perfect conditional independence information
between all variables in $\mathbf{V}$. We refer to this as the \emph{oracle
  version}. In Section \ref{sec.original.PC.sample} we discuss the more
realistic situation where conditional independence relationships have to be
estimated from data. We refer to this as the \emph{sample version}.

\subsection{Oracle version}\label{sec.oracle.orig.pc}

A sketch of the PC-algorithm is given in Algorithm \ref{pseudo.pc}. We see that
the algorithm consists of three steps. Step 1 finds the skeleton and
separation sets, while Steps 2 and 3 determine the orientations of the edges.

\begin{algorithm}
\caption{The PC-algorithm (oracle version)}
\label{pseudo.pc}
\begin{algorithmic}[1]
   \REQUIRE Conditional independence information among all variables in
   $\mathbf{V}$, and an ordering $\text{order}(\mathbf{V})$ on the variables
   \STATE Find the skeleton $\mathcal{C}$ and separation sets using Algorithm
   \ref{pseudo.old.pc};
   \STATE Orient unshielded triples in the skeleton $\mathcal{C}$ based on
   the separation sets;
   \STATE In $\mathcal{C}$ orient as many of the remaining undirected edges
   as possible by repeated application of rules R1-R3 (see text);
   \RETURN{Output graph $(\mathcal C)$ and separation sets (sepset)}.
\end{algorithmic}
\end{algorithm}

\begin{algorithm}
\caption{Step 1 of the PC-algorithm (oracle version)}
\label{pseudo.old.pc}
\begin{algorithmic}[1]
   \REQUIRE Conditional independence information among all variables in
   $\mathbf{V}$, and an ordering order$(\mathbf{V})$ on the variables
   \STATE Form the complete undirected graph $\mathcal{C}$ on the vertex set
   $\mathbf{V}$
   \STATE Let $\ell = -1$;
   \REPEAT
   \STATE Let $\ell = \ell + 1$;
   \REPEAT
   \STATE Select a (new) ordered pair of vertices $(X_i,X_j)$ that are
   adjacent in $\mathcal{C}$ and satisfy $|\text{adj}(\mathcal{C},X_i)
   \setminus \{X_j\}| \geq \ell$, using order$(\mathbf{V})$;
   \REPEAT
   \STATE Choose a (new) set $\mathbf{S} \subseteq
   \text{adj}(\mathcal{C},X_i) \setminus \{X_j\}$ with $|\mathbf{S}|=\ell$,
   using order$(\mathbf{V})$;
   \IF{$X_i$ and $X_j$ are conditionally independent given $\mathbf{S}$}
   \STATE Delete edge $X_i - X_j$ from $\mathcal{C}$;
   \STATE Let $\text{sepset}(X_i,X_j) = \text{sepset}(X_j,X_i) = \mathbf{S}$;
   \ENDIF
   \UNTIL{$X_i$ and $X_j$ are no longer adjacent in $\mathcal{C}$ or all
     $\mathbf{S} \subseteq \text{adj}(\mathcal{C},X_i) \setminus \{X_j\}$
     with $|\mathbf{S}|=\ell$ have been considered}
   \UNTIL{all ordered pairs of adjacent vertices $(X_i,X_j)$ in
     $\mathcal{C}$ with $|\text{adj}(\mathcal{C},X_i) \setminus \{X_j\}|
     \geq \ell$ have been considered}
   \UNTIL{all pairs of adjacent vertices $(X_i,X_j)$ in $\mathcal{C}$
     satisfy $|\text{adj}(\mathcal{C},X_i) \setminus \{X_j\}| \leq \ell$}
   \RETURN{$\mathcal{C}$, sepset}.
 \end{algorithmic}
\end{algorithm}

Step 1 is given in pseudo-code in Algorithm \ref{pseudo.old.pc}.
We start with a complete undirected graph $\mathcal{C}$. This graph is
subsequently thinned out in the loop on lines 3-15 in Algorithm
\ref{pseudo.old.pc}, where an edge $X_i - X_j$ is deleted if
 $X_i \ci X_j | \mathbf{S}$ for some subset $\mathbf{S}$ of the
remaining variables. These conditional independence queries are organized
in a way that makes the algorithm computationally efficient for
high-dimensional sparse graphs.

First, when $\ell=0$, all pairs of vertices are tested for marginal
independence. If $X_i \ci X_j$, then the edge  $X_i - X_j$ is
deleted and the empty set is saved as separation set in
$\text{sepset}(X_i,X_j)$ and $\text{sepset}(X_j,X_i)$. After all pairs of
vertices have been considered (and many edges might have been deleted), the
algorithm proceeds to the next step with $\ell=1$.

When $\ell=1$, the algorithm chooses an ordered pair of vertices
$(X_i,X_j)$ still adjacent in $\mathcal C$, and checks $X_i \ci
X_j | \mathbf{S}$ for subsets $\mathbf{S}$ of size $\ell=1$ of
$\text{adj}(\mathcal{C},X_i) \setminus \{X_j\}$. If such a conditional
independence is found, the edge $X_i-X_j$ is removed, and the corresponding
conditioning set $\mathbf{S}$ is saved in $\text{sepset}(X_i,X_j)$ and
$\text{sepset}(X_j,X_i)$. If all ordered pairs of adjacent vertices have been
considered for conditional independence given all subsets of size $\ell$ of
their adjacency sets, the algorithm again increases $\ell$ by one. This
process continues until all adjacency sets in the current graph are smaller
than $\ell$. At this point the skeleton and the separation sets have been
determined.

Step 2 determines the v-structures. In particular, it
considers all unshielded triples in $\mathcal C$, and orients an unshielded
triple $(X_i,X_j,X_k)$ as a v-structure if and only if $X_j \notin
\text{sepset}(X_i,X_k)$.

Finally, Step 3 orients as many of the
remaining undirected edges as possible by repeated application of the
following three rules:
\begin{enumerate}
  \item[R1:] orient $X_j - X_k$ into $X_j \rightarrow X_k$ whenever there is
    a directed edge $X_i \rightarrow X_j$ such that $X_i$ and $X_k$ are
    not adjacent (otherwise a new v-structure is created);
  \item[R2:] orient $X_i - X_j$ into $X_i \rightarrow X_j$ whenever there is
    a chain $X_i \rightarrow X_k \rightarrow X_j$ (otherwise a directed
    cycle is created);
  \item[R3:] orient $X_i - X_j$ into $X_i \rightarrow X_j$ whenever there
    are two chains $X_i - X_k \rightarrow X_j$ and $X_i - X_l \rightarrow
    X_j$ such that $X_k$ and $X_l$ are not adjacent (otherwise a new
    v-structure or a directed cycle is created).
\end{enumerate}

The PC-algorithm was shown to be sound and complete.
\begin{theorem}\label{th.correct.pc.original}
   (Theorem 5.1 on p.410 of \cite{SpirtesEtAl00}) Let the distribution of
   $\mathbf{V}$ be faithful to a DAG $\mathcal G=(\mathbf{V},\mathbf{E})$,
   and assume that we are given perfect conditional independence
   information about all pairs of variables $(X_i,X_j)$ in $\mathbf{V}$
   given subsets $\mathbf{S} \subseteq \mathbf{V} \setminus
   \{X_i,X_j\}$. Then the output of the PC-algorithm is the CPDAG that
   represents $\mathcal G$.
\end{theorem}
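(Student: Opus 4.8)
The plan is to prove correctness of the three steps of Algorithm~\ref{pseudo.pc} in turn: Step~1 recovers the skeleton of $\mathcal{G}$ together with valid separating sets, Step~2 recovers exactly the v-structures of $\mathcal{G}$, and Step~3 orients precisely those remaining edges that are compelled throughout the Markov equivalence class. Throughout, faithfulness is what lets me pass freely between d-separation in $\mathcal{G}$ and conditional independence in the distribution. Two standard facts about DAGs do most of the work: any two non-adjacent vertices $X_i$ and $X_j$ are d-separated by $\text{pa}(\mathcal{G},X_i)$ or by $\text{pa}(\mathcal{G},X_j)$; and on an unshielded triple, the middle vertex lies in a d-separating set for the two endpoints if and only if it is a non-collider.

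First I would show that Step~1 returns the skeleton of $\mathcal{G}$. The key invariant is that this skeleton is always a subgraph of the working graph $\mathcal{C}$: if $X_i$ and $X_j$ are adjacent in $\mathcal{G}$, then by faithfulness no $\mathbf{S}$ gives $X_i \ci X_j \mid \mathbf{S}$, so the edge is never deleted, and hence $\text{adj}(\mathcal{G},X_i) \subseteq \text{adj}(\mathcal{C},X_i)$ at every stage. Conversely, if $X_i$ and $X_j$ are non-adjacent in $\mathcal{G}$, they are d-separated by $\text{pa}(\mathcal{G},X_i)$ or $\text{pa}(\mathcal{G},X_j)$; because these parent sets stay inside the corresponding adjacency sets in $\mathcal{C}$, and the outer loop keeps increasing $\ell$ and testing every subset of size $\ell$ of each adjacency set until no adjacency set is that large, the pair is examined together with a separating set among its candidate sets before termination. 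Thus the edge is deleted and a valid $\text{sepset}(X_i,X_j)$ is stored. Hence the output skeleton equals that of $\mathcal{G}$, and each stored separating set genuinely d-separates its pair.

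Next I would check Step~2. Let $(X_i,X_j,X_k)$ be an unshielded triple in the recovered skeleton, so $X_i$ and $X_k$ are non-adjacent and $\text{sepset}(X_i,X_k)$ d-separates them. On the path $X_i - X_j - X_k$, if the triple is a v-structure then $X_j$ is a collider, so blocking this path forces $X_j \notin \text{sepset}(X_i,X_k)$; if it is not a v-structure then $X_j$ is a non-collider on this length-two path and must be conditioned on, so $X_j \in \text{sepset}(X_i,X_k)$. Hence the criterion ``$X_j \notin \text{sepset}(X_i,X_k)$'' orients exactly the true v-structures. Note that the conclusion depends only on whether $X_j$ belongs to \emph{some} valid separating set and not on which one the algorithm selected, so this step is already order-independent in the oracle version. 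After Step~2, $\mathcal{C}$ carries the correct skeleton and the correct v-structures, which by the characterization of Markov equivalence recorded in Section~\ref{sec.graph.def} determines the equivalence class.

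Finally I would treat Step~3. Soundness is straightforward: each of R1--R3 performs the unique orientation that avoids either introducing an unshielded collider not already present or creating a directed cycle, and since no DAG in the class may have an extra v-structure or a cycle, every such orientation is compelled. The main obstacle is \emph{completeness}---showing that when R1--R3 can no longer fire, every edge still undirected is genuinely reversible, meaning the class contains DAGs orienting it each way. Here I would appeal to the characterization of CPDAGs by \citet{AndersonEtAll97} (see also \citet{Chickering02}), according to which, starting from a skeleton with its v-structures oriented, exhaustive application of R1--R3 produces exactly the CPDAG; the delicate part of that argument is constructing, for each remaining undirected edge, two members of the equivalence class that disagree on its direction. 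Combining the three steps shows that the output of the PC-algorithm is the CPDAG representing $\mathcal{G}$, as claimed.
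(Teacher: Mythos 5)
Your proposal is correct and follows essentially the same route as the paper's own proof sketch: the skeleton via the invariant that $\mathcal{C}$ remains a supergraph of the true skeleton together with the fact that non-adjacent vertices are d-separated by $\text{pa}(\mathcal{G},X_i)$ or $\text{pa}(\mathcal{G},X_j)$, the v-structures via the collider/non-collider characterization of separating-set membership (Lemmas 5.1.2--5.1.3 of \cite{SpirtesEtAl00}), and Step~3 by appeal to the soundness and completeness results of \cite{Meek95} and \cite{AndersonEtAll97}. No gaps; your treatment of Steps~1 and~2 is in fact slightly more detailed than the paper's.
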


We briefly discuss the main ingredients of the proof, as these will be
useful for understanding our modifications in Section
\ref{sec.modified.pc}. The faithfulness assumption implies that conditional
independence in the distribution of $\mathbf{V}$ is equivalent to
d-separation in the graph $\mathcal G$. The skeleton of $\mathcal G$ can
then be determined as follows: $X_i$ and $X_j$ are adjacent in $\mathcal G$
if and only if they are conditionally dependent given any subset
$\mathbf{S}$ of the remaining nodes. Naively, one could therefore check all
these conditional dependencies, which is known as the SGS algorithm
\citep{SpirtesEtAl00}. The PC-algorithm obtains the same result with fewer
tests, by using the following fact about DAGs: two variables $X_i$ and
$X_j$ in a DAG $\mathcal G$ are d-separated by some subset $\mathbf{S}$ of
the remaining variables if and only if they are d-separated by
$\text{pa}(\mathcal{G}, X_i)$ or $\text{pa}(\mathcal{G}, X_j)$. The PC-algorithm is guaranteed to check these conditional independencies: at all stages of the algorithm, the graph $\mathcal C$ is a supergraph of the true
CPDAG, and the algorithm checks conditional dependencies given all subsets of the
adjacency sets, which obviously include the parent sets.

The v-structures are determined based on Lemmas 5.1.2 and 5.1.3 of
\cite{SpirtesEtAl00}. The soundness and completeness of the orientation
rules in Step 3 was shown in \cite{Meek95} and \cite{AndersonEtAll97}.

\subsection{Sample version}\label{sec.original.PC.sample}

In applications, we of course do not have perfect conditional independence
information. Instead, we assume that we have an i.i.d.\ sample of size $n$
of $\mathbf{V} = (X_1,\dots,X_p)$. A sample version of the PC-algorithm can then
be obtained by replacing all steps
where conditional independence decisions were taken by statistical
tests for conditional independence at some pre-specified level $\alpha$.
For example, if the distribution of $\mathbf{V}$ is multivariate Gaussian,
one can test for zero partial
correlation, see, e.g., \cite{KalischBuehlmann07a}.
%
%
We note that the significance level $\alpha$ is used for many tests, and
plays the role of a tuning parameter, where smaller values of $\alpha$ tend
to lead to sparser graphs.

\subsection{Order-dependence in the sample version}\label{sec.order-dep.sample.old}

Let order($\mathbf{V}$) denote an ordering on the variables in $\mathbf{V}$. We now
consider the role of order($\mathbf{V}$) in every step of the
algorithm. Throughout, we assume that all tasks are performed according to the lexicographical ordering of
order$(\mathbf{V})$, which is the standard implementation in \texttt{pcalg}
\citep{KalischEtAl12} and TETRAD IV \citep{SpirtesEtAl00}, and is
called ``PC-1'' in \cite{SpirtesEtAl00} (Section 5.4.2.4).

In Step 1, order($\mathbf{V}$) affects the estimation of the \emph{skeleton} and the
\emph{separating sets}. In particular, at each level of $\ell$, order($\mathbf{V}$)
determines the order in which pairs of adjacent vertices and subsets $\mathbf{S}$ of their adjacency sets are considered (see lines 6 and 8 in Algorithm \ref{pseudo.old.pc}).
The skeleton $\mathcal C$ is updated after each edge removal.
Hence, the adjacency sets typically change within one level of
$\ell$, and this affects which other conditional independencies are checked,
since the algorithm only conditions on subsets of the adjacency sets.
In the oracle version, we have perfect conditional independence
information, and all orderings on the variables lead to the same output. In
the sample version, however, we typically make mistakes in keeping or
removing edges. In such cases, the resulting changes in the adjacency sets
can lead to different skeletons, as illustrated in Example
\ref{ex.orderdep.old}.

Moreover, different variable orderings can lead to different separating sets in Step 1. In the
oracle version, this is not important, because any valid separating set leads to the correct v-structure decision
in Step 2. In the sample version, however,
different separating sets in Step 1 of the algorithm may yield different decisions about v-structures in Step 2.  This is illustrated in Example \ref{ex.sepset}.

Finally, we consider the role of order($\mathbf{V}$) on the
\emph{orientation rules} in Steps 2 and 3 of the sample version of the PC-algorithm. Example \ref{ex.orientation}
illustrates that different variable orderings can lead to different orientations, even if the skeleton and separating sets are order-independent.

\noindent
\begin{example}\label{ex.orderdep.old}
  (Order-dependent skeleton in the sample PC-algorithm.)
  Suppose that the distribution
  of $\mathbf{V}=\{X_1,X_2,X_3,X_4, X_5\}$ is faithful to the DAG in Figure
  \ref{fig.true.dag}. This DAG encodes the following conditional
  independencies with minimal separating sets: $X_1 \ci X_2$ and $X_2 \ci
  X_4| \{X_1,X_3\}$ .

  Suppose that we have an i.i.d.\ sample of $(X_1,X_2,X_3,X_4,
  X_5)$, and that the following conditional independencies with minimal
  separating sets are judged to hold at some level $\alpha$: $X_1 \ci X_2$,
  $X_2 \ci X_4 |\{X_1,X_3\}$, and $X_3\ci X_4|\{X_1,X_5\}$. Thus, the first
  two are correct, while the third is false.

  We now apply the PC-algorithm with two different orderings: $\text{order}_1(\mathbf{V}) =
  (X_1,X_4,X_2, \linebreak X_3,X_5)$ and $\text{order}_2(\mathbf{V}) =
  (X_1,X_3,X_4,X_2,X_5)$. The resulting skeletons are shown in Figures
  \ref{pc.old.output1} and \ref{pc.old.output2}, respectively. We see that
  the skeletons are different, and that both are incorrect as the edge $X_3
  - X_4$ is missing. The skeleton for $\text{order}_2(\mathbf{V})$ contains an
  additional error, as there is an additional edge $X_2 - X_4$.

  We now go through Algorithm \ref{pseudo.old.pc} to see what
  happened. We start with a complete undirected graph on $\mathbf{V}$. When
  $\ell=0$, variables are tested for marginal independence, and the
  algorithm correctly removes the edge between $X_1$ and $X_2$. No other
  conditional independencies are found when $\ell=0$ or $\ell=1$. When
  $\ell=2$, there are two pairs of vertices that are thought to be conditionally
  independent given a subset of size 2, namely the pairs $(X_2,X_4)$ and
  $(X_3,X_4)$.

  In $\text{order}_1(\mathbf{V})$, the pair $(X_4,X_2)$ is considered
  first. The corresponding edge is removed, as $X_4\ci X_2|\{X_1,X_3\}$ and
  $\{X_1,X_3\}$ is a subset of $\text{adj}(\mathcal{C},
  X_4)=\{X_1,X_2,X_3,X_5\}$. Next, the pair $(X_4, X_3)$ is considered and the corresponding edges is
  erroneously removed, because of the wrong decision that $X_4\ci X_3
  |\{X_1,X_5\}$ and the fact that $\{X_1,X_5\}$ is a subset of
  $\text{adj}(\mathcal{C},X_4)=\{X_1,X_3,X_5\}$.

  In $\text{order}_2(\mathbf{V})$, the pair $(X_3,X_4)$ is considered
  first, and the corresponding edge is erroneously removed. Next,
  the algorithm considers the pair $(X_4,X_2)$. The corresponding separating set $\{X_1,X_3\}$ is not a subset
  of $\text{adj}(\mathcal{C},X_4)= \{X_1,X_2,X_5\}$, so that the edge $X_2-X_4$ remains. Next, the algorithm considers the pair $(X_2,X_4)$. Again, the separating set
  $\{X_1,X_3\}$ is not a subset of  $\text{adj}(\mathcal{C},X_2)=\{X_3, X_4, X_5\}$, so that the edge
  $X_2-X_4$ again remains. In other words, since
  $(X_3,X_4)$ was considered first in $\text{order}_2(\mathbf{V})$, the
  adjacency set of $X_4$ was affected and no longer contained $X_3$, so
  that the algorithm ``forgot" to check the conditional independence $X_2
  \ci X_4|\{X_1,X_3\}$.

  \begin{figure}[!t]\centering%
  \subfigure[True DAG.]{
      \begin{tikzpicture}
        \tikzstyle{every circle node}=
        [%
        draw=black!20!black,%
        minimum size=7mm,%
        circle,%
        thick%
        ]
        \tikzstyle{every rectangle node}=
        [%
        draw=black!20!black,%
        minimum size=6mm,%
        rectangle,%
        thick%
        ]

        \node[circle] (A) at (-1.4, 0) {$X_{1}$};
        \node[circle] (B) at (1.4, 0) {$X_{2}$};
        \node[circle] (C) at (0, 1.5) {$X_{3}$};
        \node[circle] (D) at (0, -1.5) {$X_{4}$};
        \node[circle] (E) at (0, 0) {$X_{5}$};

        \path [thick,shorten >=1pt,-stealth']
        (A) edge[->] (D)
        (A) edge[->] (E)
        (A) edge[->] (C)
        (B) edge[->] (E)
        (B) edge[->, bend right=10] (C)
        (D) edge[->] (E)
        (E) edge[<-] (C)
        (C) edge[->, bend left=40] (D);

      \end{tikzpicture}\label{fig.true.dag}
    }\quad%
    \subfigure[Skeleton returned by the oracle version of Algorithm \ref{pseudo.old.pc} with any ordering, and by the sample version of Algorithm
    \ref{pseudo.old.pc} with
    $\text{order}_1(\mathbf{V})$.]{
      \begin{tikzpicture}
        \tikzstyle{every circle node}=
        [%
        draw=black!20!black,%
        minimum size=7mm,%
        circle,%
        thick%
        ]
        \tikzstyle{every rectangle node}=
        [%
        draw=black!20!black,%
        minimum size=7mm,%
        rectangle,%
        thick%
        ]

        \node[circle] (A) at (-1.4, 0) {$X_{1}$};
        \node[circle] (B) at (1.4, 0) {$X_{2}$};
        \node[circle] (C) at (0, 1.5) {$X_{3}$};
        \node[circle] (D) at (0, -1.5) {$X_{4}$};
        \node[circle] (E) at (0, 0) {$X_{5}$};

        \path [thick,shorten >=1pt,-stealth']
        (A) edge[-] (D)
        (A) edge[-] (E)
        (A) edge[-] (C)
        (B) edge[-] (E)
        (B) edge[-] (C)
        (D) edge[-] (E)
        (E) edge[-] (C);

      \end{tikzpicture}\label{pc.old.output1}
    }\quad%
    \subfigure[Skeleton returned by the sample version of Algorithm
    \ref{pseudo.old.pc} with
    $\text{order}_2(\mathbf{V})$.]{
          \begin{tikzpicture}
        \tikzstyle{every circle node}=
        [%
        draw=black!20!black,%
        minimum size=7mm,%
        circle,%
        thick%
        ]
        \tikzstyle{every rectangle node}=
        [%
        draw=black!20!black,%
        minimum size=7mm,%
        rectangle,%
        thick%
        ]

        \node[circle] (A) at (-1.4, 0) {$X_{1}$};
        \node[circle] (B) at (1.4, 0) {$X_{2}$};
        \node[circle] (C) at (0, 1.5) {$X_{3}$};
        \node[circle] (D) at (0, -1.5) {$X_{4}$};
        \node[circle] (E) at (0, 0) {$X_{5}$};

        \path [thick,shorten >=1pt,-stealth']
        (A) edge[-] (D)
        (A) edge[-] (E)
        (A) edge[-] (C)
        (B) edge[-] (E)
        (B) edge[-] (C)
        (D) edge[-] (E)
        (E) edge[-] (C)
        (B) edge[-] (D);

      \end{tikzpicture}\label{pc.old.output2}
    }
    \caption{Graphs corresponding to Examples \ref{ex.orderdep.old} and
      \ref{ex.orderindep.new}.}
    \label{ex1.graphs.old}
  \end{figure}
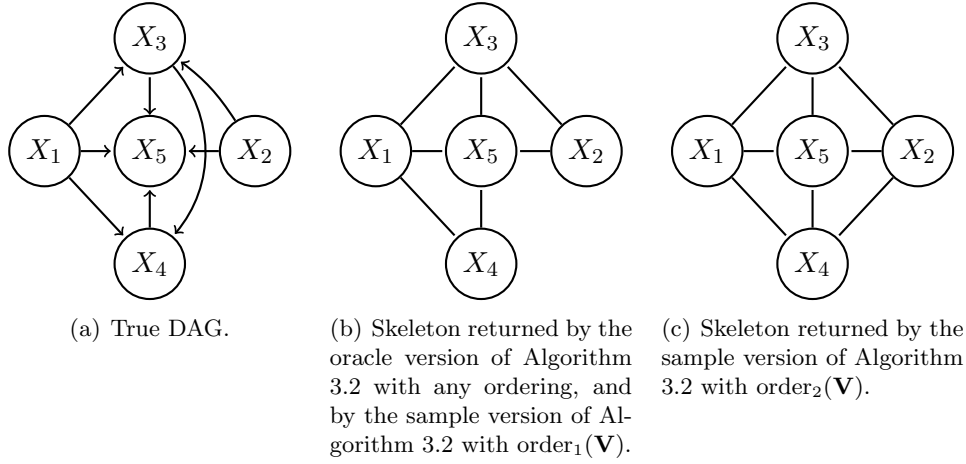
\end{example}


\begin{example}\label{ex.sepset}
  (Order-dependent separating sets and v-structures in the sample
  PC-algorithm.) Suppose that the distribution of
  $\mathbf{V}=\{X_1,X_2,X_3, X_4,X_5\}$ is faithful to the DAG in Figure
  \ref{ex.true.dag}. This DAG encodes the following conditional
  independencies with minimal separating sets: $X_1 \ci X_3| \{X_2\}$, $X_1
  \ci X_4| \{X_2\}$, $X_1 \ci X_4 | \{X_3\}$, $X_2 \ci
  X_4| \{X_3\}$, $X_2\ci X_5 | \{X_1,X_3\}$, $X_2\ci X_5 | \{X_1,X_4\}$, $X_3 \ci X_5 | \{X_1,X_4\}$ and $X_3 \ci X_5 | \{X_2,X_4\}$.

  We consider the oracle PC-algorithm with two different orderings on
  the variables: $\text{order}_3(\mathbf{V})=(X_1,X_4,X_2,X_3,X_5)$ and
  $\text{order}_4(\mathbf{V})=(X_1,X_4,X_3,X_2,X_5)$. For
  $\text{order}_3(\mathbf{V})$, we obtain sepset$(X_1,X_4) = \{X_2\}$,
  while for $\text{order}_4(\mathbf{V})$ we get
  sepset$(X_1,X_4)= \{X_3\}$. Thus, the separating sets
   are order-dependent. However, we obtain the same v-structure $X_1 \to
   X_5 \leftarrow X_4$ for both orderings, since $X_5$ is not in the
   sepset$(X_1,X_4)$, regardless of the ordering. In fact, this holds in
   general, since in the oracle version of the PC-algorithm, a vertex is
   either in all possible separating sets or in none of them (cf. \cite[Lemma 5.1.3]{SpirtesEtAl00}).

  Now suppose that we have an i.i.d.\ sample of $(X_1,X_2,X_3,X_4,X_5)$.
  Suppose that at some level $\alpha$, all true conditional independencies
  are judged to hold, and $X_1\ci X_3|\{X_4\}$ is thought to hold by mistake. We again consider two
  different orderings: $\text{order}_5(\mathbf{V}) =
  (X_1,X_3,X_4,X_2,X_5)$ and $\text{order}_6(\mathbf{V}) =
  (X_3,X_1,X_2,X_4,X_5)$. With $\text{order}_5(\mathbf{V})$ we obtain the
  incorrect $\text{sepset}(X_1,X_3)=\{X_4\}$. This also leads to an
  incorrect v-structure $X_1 \to X_2\leftarrow X_3$ in Step 2 of the
  algorithm. With $\text{order}_6(\mathbf{V})$, we obtain the correct
  $\text{sepset}(X_1,X_3)=\{X_2\}$, and hence correctly find that $X_1 -
  X_2 -X_3$ is not a v-structure in Step 2. This illustrates that
  order-dependent separating sets in Step 1 of the sample version of the
  PC-algorithm can lead to order-dependent v-structures in Step 2 of the
  algorithm.

  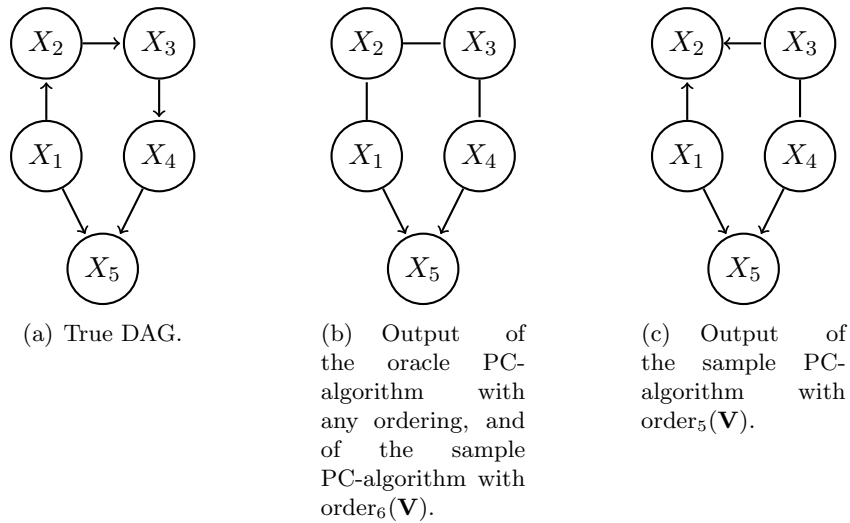
\begin{figure}[!h]\centering%
   \subfigure[True DAG.]{
     \begin{tikzpicture}
       \tikzstyle{every circle node}=
       [%
       draw=black!20!black,%
       minimum size=7mm,%
       circle,%
       thick%
       ]
       \tikzstyle{every rectangle node}=
       [%
       draw=black!20!black,%
       minimum size=6mm,%
       rectangle,%
       thick%
       ]

       \node[circle] (A) at (0, 0) {$X_{1}$};
       \node[circle] (B) at (0, 1.5) {$X_{2}$};
       \node[circle] (C) at (1.5, 1.5) {$X_{3}$};
       \node[circle] (D) at (1.5, 0) {$X_{4}$};
       \node[circle] (E) at (0.75,-1.5) {$X_{5}$};

       \path [thick,shorten >=1pt,-stealth']
       (A) edge[->] (B)
       (A) edge[->] (E)
       (B) edge[->] (C)
       (C) edge[->] (D)
       (D) edge[->] (E);

     \end{tikzpicture}\label{ex.true.dag}
   }\qquad \qquad %
   \subfigure[Output of the oracle PC-algorithm with any
   ordering, and of the sample PC-algorithm with $\text{order}_6(\mathbf{V})$.]{
     \begin{tikzpicture}
       \tikzstyle{every circle node}=
       [%
       draw=black!20!black,%
       minimum size=7mm,%
       circle,%
       thick%
       ]
       \tikzstyle{every rectangle node}=
       [%
       draw=black!20!black,%
       minimum size=7mm,%
       rectangle,%
       thick%
       ]

       \node[circle] (A) at (0, 0) {$X_{1}$};
       \node[circle] (B) at (0, 1.5) {$X_{2}$};
       \node[circle] (C) at (1.5, 1.5) {$X_{3}$};
       \node[circle] (D) at (1.5, 0) {$X_{4}$};
       \node[circle] (E) at (0.75, -1.5) {$X_{5}$};

       \path [thick,shorten >=1pt,-stealth']
       (A) edge[-] (B)
       (A) edge[->] (E)
       (B) edge[-] (C)
       (C) edge[-] (D)
       (D) edge[->](E);

     \end{tikzpicture}\label{ex.true.skelet}
   }\qquad \qquad %
    \subfigure[Output of the sample PC-algorithm with
    $\text{order}_5(\mathbf{V})$.]{
     \begin{tikzpicture}
       \tikzstyle{every circle node}=
       [%
       draw=black!20!black,%
       minimum size=7mm,%
       circle,%
       thick%
       ]
       \tikzstyle{every rectangle node}=
       [%
       draw=black!20!black,%
       minimum size=7mm,%
       rectangle,%
       thick%
       ]

       \node[circle] (A) at (0, 0) {$X_{1}$};
       \node[circle] (B) at (0, 1.5) {$X_{2}$};
       \node[circle] (C) at (1.5, 1.5) {$X_{3}$};
       \node[circle] (D) at (1.5, 0) {$X_{4}$};
       \node[circle] (E) at (0.75, -1.5) {$X_{5}$};

       \path [thick,shorten >=1pt,-stealth']
       (A) edge[->] (B)
       (A) edge[->] (E)
       (B) edge[<-] (C)
       (C) edge[-] (D)
       (D) edge[->] (E);

     \end{tikzpicture}\label{ex.wrong.output}
   }
   \caption{Graphs corresponding to Examples \ref{ex.sepset} and \ref{ex.sepset.new}.}
   \label{ex2.graphs}
 \end{figure}
\end{example}

\begin{example}\label{ex.orientation}
  (Order-dependent orientation rules in Steps 2 and 3 of the sample
  PC-algorithm.) Consider the graph in Figure \ref{ex.vstruct} with two
  unshielded triples $(X_1,X_2,X_3)$ and $(X_2,X_3,X_4)$, and assume this
  is the skeleton after Step 1 of the sample version of the
  PC-algorithm. Moreover, assume that we found $\text{sepset}(X_1,X_3) =
  \text{sepset}(X_2,X_4)=\text{sepset}(X_1,X_4)=\emptyset$. Then in Step 2
  of the algorithm, we obtain two v-structures $X_1 \to X_2 \leftarrow X_3$
  and $X_2 \to X_3 \leftarrow X_4$. Of course this means that at least one
  of the statistical tests is wrong, but this can happen in the sample
  version. We now have conflicting information about the orientation of the
  edge $X_2 - X_3$. In the current implementation of \texttt{pcalg}, where
  conflicting edges are simply overwritten, this means that the orientation
  of $X_2 - X_3$ is determined by the v-structure that is last
  considered. Thus, we obtain $X_1 \to X_2 \to X_3 \leftarrow X_4$ if
  $(X_2,X_3,X_4)$ is considered last, while we get $X_1 \to X_2 \leftarrow
  X_3 \leftarrow X_4$ if $(X_1,X_2,X_3)$ is considered last.

  Next, consider the graph in Figure \ref{ex.R1}, and assume that this is
  the output of the sample version of the PC-algorithm after Step 2. Thus,
  we have two v-structures, namely $X_1 \rightarrow X_2 \leftarrow X_3$ and
  $X_4 \rightarrow X_5 \leftarrow X_6$, and four unshielded triples, namely
  $(X_1,X_2,X_5)$, $(X_3,X_2,X_5)$, $(X_4,X_5,X_2)$, and $(X_6,X_5,X_2)$. Thus, we
  then apply the orientation rules in Step 3 of the algorithm, starting
  with rule R1. If one of the two unshielded triples $(X_1,X_2,X_5)$ or
  $(X_3,X_2,X_5)$ is considered first, we obtain $X_2 \rightarrow X_5$. On
  the other hand, if one of the unshielded triples $(X_4,X_5,X_2)$ or
  $(X_6,X_5,X_2)$ is considered first, then we obtain $X_2 \leftarrow
  X_5$. Note that we have no issues with overwriting of edges here, since as
  soon as the edge $X_2-X_5$ is oriented, all edges are oriented and no
  further orientation rules are applied.

  These examples illustrate that Steps 2 and 3 of the PC-algorithm can be
  order-dependent regardless of the output of the previous steps.
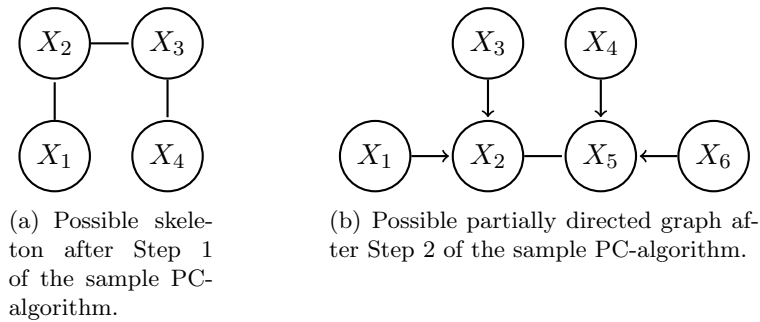
\begin{figure}[!h]\centering%
    \subfigure[Possible skeleton after Step 1 of the sample PC-algorithm.]{
      \begin{tikzpicture}
        \tikzstyle{every circle node}=
        [%
        draw=black!20!black,%
        minimum size=7mm,%
        circle,%
        thick%
        ]
        \tikzstyle{every rectangle node}=
        [%
        draw=black!20!black,%
        minimum size=6mm,%
        rectangle,%
        thick%
        ]

        \node[circle] (A) at (0, 0) {$X_{1}$};
        \node[circle] (B) at (0, 1.5) {$X_{2}$};
        \node[circle] (C) at (1.5, 1.5) {$X_{3}$};
        \node[circle] (D) at (1.5, 0) {$X_{4}$};

        \path [thick,shorten >=1pt,-stealth']
        (A) edge[-] (B)
        (B) edge[-] (C)
        (C) edge[-] (D);

      \end{tikzpicture}\label{ex.vstruct}
    }\qquad \qquad %
    \subfigure[Possible partially directed graph after Step 2 of the sample PC-algorithm.]{
      \begin{tikzpicture}
        \tikzstyle{every circle node}=
        [%
        draw=black!20!black,%
        minimum size=7mm,%
        circle,%
        thick%
        ]
        \tikzstyle{every rectangle node}=
        [%
        draw=black!20!black,%
        minimum size=7mm,%
        rectangle,%
        thick%
        ]

        \node[circle] (A) at (0, 0) {$X_{2}$};
        \node[circle] (B) at (0, 1.5) {$X_{3}$};
        \node[circle] (C) at (1.5, 1.5) {$X_{4}$};
        \node[circle] (D) at (1.5, 0) {$X_{5}$};
        \node[circle] (E) at (-1.5, 0) {$X_{1}$};
        \node[circle] (F) at (3, 0) {$X_{6}$};

        \path [thick,shorten >=1pt,-stealth']
        (A) edge[-] (D)
        (E) edge[->] (A)
        (B) edge[->] (A)
        (C) edge[->] (D)
        (F) edge[->] (D);

      \end{tikzpicture}\label{ex.R1}
    }
    \caption{Graphs corresponding to Examples \ref{ex.orientation} and
      \ref{ex.orientation.new}.
    }
    \label{ex3.graphs}
  \end{figure}
\end{example}

\section{Modified algorithms}\label{sec.modified.pc}

We now propose several modifications of the PC-algorithm (and hence also of
the related algorithms) that remove the order-dependence in the various
stages of the
algorithm. Sections \ref{sec.new.skeleton}, \ref{sec.new.vstruct},
and \ref{sec.new.orientations} discuss the skeleton,
the v-structures and the orientation rules, respectively. In each of these
sections, we first describe the oracle version of the modifications, and
then results and examples about order-dependence in the corresponding
sample version (obtained by replacing conditional independence queries by
conditional independence tests, as in
Section \ref{sec.order-dep.sample.old}). Finally, Section
\ref{sec.new.other.algos} discusses order-independent versions of related
algorithms like RFCI and FCI, and Section \ref{sec.consistency} presents
high-dimensional consistency results for the sample versions of all
modifications.

\subsection{The skeleton}\label{sec.new.skeleton}

We first consider estimation of the skeleton in Step 1 of the
PC-algorithm. The pseudocode for our modification is given in Algorithm
\ref{pseudo.new.pc}. The resulting PC-algorithm, where Step 1 in Algorithm
\ref{pseudo.pc} is replaced by Algorithm \ref{pseudo.new.pc}, is called
``PC-stable".

The main difference between Algorithms \ref{pseudo.old.pc} and
\ref{pseudo.new.pc} is given by the for-loop on lines 6-8 in the latter
one, which computes and stores the adjacency sets $a(X_i)$ of all
variables after each new size $\ell$ of the conditioning sets. These stored
adjacency sets $a(X_i)$ are used whenever we search for conditioning
sets of this given size $\ell$. Consequently, an edge deletion on line 13
no longer affects which conditional independencies are checked for other
pairs of variables at this level of $\ell$.

In other words, at each level of $\ell$, Algorithm \ref{pseudo.new.pc}
records which edges should be removed, but for the purpose of the adjacency
sets it removes these edges only when it goes to the next value of
$\ell$.
Besides resolving the order-dependence in the estimation of the skeleton,
our algorithm has the advantage that it is easily parallelizable at each
level of $\ell$.

\begin{algorithm}[h]
\caption{Step 1 of the PC-stable algorithm (oracle version)}
\label{pseudo.new.pc}
\begin{algorithmic}[1]
   \REQUIRE Conditional independence information among all variables in
   $\mathbf{V}$, and an ordering order$(\mathbf{V})$ on the variables
   \STATE Form the complete undirected graph $\mathcal{C}$ on the vertex set
   $\mathbf{V}$
   \STATE Let $\ell = -1$;
   \REPEAT
   \STATE Let $\ell = \ell + 1$;
   \FORALL{vertices $X_i$ in $\mathcal{C}$}
   \STATE Let $a(X_i) = \text{adj}(\mathcal{C},X_i)$
   \ENDFOR
   \REPEAT
   \STATE Select a (new) ordered pair of vertices $(X_i,X_j)$ that are
   adjacent in $\mathcal{C}$ and satisfy $|a(X_i) \setminus \{X_j\}|
   \geq \ell$, using order$(\mathbf{V})$;
   \REPEAT
   \STATE Choose a (new) set $\mathbf{S} \subseteq a(X_i) \setminus
   \{X_j\}$ with $|\mathbf{S}|=\ell$, using order$(\mathbf{V})$;
   \IF{$X_i$ and $X_j$ are conditionally independent given $\mathbf{S}$}
   \STATE Delete edge $X_i - X_j$ from $\mathcal{C}$;
   \STATE Let $\text{sepset}(X_i,X_j) = \text{sepset}(X_j,X_i) = \mathbf{S}$;
   \ENDIF
   \UNTIL{$X_i$ and $X_j$ are no longer adjacent in $\mathcal{C}$ or all
     $\mathbf{S} \subseteq a(X_i) \setminus \{X_j\}$
     with $|\mathbf{S}|=\ell$ have been considered}
   \UNTIL{all ordered pairs of adjacent vertices $(X_i,X_j)$ in
     $\mathcal{C}$ with $|a(X_i) \setminus \{X_j\}|
     \geq \ell$ have been considered}
   \UNTIL{all pairs of adjacent vertices $(X_i,X_j)$ in $\mathcal{C}$
     satisfy $|a(X_i) \setminus \{X_j\}| \leq \ell$}
   \RETURN{$\mathcal{C}$, sepset}.
 \end{algorithmic}
\end{algorithm}

The PC-stable algorithm is sound and complete in the oracle version (Theorem
\ref{theo.corr.pc}), and yields order-independent skeletons in the sample
version (Theorem \ref{theo.indep}).
We illustrate the algorithm in Example \ref{ex.orderindep.new}.

\begin{theorem}
\label{theo.corr.pc}
  Let the distribution of $\mathbf{V}$ be faithful to a DAG $\mathcal
  G=(\mathbf{V},\mathbf{E})$, and assume that we are given perfect
  conditional independence information about all pairs of variables
  $(X_i,X_j)$ in $\mathbf{V}$ given subsets $\mathbf{S} \subseteq
  \mathbf{V} \setminus \{X_i,X_j\}$. Then the output of the PC-stable
  algorithm is the CPDAG that represents $\mathcal G$.
\end{theorem}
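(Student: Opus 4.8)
The plan is to reduce the claim to the correctness of the original PC-algorithm (Theorem~\ref{th.correct.pc.original}). Since PC-stable differs from the original algorithm only in Step~1 (Algorithm~\ref{pseudo.new.pc} replacing Algorithm~\ref{pseudo.old.pc}), while Steps~2 and~3 are verbatim the same, it suffices to show that in the oracle version the modified Step~1 returns the true skeleton together with separating sets that produce the same v-structure decisions as before. Throughout I use faithfulness to identify conditional independence with d-separation in $\mathcal{G}$.

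First I would establish the ``supergraph property'': at every stage of Algorithm~\ref{pseudo.new.pc}, the current graph $\mathcal{C}$ contains the true skeleton. An edge $X_i - X_j$ is deleted on line~13 only when some conditioning set d-separates $X_i$ and $X_j$; but adjacent vertices in $\mathcal{G}$ are never d-separated, so no true edge is ever removed. In particular, each frozen adjacency set obeys $a(X_i) \supseteq \text{adj}(\mathcal{G}, X_i) \supseteq \text{pa}(\mathcal{G}, X_i)$ at every level $\ell$. For completeness of edge removal, take any pair $(X_i,X_j)$ non-adjacent in $\mathcal{G}$. By the DAG fact quoted after Theorem~\ref{th.correct.pc.original}, $X_i \ci X_j \mid \text{pa}(\mathcal{G},X_i)$ (or via $X_j$). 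Because $X_j \notin \text{pa}(\mathcal{G},X_i)$ and $\text{pa}(\mathcal{G},X_i) \subseteq a(X_i)$, this parent set is one of the candidate conditioning sets of size $\ell = |\text{pa}(\mathcal{G},X_i)|$ considered for the ordered pair $(X_i,X_j)$; hence the edge is removed at the latest at that level, if not earlier. Combined with the supergraph property, the returned skeleton is exactly that of $\mathcal{G}$. Note that freezing the adjacency sets only makes the candidate conditioning sets larger than those used dynamically, so it can never cause a needed separating set to be skipped.

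Finally I would handle the orientations. The recorded separating sets can genuinely differ from those of the original algorithm, as they depend on $\text{order}(\mathbf{V})$, but every recorded $\text{sepset}(X_i,X_j)$ d-separates the pair. By \cite[Lemma 5.1.3]{SpirtesEtAl00}, for an unshielded triple $(X_i,X_k,X_j)$ the middle vertex $X_k$ belongs either to all sets that d-separate $X_i$ and $X_j$ or to none of them; hence the Step~2 rule (``orient as a v-structure iff $X_k \notin \text{sepset}(X_i,X_j)$'') returns the same answer for any valid separating set, so PC-stable recovers precisely the true v-structures. Since Step~3 is unchanged and uses only the skeleton and the v-structures, soundness and completeness of rules R1--R3 (\cite{Meek95}, \cite{AndersonEtAll97}) give the correct CPDAG; equivalently, the output now coincides with that of the original PC-algorithm, so the claim follows from Theorem~\ref{th.correct.pc.original}. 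The one genuinely new point --- and the only place the argument could go wrong --- is the completeness of edge removal under frozen adjacency sets; everything hinges on the supergraph property guaranteeing that $\text{pa}(\mathcal{G},X_i)$ is still an admissible conditioning set at level $|\text{pa}(\mathcal{G},X_i)|$.
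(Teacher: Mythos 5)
Your proposal is correct and follows exactly the route the paper takes: the paper's proof simply states that the argument is ``completely analogous'' to that of Theorem~\ref{th.correct.pc.original}, whose ingredients (the supergraph property, d-separation by parent sets being checked because $\text{pa}(\mathcal{G},X_i) \subseteq a(X_i)$, Lemma~5.1.3 of \cite{SpirtesEtAl00} for the v-structures, and the rules of \cite{Meek95} and \cite{AndersonEtAll97} for Step~3) are precisely the ones you spell out. Your observation that the frozen adjacency sets remain supersets of the true parent sets is exactly the point that makes the analogy go through.
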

\begin{proof}
  The proof of Theorem \ref{theo.corr.pc} is completely analogous to the proof of
  Theorem \ref{th.correct.pc.original} for the original PC-algorithm, as
  discussed in Section \ref{sec.oracle.orig.pc}.
\end{proof}

\begin{theorem}\label{theo.indep}
   The skeleton resulting from the sample version of the PC-stable
   algorithm is order-independent.
\end{theorem}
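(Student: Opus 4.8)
The plan is to prove the statement by induction on the level $\ell$, showing that the graph $\mathcal{C}$ present at the end of each level is order-independent. The crucial structural feature I would exploit is the freezing of adjacency sets on lines 5--7 of Algorithm \ref{pseudo.new.pc}: at the start of level $\ell$, the sets $a(X_i)$ are fixed to $\text{adj}(\mathcal{C},X_i)$ and remain unchanged throughout the level, even as edges are deleted on line 13. This is precisely the difference from Algorithm \ref{pseudo.old.pc}, and it decouples the deletion decisions made at a given level from the order in which pairs are processed.

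First I would give a clean characterization of which edges survive a given level in terms of the graph at the start of that level. Fix the graph $\mathcal{C}$ present at the start of level $\ell$ (order-independent by the inductive hypothesis), and let $a(X_i) = \text{adj}(\mathcal{C},X_i)$ be the frozen adjacency sets. I claim that an edge $X_i - X_j$ of $\mathcal{C}$ is deleted during level $\ell$ if and only if there exists a set $\mathbf{S}$ with $|\mathbf{S}| = \ell$ and $\mathbf{S} \subseteq a(X_i) \setminus \{X_j\}$ or $\mathbf{S} \subseteq a(X_j) \setminus \{X_i\}$ for which the sample test judges $X_i \ci X_j \mid \mathbf{S}$. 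The forward direction is immediate from the algorithm. For the reverse direction, note that the algorithm considers every ordered pair of adjacent vertices at this level, so both $(X_i,X_j)$ and $(X_j,X_i)$ will be examined unless the edge is deleted earlier; since the conditioning sets are drawn from the frozen sets $a(X_i)$ and $a(X_j)$, which do not change when other edges are removed, any qualifying separating set $\mathbf{S}$ will eventually be tested and will trigger deletion. The key point is that this characterization references only $\mathcal{C}$ (through the frozen $a(\cdot)$) and the fixed outcomes of the conditional independence tests, and never the ordering.

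With this characterization in hand the induction is routine. For the base case $\ell = 0$, the starting graph is the complete undirected graph on $\mathbf{V}$ regardless of ordering, so the frozen adjacency sets are order-independent, and by the characterization the graph at the end of level $0$ is order-independent. For the inductive step, if the graph at the start of level $\ell$ is order-independent then so are its adjacency sets, hence by the characterization the set of edges deleted at level $\ell$ is order-independent, and therefore so is the graph at the start of level $\ell + 1$. Since the final skeleton is the graph produced when the outer loop terminates, it is order-independent.

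I expect the only real subtlety---and the step I would write most carefully---to be the reverse direction of the characterization, namely arguing that a qualifying separating set contained in the frozen adjacency set is always actually tested before the level ends. Here one must use that deletions within a level do not shrink the frozen $a(X_i)$, so that no qualifying conditioning set is ever \emph{lost} the way it can be in the original PC-algorithm (cf.\ Example \ref{ex.orderdep.old}). Everything else is bookkeeping. Note that this argument establishes order-independence of the \emph{edge set} only; the separating sets stored in \texttt{sepset} may still be order-dependent, which is why the v-structure modifications of the later sections are needed.
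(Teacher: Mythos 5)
Your proposal is correct and follows essentially the same route as the paper's own proof: both exploit the frozen adjacency sets $a(X_i)$ to show that an edge is deleted at level $\ell$ if and only if some qualifying conditioning set inside $a(X_i)$ or $a(X_j)$ passes the test, a criterion that does not reference the ordering. Your explicit induction on $\ell$ merely formalizes what the paper leaves implicit (namely, that the graph at the start of each level, and hence the frozen sets themselves, are order-independent), so it is a slightly more careful write-up of the identical argument.
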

\begin{proof}
  We consider the removal or retention of an arbitrary edge $X_i-X_j$ at
  some level $\ell$.   The ordering of the variables determines
  the order in which the edges (line 9 of Algorithm \ref{pseudo.new.pc})
  and the subsets $\mathbf{S}$ of $a(X_i)$ and $a(X_j)$ (line 11 of
  Algorithm \ref{pseudo.new.pc}) are considered. By construction, however,
  the order in which edges are considered does not affect the sets $a(X_i)$ and
  $a(X_j)$.

  If there is at least one subset $\mathbf{S}$ of $a(X_i)$ or $a(X_j)$ such that
  $X_i\ci X_j |\mathbf{S}$, then any ordering of the variables will find
  a separating set for $X_i$ and $X_j$ (but different orderings may lead to
  different separating sets as illustrated in Example
  \ref{ex.sepset}). Conversely, if there is no subset $\mathbf{S'}$ of
  $a(X_i)$ or $a(X_j)$ such that $X_i\ci X_j |\mathbf{S'}$, then no
  ordering will find a separating set.

  Hence, any ordering of the variables
  leads to the same edge deletions, and therefore to the same
  skeleton.
\end{proof}

\begin{example}\label{ex.orderindep.new}
  (Order-independent skeletons) We go back to Example
  \ref{ex.orderdep.old}, and consider the sample version of Algorithm
  \ref{pseudo.new.pc}. The algorithm now outputs the skeleton shown in
  Figure \ref{pc.old.output1} for both orderings
  $\text{order}_1(\mathbf{V})$ and $\text{order}_2(\mathbf{V})$.

  We again go through the algorithm step by step. We start with a
  complete undirected graph on $\mathbf{V}$. The only conditional
  independence found when $\ell=0$ or $\ell=1$ is $X_1 \ci X_2$, and the
  corresponding edge is removed.
  When $\ell=2$, the algorithm first computes the new adjacency sets:
  $\text{a}(X_1)=a(X_2)=\{X_3,X_4,X_5\}$ and
  $\text{a}(X_i)=\mathbf{V}\setminus \{X_i\}$ for $i=3,4,5$. There
  are two pairs of variables that are thought to be conditionally
  independent given a subset of size 2, namely $(X_2,X_4)$ and
  $(X_3,X_4)$. Since the sets $\text{a}(X_i)$ are not updated after edge
  removals, it does not matter in which order we consider the ordered pairs
  $(X_2,X_4)$, $(X_4,X_2)$, $(X_3,X_4)$ and $(X_4,X_3)$. Any ordering leads
  to the removal of both edges, as the separating set $\{X_1,X_3\}$ for
  $(X_4,X_2)$ is contained in $\text{a}(X_4)$, and the separating set
  $\{X_1,X_5\}$ for $(X_3,X_4)$ is contained in $\text{a}(X_3)$ (and in
  $\text{a}(X_4)$).
\end{example}

\subsection{Determination of the v-structures}\label{sec.new.vstruct}

We propose two methods to resolve the order-dependence in the determination
of the v-structures, using the conservative PC-algorithm (CPC) of
\cite{RamseyZhangSpirtes06} and a variation thereof.

The CPC-algorithm works as follows. Let $\mathcal C$ be the graph resulting
from Step 1 of the PC-algorithm (Algorithm \ref{pseudo.pc}). For all
unshielded triples $(X_i,X_j,X_k)$ in $\mathcal C$, determine all subsets
$\mathbf{Y}$ of $\text{adj}(\mathcal{C},X_i)$ and of
$\text{adj}(\mathcal{C},X_k)$ that make $X_i$ and $X_k$ conditionally
independent, i.e., that satisfy $X_i \ci X_k | \mathbf{Y}$. We refer to
such sets as separating sets. The triple $(X_i,X_j,X_k)$ is labelled as
\textit{unambiguous} if at least one such separating set is
found and either $X_j$ is in all separating sets or in none of them;
otherwise it is labelled as \textit{ambiguous}. If the triple is
unambiguous, it is oriented  as v-structure if and only if $X_j$ is in none
of the separating sets. Moreover, in Step 3 of the PC-algorithm (Algorithm
\ref{pseudo.pc}), the orientation rules are adapted so that only
unambiguous triples are oriented. We refer to the combination of PC-stable
and CPC as the CPC-stable algorithm.

We found that the CPC-algorithm can be very conservative, in the sense that
very few unshielded triples are unambiguous in the sample version. We
therefore propose a minor modification of this approach, called majority
rule PC-algorithm (MPC). As in CPC, we first determine all subsets
$\mathbf{Y}$ of $\text{adj}(\mathcal{C},X_i)$ and of
$\text{adj}(\mathcal{C},X_k)$ satisfying $X_i \ci X_k | \mathbf{Y}$. We
then label the triple $(X_i,X_j,X_k)$ as \textit{unambiguous} if at least
one such separating set is found and $X_j$ is not in exactly $50\%$ of the
separating sets. Otherwise it is labelled as \textit{ambiguous}. If the
triple is unambiguous, it is oriented as v-structure if and only if $X_j$
is in less than half of the separating sets. As in CPC, the orientation
rules in Step 3 are adapted so that only unambiguous triples are
oriented. We refer to the combination of PC-stable and MPC as the
MPC-stable algorithm.

Theorem \ref{theo.corr.pc.cons.maj} states that the oracle CPC- and
MPC-stable algorithms are sound and complete. When looking at the sample
versions of the algorithms, we note that any unshielded triple that is
judged to be unambiguous in CPC-stable is also unambiguous in MPC-stable,
and any unambiguous v-structure in CPC-stable is an unambiguous v-structure
in MPC-stable. In this sense, CPC-stable is more conservative than
MPC-stable, although the difference appears to be small in simulations and
for the yeast data (see Sections \ref{sec.simulation.results}
and \ref{sec.simulation.realdata}). Both CPC-stable and MPC-stable share
the property that the determination of v-structures no longer depends on
the (order-dependent) separating sets that were found in Step 1 of the
algorithm. Therefore, both CPC-stable and MPC-stable yield
order-independent decisions about v-structures in the sample version, as
stated in Theorem \ref{theo.indep.sepsets}. Example \ref{ex.sepset.new}
illustrates both algorithms.

We note that the CPC/MPC-stable algorithms may yield a lot fewer directed
edges than PC-stable. On the other hand, we can put more trust in those
edges that were oriented.

\begin{theorem}\label{theo.corr.pc.cons.maj}
  Let the distribution of $\mathbf{V}$ be faithful to a DAG $\mathcal
  G=(\mathbf{V},\mathbf{E})$, and assume that we are given perfect
  conditional independence information about all pairs of variables
  $(X_i,X_j)$ in $\mathbf{V}$ given subsets $\mathbf{S} \subseteq
  \mathbf{V} \setminus \{X_i,X_j\}$. Then the output of the
  CPC/MPC(-stable) algorithms is the CPDAG that represents $\mathcal G$.
\end{theorem}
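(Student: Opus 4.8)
The plan is to show that, with perfect conditional independence information, the CPC and MPC modifications make exactly the same v-structure and orientation decisions as the standard algorithm, so that the conclusion reduces to the already-established soundness and completeness of (stable) PC. First I would fix the skeleton: for the CPC/MPC-stable versions it is correct by Theorem~\ref{theo.corr.pc}, and for the plain CPC/MPC versions by Theorem~\ref{th.correct.pc.original}. In either case the Step~1 skeleton equals that of $\mathcal{G}$, the unshielded triples considered in Step~2 are exactly the unshielded triples of $\mathcal{G}$, and for each triple $(X_i,X_j,X_k)$ the sets $\text{adj}(\mathcal{C},X_i)$ and $\text{adj}(\mathcal{C},X_k)$ are the true adjacency sets.

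The crux is the all-or-none property already quoted from \cite[Lemma~5.1.3]{SpirtesEtAl00} in the discussion of Example~\ref{ex.sepset}: under faithfulness, the middle vertex $X_j$ of an unshielded triple lies either in \emph{every} subset $\mathbf{Y}$ of the adjacency sets satisfying $X_i \ci X_k \mid \mathbf{Y}$ or in \emph{none} of them, and it lies in none precisely when $(X_i,X_j,X_k)$ is a v-structure of $\mathcal{G}$. I would first note that at least one such $\mathbf{Y}$ exists, since $X_i$ and $X_k$ are non-adjacent and hence d-separated by $\text{pa}(\mathcal{G},X_i)$ or $\text{pa}(\mathcal{G},X_k)$, each a subset of the corresponding adjacency set. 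Therefore the fraction of separating sets that contain $X_j$ is exactly $0$ or $1$.

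From this the verification is immediate. Every triple is unambiguous under both rules: the CPC criterion (``$X_j$ in all or in none'') is the dichotomy itself, and the MPC criterion (``fraction different from $1/2$'') holds because the fraction is $0$ or $1$. On an unambiguous triple both rules orient a v-structure exactly when $X_j$ is in none of the separating sets --- for MPC the condition ``less than half'' is equivalent to this when the fraction is $0$ versus $1$ --- and by the cited lemma this agrees with the Step~2 decision of standard PC. Since no triple is ambiguous, the adapted orientation rules R1--R3 coincide with the ordinary R1--R3, so Step~3 is unchanged.

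Putting these together, the entire output of CPC/MPC(-stable) coincides with that of standard (stable) PC, which by Theorem~\ref{th.correct.pc.original} (respectively Theorem~\ref{theo.corr.pc}) is the CPDAG representing $\mathcal{G}$. There is no substantial obstacle here --- the theorem is essentially a reduction --- and the only point deserving care is that the all-or-none property of \cite[Lemma~5.1.3]{SpirtesEtAl00} applies to separating sets drawn from the adjacency sets rather than from all of $\mathbf{V}\setminus\{X_i,X_k\}$; this is inherited, since the dichotomy holds for the larger collection and hence for any subcollection, together with the existence of at least one separating set among subsets of the adjacency sets noted above.
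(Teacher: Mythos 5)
Your proof is correct and follows essentially the same route as the paper's: skeleton correctness from Theorems~\ref{th.correct.pc.original} and~\ref{theo.corr.pc}, the all-or-none property of \cite[Lemma 5.1.3]{SpirtesEtAl00} to conclude every unshielded triple is unambiguous and v-structures are decided correctly, and then the observation that with no ambiguous triples the orientation rules reduce to those of the original oracle PC-algorithm. Your two added details --- explicitly establishing that a separating set exists among subsets of the adjacency sets (via the parent sets), and noting that the dichotomy is inherited by the restricted collection of separating sets --- are points the paper leaves implicit, and they tighten rather than change the argument.
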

\begin{proof}
   The skeleton of the CPDAG is correct by Theorems
   \ref{th.correct.pc.original} and \ref{theo.corr.pc}. The
   unshielded triples are all unambiguous (in the conservative and the
   majority rule versions), since for
   any unshielded triple $(X_i,X_j,X_k)$ in a DAG, $X_j$ is either in all
   sets that d-separate $X_i$ and $X_k$ or in none of them \citep[Lemma
   5.1.3]{SpirtesEtAl00}. In particular, this also means that all
   v-structures are determined correctly. Finally, since all unshielded
   triples are unambiguous, the orientation rules are as in the original
   oracle PC-algorithm, and soundness and completeness of these rules
   follows from \cite{Meek95} and \cite{AndersonEtAll97}.
\end{proof}

\begin{theorem}\label{theo.indep.sepsets}
   The decisions about v-structures in the sample versions of the
   CPC/MPC-stable algorithms are order-independent.
\end{theorem}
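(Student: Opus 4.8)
The plan is to reduce everything to the order-independence of the skeleton, which we already have from Theorem \ref{theo.indep}, and then observe that the v-structure decisions of CPC-stable and MPC-stable are deterministic functions of data that depend only on the skeleton and on the (fixed) outcomes of the conditional independence tests, not on the ordering.

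First I would note that by Theorem \ref{theo.indep} the skeleton $\mathcal{C}$ produced by PC-stable is identical for every ordering of the variables. Since an unshielded triple is defined purely in terms of adjacencies in $\mathcal{C}$, the collection of unshielded triples $(X_i,X_j,X_k)$ is itself order-independent. Likewise, for each such triple the adjacency sets $\text{adj}(\mathcal{C},X_i)$ and $\text{adj}(\mathcal{C},X_k)$ are functions of $\mathcal{C}$ alone, and hence do not depend on the ordering.

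Next, fix an unshielded triple $(X_i,X_j,X_k)$. Both CPC-stable and MPC-stable determine the full collection of subsets $\mathbf{Y}$ of $\text{adj}(\mathcal{C},X_i)$ and of $\text{adj}(\mathcal{C},X_k)$ for which the test accepts $X_i \ci X_k \mid \mathbf{Y}$. The crucial point is that this collection is determined entirely by the fixed test outcomes applied to subsets of the two order-independent adjacency sets; the algorithm enumerates \emph{all} such subsets, so the order in which they are examined is irrelevant, and, unlike the original PC-algorithm, the decision does \emph{not} use the single, order-dependent separating set stored in Step~1 (whose order-dependence was exhibited in Example \ref{ex.sepset}). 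Hence the entire collection of separating sets associated with the triple is order-independent.

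Finally, I would observe that the label assigned to the triple is a fixed function of this collection together with the vertex $X_j$: in CPC-stable the triple is unambiguous exactly when the collection is nonempty and $X_j$ lies in all or in none of its members, and in MPC-stable exactly when the collection is nonempty and $X_j$ lies in a fraction of its members different from one half; in either case an unambiguous triple is declared a v-structure according to a deterministic membership-count criterion. Since both the collection and $X_j$ are order-independent, the labels (unambiguous/ambiguous) and the v-structure decisions are order-independent as well, and repeating this for every unshielded triple gives the claim. I do not anticipate a genuine obstacle; the only point requiring care is to keep the order-dependent Step~1 separating sets out of the argument entirely and to rely solely on the freshly recomputed separating sets, so that the reduction to Theorem \ref{theo.indep} goes through cleanly.
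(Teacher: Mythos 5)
Your proposal is correct and follows essentially the same route as the paper's proof: reduce to the order-independence of the PC-stable skeleton (Theorem \ref{theo.indep}), note that the unshielded triples and adjacency sets are therefore order-independent, and conclude that the unambiguous/v-structure decisions, being deterministic functions of the full collection of separating sets recomputed from those adjacency sets (and not of the order-dependent Step~1 sepsets), are order-independent. The paper states this more tersely, but the content is identical; your added emphasis on enumerating \emph{all} subsets and avoiding the stored sepsets is exactly the point the paper's argument relies on.
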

\begin{proof}
  The CPC/MPC-stable algorithms have order-independent skeletons in Step 1,
  by Theorem \ref{theo.indep}. In particular, this means that their
  unshielded triples and adjacency sets are order-independent. The decision
  about whether an unshielded triple is unambiguous and/or a v-structure is
  based on the adjacency sets of nodes in the triple, which are
  order-independent.
\end{proof}

\begin{example}\label{ex.sepset.new}
  (Order-independent decisions about v-structures) We consider
  the sample versions of the CPC/MPC-stable algorithms, using the same
  input as in Example \ref{ex.sepset}. In particular, we assume that all
  conditional independencies induced by the DAG in Figure \ref{ex.true.dag} are
  judged to hold, plus the additional (erroneous) conditional independency
  $X_1\ci X_3|X_4$.

  Denote the skeleton after Step 1 by $\mathcal C$. We consider the
  unshielded triple $(X_1,X_2, X_3)$. First, we compute adj$(\mathcal
  C,X_1) = \{X_2,X_5\}$ and adj$(\mathcal C,X_3) = \{X_2,X_4\}$. We now
  consider all subsets $\mathbf{Y}$ of these adjacency sets, and check
  whether $X_1 \ci X_3 | \mathbf{Y}$. The following separating sets are
  found: $\{X_2\}$, $\{X_4\}$, and $\{X_2,X_4\}$.

  Since $X_2$ is in some but not all of these separating sets, CPC-stable
  determines that the triple is ambiguous, and no orientations are
  performed. Since $X_2$ is in more than half of the separating sets,
  MPC-stable determines that the triple is unambiguous and not a
  v-structure. The output of both algorithms is given in Figure
  \ref{ex.true.skelet}.
\end{example}

\subsection{Orientation rules}\label{sec.new.orientations}

Even when the skeleton and the determination of the v-structures are
order-independent, Example \ref{ex.orientation} showed that there might be
some order-dependence left in the sample-version. This can be resolved by
allowing bi-directed edges ($\leftrightarrow$) and working with lists
containing the candidate edges for the v-structures in Step 2 and the
orientation rules R1-R3 in Step 3.

In particular, in Step 2 we generate a list of all (unambiguous)
v-structures, and then orient all of these, creating a bi-directed edge in
case of a conflict between two v-structures. In Step 3, we first generate a list
of all edges that can be oriented by rule R1. We orient all these edges,
again creating bi-directed edges if there are conflicts. We do the same for
rules R2 and R3, and iterate this procedure until no more edges can be
oriented.

When using this procedure, we add the letter L (standing for lists), e.g.,
LCPC-stable and LMPC-stable. The LCPC-stable and
LMPC-stable algorithms are correct in the oracle version (Theorem
\ref{theo.correct.bcpc}) and fully order-independent in the sample versions
(Theorem \ref{theo.fully.orderindep}). The procedure is illustrated
in Example \ref{ex.orientation.new}.

We note that the bi-directed edges cannot be interpreted causally. They
simply indicate that there was some conflicting information in the
algorithm.

\begin{theorem}\label{theo.correct.bcpc}
  Let the distribution of $\mathbf{V}$ be faithful to a DAG $\mathcal
  G=(\mathbf{V},\mathbf{E})$, and assume that we are given perfect
  conditional independence information about all pairs of variables
  $(X_i,X_j)$ in $\mathbf{V}$ given subsets $\mathbf{S} \subseteq
  \mathbf{V} \setminus   \{X_i,X_j\}$. Then the (L)CPC(-stable) and
  (L)MPC(-stable) algorithms output the CPDAG that represents $\mathcal G$.
\end{theorem}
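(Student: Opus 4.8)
The plan is to reduce the statement to the already-established Theorem~\ref{theo.corr.pc.cons.maj} by showing that in the oracle setting the list-based machinery (the ``L'') never actually fires: no bi-directed edge is ever created, so (L)CPC(-stable) and (L)MPC(-stable) produce exactly the same output as CPC(-stable) and MPC(-stable). Since Theorem~\ref{theo.corr.pc.cons.maj} already tells us that the latter output the CPDAG representing $\mathcal G$, this suffices.

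First I would recall what is available under perfect conditional independence information. By Theorems~\ref{th.correct.pc.original} and~\ref{theo.corr.pc} the skeleton is correct, and by Lemma~5.1.3 of \cite{SpirtesEtAl00} every unshielded triple $(X_i,X_j,X_k)$ is unambiguous, since $X_j$ lies either in all sets that d-separate $X_i$ and $X_k$ or in none of them. Consequently the conservative and majority-rule labellings agree with the true collider structure, and the set of v-structures determined in Step~2 is exactly the set of v-structures of the true CPDAG.

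The heart of the proof is the claim that orienting these objects via lists creates no conflict. For Step~2, I would argue that the true CPDAG assigns at most one orientation to each edge, so two v-structures can never demand opposite arrowheads on the same edge: a conflict on $X_i - X_j$ would require both $X_i \to X_j$ and $X_j \to X_i$ to be forced, contradicting acyclicity of the underlying DAG. Hence orienting all v-structures simultaneously yields precisely the v-structure arrowheads of the CPDAG, with no bi-directed edge. For Step~3, I would invoke the soundness of rules R1--R3 (\cite{Meek95,AndersonEtAll97}): every edge the list procedure orients is oriented in agreement with the unique CPDAG, so again no two applications can disagree on a single edge and no bi-directed edge is produced. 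Since the list procedure iterates the (sound and complete) rules to a fixed point, completeness guarantees that it orients every compelled edge, reaching the same graph as the non-list versions.

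The step I expect to require the most care is this no-conflict argument for the simultaneous, list-based application of the rules, since a priori applying a rule to all eligible edges at once (rather than one at a time, updating the graph after each orientation) could differ from the sequential application used in CPC/MPC(-stable). The key point to make precise is that soundness makes the procedure monotone toward the single target CPDAG: every arrowhead introduced is an arrowhead of that CPDAG, so the accumulated orientations always form a subset of one consistent graph and can never collide. Once this is nailed down, the absence of bi-directed edges makes the L-versions identical to their non-L counterparts, and the conclusion follows immediately from Theorem~\ref{theo.corr.pc.cons.maj}.
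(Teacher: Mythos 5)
Your proposal is correct and takes essentially the same route as the paper's proof: reduce to Theorem~\ref{theo.corr.pc.cons.maj} and observe that with perfect conditional independence information no conflicts can arise among v-structures in Step~2 or among applications of R1--R3 in Step~3, so no bi-directed edges are created and the (L)-versions coincide with CPC/MPC(-stable). The paper merely asserts this no-conflict claim, while you justify it via soundness of the orientations toward the unique target CPDAG --- a welcome elaboration, but not a different argument.
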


\begin{proof}\label{theo.sample.bcpc.orderindep}
  By Theorem \ref{theo.corr.pc.cons.maj}, we know that the CPC(-stable) and
  MPC(-stable) algorithms are correct. With perfect conditional
  independence information, there are no conflicts between v-structures in
  Step 2 of the algorithms, nor between orientation rules in Step 3 of the
  algorithms. Therefore, the (L)CPC(-stable) and (L)MPC(-stable) algorithms
  are identical to the CPC(-stable) and MPC(-stable) algorithms.
\end{proof}

\begin{theorem}\label{theo.fully.orderindep}
   The sample versions of LCPC-stable and LMPC-stable are fully
   order-indepen\-dent.
\end{theorem}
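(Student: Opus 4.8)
The plan is to establish full order-independence by decomposing the sample algorithm into its three constituent steps and showing each produces order-independent output, where the output of each step feeds into the next. The key observation is that ``fully order-independent'' means the entire output graph (skeleton plus all orientations, including any bi-directed edges) does not depend on order$(\mathbf{V})$. I would structure the proof as a chain of three claims, leaning heavily on results already proved earlier in the excerpt.

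First I would invoke Theorem \ref{theo.indep} to conclude that the skeleton produced in Step 1 by PC-stable is order-independent. Since both LCPC-stable and LMPC-stable use the PC-stable skeleton-finding routine (Algorithm \ref{pseudo.new.pc}), this is immediate. As a consequence, the set of unshielded triples and all the adjacency sets adj$(\mathcal{C}, X_i)$ are order-independent as well, since they are determined entirely by the order-independent skeleton.

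Second, I would turn to the v-structures in Step 2. By Theorem \ref{theo.indep.sepsets}, the \emph{decision} about which unshielded triples are unambiguous v-structures is order-independent, because (as proved there) that decision depends only on the order-independent adjacency sets rather than on the order-dependent separating sets from Step 1. The only remaining source of order-dependence in Step 2 is the one exhibited in Example \ref{ex.orientation}: conflicting v-structures could overwrite one another depending on which is processed last. Here I would point to the list-based (``L'') modification: Step 2 first compiles the complete list of unambiguous v-structures and then orients all of them simultaneously, inserting a bi-directed edge $\leftrightarrow$ wherever two v-structures conflict. Since this list is a \emph{set} determined by the order-independent triples, and the rule for resolving a conflict (create a bi-directed edge) is symmetric and does not depend on processing order, the resulting partially directed graph after Step 2 is order-independent.

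Third, I would treat the orientation rules R1--R3 in Step 3 by the same list-based argument: at each iteration we generate the full list of edges orientable by R1, orient them all at once (with bi-directed edges on conflicts), then do the same for R2 and R3, iterating until no edge can be oriented. Since each list is computed from the current graph before any edges in that round are changed, the order in which candidate edges within a list are examined is irrelevant, and conflicts are again resolved symmetrically. Because the starting graph for Step 3 is order-independent (by the first two claims) and each round deterministically transforms the graph into another order-independent graph, the fixed point reached at termination is order-independent. I expect \textbf{the main obstacle} to be making precise why the iteration in Step 3 terminates at an order-independent fixed point: one must argue that processing the three rules in their fixed sequence R1, R2, R3, orienting entire lists atomically, yields a well-defined deterministic map on partially directed graphs, so that the order of variables never enters. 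The cleanest way to handle this is to note that within each round every decision is a function of the current (order-independent) graph alone, and the bi-directed-edge convention removes the only place where two simultaneously-applicable orientations could otherwise compete in an order-sensitive manner.
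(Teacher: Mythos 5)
Your proposal is correct and follows essentially the same route as the paper's own proof, which simply cites Theorem \ref{theo.indep}, Theorem \ref{theo.indep.sepsets}, and the list-plus-bi-directed-edge procedure. You have merely spelled out in more detail why the list-based Steps 2 and 3 define order-independent deterministic maps on the (already order-independent) graph, which is exactly the content the paper leaves implicit with the word ``straightforwardly''.
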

\begin{proof}
   This follows straightforwardly from Theorems \ref{theo.indep} and
   \ref{theo.indep.sepsets} and the procedure with lists and bi-directed
   edges discussed above.
\end{proof}

Table \ref{table.modifications} summarizes the three order-dependence
issues explained above and the corresponding modifications of the
PC-algorithm that removes the given order-dependence problem.

\begin{table}[h]
\centering
\begin{tabular}{lccc}
  \cline{2-4}
  & \multicolumn{1}{|c|}{skeleton} &
   \multicolumn{1}{|c|}{v-structures decisions} &
   \multicolumn{1}{|c|}{edges orientations} \\ \cline{1-4}
  \multicolumn{1}{|l|}{PC} & \multicolumn{1}{|c|}{-} &
  \multicolumn{1}{|c|}{-} &\multicolumn{1}{|c|}{-}\\ \cline{1-4}
  \multicolumn{1}{|l|}{PC-stable} & \multicolumn{1}{|c|}{$\surd$} &
  \multicolumn{1}{|c|}{-} &\multicolumn{1}{|c|}{-}\\ \cline{1-4}
  \multicolumn{1}{|l|}{CPC/MPC-stable} & \multicolumn{1}{|c|}{$\surd$} &
  \multicolumn{1}{|c|}{$\surd$} &\multicolumn{1}{|c|}{-}\\ \cline{1-4}
  \multicolumn{1}{|l|}{BCPC/BMPC-stable} & \multicolumn{1}{|c|}{$\surd$} &
  \multicolumn{1}{|c|}{$\surd$} &\multicolumn{1}{|c|}{$\surd$}\\ \cline{1-4}

\end{tabular}
\caption{Order-dependence issues and corresponding modifications of the
  PC-algorithm that remove the problem. A tick mark indicates that the
  corresponding aspect of the graph is estimated order-independently in the
  sample version. For example, with PC-stable the skeleton is estimated
  order-independently but not the v-structures and the edge orientations.}
  \label{table.modifications}
\end{table}



\begin{example}\label{ex.orientation.new}
  First, we consider the two unshielded
  triples $(X_1,X_2,X_3)$ and $(X_2,X_3,X_4)$ as shown
  in Figure \ref{ex.vstruct}. The version of the algorithm that uses lists
  for the orientation rules, orients these edges as $X_1 \to X_2
  \leftrightarrow X_3 \leftarrow X_4$, regardless of the ordering of the
  variables.

  Next, we consider the structure shown in Figure \ref{ex.R1}. As a first
  step, we construct a list containing all candidate structures eligible
  for orientation rule R1 in Step 3. The list contains the unshielded
  triples $(X_1,X_2,X_5)$, $(X_3,X_2,X_5)$, $(X_4,X_5,X_2)$, and
  $(X_6,X_5,X_2)$. Now, we go through each element in the list
  and we orient the edges accordingly, allowing bi-directed edges. This
  yields the edge orientation $X_2 \leftrightarrow X_5$, regardless of the
  ordering of the variables.
\end{example}

\subsection{Related algorithms}\label{sec.new.other.algos}


The FCI-algorithm (\cite{SpirtesEtAl00,SpirtesMeekRichardson99}) first runs
Steps 1 and 2 of the PC-algorithm (Algorithm \ref{pseudo.pc}). Based on the
resulting graph, it then computes certain sets, called ``Possible-D-SEP''
sets, and conducts more conditional independence tests given subsets of the
Possible-D-SEP sets. This can lead to additional edge removals and corresponding
separating sets. After this, the v-structures are newly
determined. Finally, there are ten orientation rules as defined by
\cite{Zhang08-orientation-rules}.

From our results, it immediately follows that FCI with any of our
modifications of the PC-algorithm is sound and complete in the oracle
version. Moreover, we can easily construct partially or fully
order-independent sample versions as follows. To solve the order-dependence
in the skeleton we can use the following three step approach. First, we use
PC-stable to find an initial order-independent
skeleton. Next, since Possible-D-SEP sets are determined from
the orientations of the v-structures, we need order-independent
v-structures. Therefore, in Step 2 we can determine the v-structures using
CPC. Finally, we compute the Possible-D-SEP sets for all pairs of nodes at
once, and do not update these after possible edge removals. The
modification that uses these three steps returns an order-independent
skeleton, and we call it FCI-stable. To assess order-independent
v-structures in the final output, one should again use an
order-independent procedure, as in CPC or MPC for the second time that
v-structures are determined. We call these modifications CFCI-stable and
MFCI-stable, respectively. Regarding the orientation rules, we have that
the FCI-algorithm does not suffer from conflicting v-structures, as shown
in Figure \ref{ex.vstruct} for the PC-algorithm, because it orients edge
\emph{marks} and because bi-directed edges are allowed. However, the ten
orientation rules still suffer from order-dependence issues as in the
PC-algorithm, as in Figure \ref{ex.R1}. To solve this problem, we can again
use lists of candidate edges for each orientation rule as explained in the
previous section about the PC-algorithm. However, since these ten
orientation rules are more involved than the three for PC, using lists can
be very slow for some rules, for example the one for discriminating
paths. We refer to these
modifications as LCFCI-stable and LMFCI-stable, and they are fully
order-independent in the sample version.

Table \ref{table.modifications.fci} summarizes the three order-dependence
issues for FCI and the corresponding modifications that remove them.

\begin{table}[h]
\centering
\begin{tabular}{lccc}
  \cline{2-4}
  & \multicolumn{1}{|c|}{skeleton} &
   \multicolumn{1}{|c|}{v-structures decisions} &
   \multicolumn{1}{|c|}{edges orientations} \\ \cline{1-4}
  \multicolumn{1}{|l|}{FCI} & \multicolumn{1}{|c|}{-} &
  \multicolumn{1}{|c|}{-} &\multicolumn{1}{|c|}{-}\\ \cline{1-4}
  \multicolumn{1}{|l|}{FCI-stable} & \multicolumn{1}{|c|}{$\surd$} &
  \multicolumn{1}{|c|}{-} &\multicolumn{1}{|c|}{-}\\ \cline{1-4}
  \multicolumn{1}{|l|}{CFCI/MFCI-stable} & \multicolumn{1}{|c|}{$\surd$} &
  \multicolumn{1}{|c|}{$\surd$} &\multicolumn{1}{|c|}{-}\\ \cline{1-4}
  \multicolumn{1}{|l|}{LCFCI/LMFCI-stable} & \multicolumn{1}{|c|}{$\surd$} &
  \multicolumn{1}{|c|}{$\surd$} &\multicolumn{1}{|c|}{$\surd$}\\ \cline{1-4}

\end{tabular}
\caption{Order-dependence issues and corresponding modifications of the
  FCI-algorithm that remove the problem. A tick mark indicates that the
  corresponding aspect of the graph is estimated order-independently in the
  sample version. For example, with FCI-stable the skeleton is estimated
  order-independently but not the v-structures and the edge orientations.}
  \label{table.modifications.fci}
\end{table}

The RFCI-algorithm \citep{CoMaKaRi2012} can be viewed as an algorithm that
is in between PC and FCI, in the sense that its
computational complexity is of the same order as PC, but its output can be
interpreted causally without assuming causal sufficiency (but is slightly less
informative than the output from FCI).

RFCI works as follows. It runs the first step of PC. It then has a more
involved Step 2 to determine the v-structures \citep[Lemma
3.1]{CoMaKaRi2012}. In particular, for any unshielded triple
$(X_i,X_j,X_k)$, it conducts additional tests to check if both $X_i$ and
$X_j$ and $X_j$ and $X_k$ are conditionally dependent given
$\text{sepset}(X_i,X_j)\setminus\{X_j\}$ found in Step 1. If a conditional
independence relationship is detected, the corresponding edge is removed
and a minimal separating set is stored. The removal of an edge can create
new unshielded triples or destroy some of them. Therefore, the algorithm
works with lists to make sure that these actions are order-independent. On
the other hand, if both conditional dependencies hold and $X_j$ is not in
the separating set for $(X_i,X_k)$, the triple is oriented as a
v-structure. Finally, in Step 3 it uses the ten orientation rules of
\cite{Zhang08-orientation-rules} with a modified orientation rule for the
discriminating paths, that also involves some additional conditional
independence tests.

From our results, it immediately follows that RFCI with any of our
modifications of the PC-algorithm is correct in the oracle
version. Because of its more involved rules for v-structures and
discriminating paths, one needs to make several adaptations to create a
fully order-independent algorithm. For example, the additional conditional
independence tests conducted for the v-structures are based on the
separating sets found in Step 1. As already mentioned before (see Example
\ref{ex.sepset}) these separating sets are order-dependent, and therefore
also the possible edge deletions based on them are order-dependent, leading
to an order-dependent skeleton. To produce an order-independent skeleton
one should use a similar approach to the conservative one for the
v-structures to make the additional edge removals
order-independent. Nevertheless, we can remove a large amount of the
order-dependence in the skeleton by using the stable version for the
skeleton as a first step. We refer to this modification as
RFCI-stable. Note that this procedure does not produce a fully
order-independent skeleton, but as shown in Section \ref{sec.sim.skeleton}, it
reduces the order-dependence considerably. Moreover, we can combine this
modification with CPC or MPC on the final skeleton to reduce the
order-dependence of the v-structures. We refer to these modifications as
CRFCI-stable and MRFCI-stable. Finally, we can again use lists for the
orientation rules as in the FCI-algorithm to reduce the order-dependence
caused by the orientation rules.


The CCD-algorithm \citep{Richardson96} can also be made order-independent
using similar approaches.

\subsection{High-dimensional consistency}\label{sec.consistency}

The original PC-algorithm has been shown to be consistent for certain
sparse high-dimensio\-nal graphs. In particular, \cite{KalischBuehlmann07a}
proved consistency for multivariate Gaussian distributions.
More recently, \cite{HarrisDrton12} showed
consistency for the broader class of Gaussian copulas when using rank
correlations, under slightly different conditions.

These high-dimensional consistency results allow the DAG $\mathcal{G}$ and
the number of observed variables $p$ in $\mathbf{V}$ to grow as a function of
the sample size, so that $p=p_n$, $\mathbf{V}=\mathbf{V}_n =
(X_{n,1},\dots,X_{n,p_n})$ and  $\mathcal{G}=\mathcal{G}_n$. The
corresponding CPDAGs that represent $\mathcal G_n$ are denoted by $\mathcal
C_n$, and the estimated CPDAGs using tuning parameter $\alpha_n$ are denoted
by $\hat{\mathcal {C}}_n(\alpha_n)$. Then the consistency results say that,
under some conditions, there exists a sequence $\alpha_n$ such that
$P(\hat{\mathcal {C}}_n(\alpha_n) = \mathcal C_n) \to 1$ as $n\to \infty$.

These consistency results rely on the fact that the PC-algorithm only performs
conditional independence tests between pairs of variables given subsets
$\mathbf{S}$ of size less than or equal to the degree of the graph (when no
errors are made). We made sure that our modifications still obey this
property, and therefore the consistency results of \cite{KalischBuehlmann07a} and
\cite{HarrisDrton12} remain valid for the (L)CPC(-stable) and
(L)MPC(-stable) algorithms, under exactly the same conditions as for the original
PC-algorithm.

Finally, also the consistency results of \cite{CoMaKaRi2012}
for the FCI- and RFCI-algorithms remain valid for the (L)CFCI(-stable),
(L)MFCI(-stable), CRFCI(-stable), and MRFCI(-stable)
algorithms, under exactly the same conditions as for the original FCI- and
RFCI-algorithms.

\section{Simulations}\label{sec.simulation.results}

We compared all algorithms using simulated data from low-dimensional and
high-dimensional systems  with and without latent variables.
In the low-dimensional setting, we compared the modifications of PC, FCI
and RFCI. All algorithms performed similarly in this setting, and the
results are presented in Appendix \ref{app.sim.results.low}.
The remainder of this section therefore focuses on the high-dimensional
setting, where we compared (L)PC(-stable), (L)CPC(-stable) and
(L)MPC(-stable) in systems without latent variables, and RFCI(-stable),
CRFCI(-stable) and MRFCI(-stable) in systems with latent variables. We
omitted the FCI-algorithm and the modifications with lists for the
orientation rules of RFCI because of their computational complexity. Our
results show that our modified algorithms perform better than the original
algorithms in the high-dimensional settings we considered.

In Section \ref{sec.sim.setup} we describe the simulation
setup. Section \ref{sec.sim.skeleton} evaluates the estimation of the
skeleton of the CPDAG or PAG (i.e., only looking at the presence or absence
of edges), and Section \ref{sec.sim.outputs} evaluates the estimation of
the CPDAG or PAG (i.e., also including the edge marks). Appendix
\ref{app.sim.time.pc} compares the computing time and the number of
conditional independence tests performed by PC and PC-stable, showing that
PC-stable generally performs more conditional independence tests, and is
slightly slower than PC. Finally, Appendix \ref{app.sim.results.equal}
compares the modifications of FCI and RFCI in two medium-dimensional
settings with latent variables, where the number of nodes in the graph is
roughly equal to the sample size and we allow somewhat denser graphs. The
results indicate that also in this setting our modified versions perform
better than the original ones.


\subsection{Simulation set-up}\label{sec.sim.setup}

We used the following procedure to generate a random weighted DAG with a
given number of vertices $p$ and an expected neighborhood size
$E(N)$. First, we generated a random adjacency matrix $A$ with independent
realizations of Bernoulli$(E(N)/(p-1))$ random variables in the lower
triangle of the matrix and zeroes in the remaining entries. Next, we
replaced the ones in $A$ by independent realizations of a
Uniform($[0.1,1]$) random variable, where a nonzero entry $A_{ij}$ can be
interpreted as an edge from $X_j$ to $X_i$ with weight $A_{ij}$. 
(We bounded the edge weights away from zero to avoid problems with near-unfaithfulness.)

We related a multivariate Gaussian distribution to each DAG by letting $X_1
= \epsilon_1$ and $X_i = \sum_{r=1}^{i-1}A_{ir}X_r + \epsilon_i$ for
$i=2,\dots,p$, where $\epsilon_1,\dots,\epsilon_{p}$ are mutually independent
$\mathcal{N}(0,1)$ random variables. The variables $X_1,\dots, X_{p}$
then have a multivariate Gaussian distribution with mean zero and
covariance matrix $\Sigma = (\mathbf{1}-A)^{-1}(\mathbf{1}-A)^{-T}$, where
$\mathbf{1}$ is the $p \times p$ identity matrix.

We generated 250 random weighted DAGs with $p=1000$ and $E(N)=2$, and for
each weighted DAG we generated an i.i.d.\ sample of size $n=50$. In the
setting without latents, we simply used all variables. In the setting with
latents, we removed half of the variables that had no parents and at least two
children, chosen at random.

We estimated each graph for 20 random orderings
of the variables, using the sample versions of (L)PC(-stable),
(L)CPC(-stable), and (L)MPC(-stable) in the setting without latents, and
using the sample versions of RFCI(-stable), CRFCI(-stable), and
MRFCI(-stable) in the setting with latents, using levels $\alpha \in
\{0.000625, 0.00125, 0.0025, 0.005, 0.01, 0.02, 0.04\}$ for the
partial correlation tests. Thus, from each randomly generated DAG, we
obtained 20 estimated CPDAGs or PAGs from each algorithm, for each value of
$\alpha$.


\subsection{Estimation of the skeleton}\label{sec.sim.skeleton}

Figure \ref{fig.sim.skelet} shows the number of edges, the number of
errors, and the true discovery rate for the estimated skeletons. The figure
only compares PC and PC-stable in the setting without latent variables, and
RFCI and RFCI-stable in the setting with latent variables, since the
modifications for the v-structures and the orientation rules do not affect
the estimation of the skeleton.

\begin{figure}[h]
\centering
\includegraphics[scale=0.80]{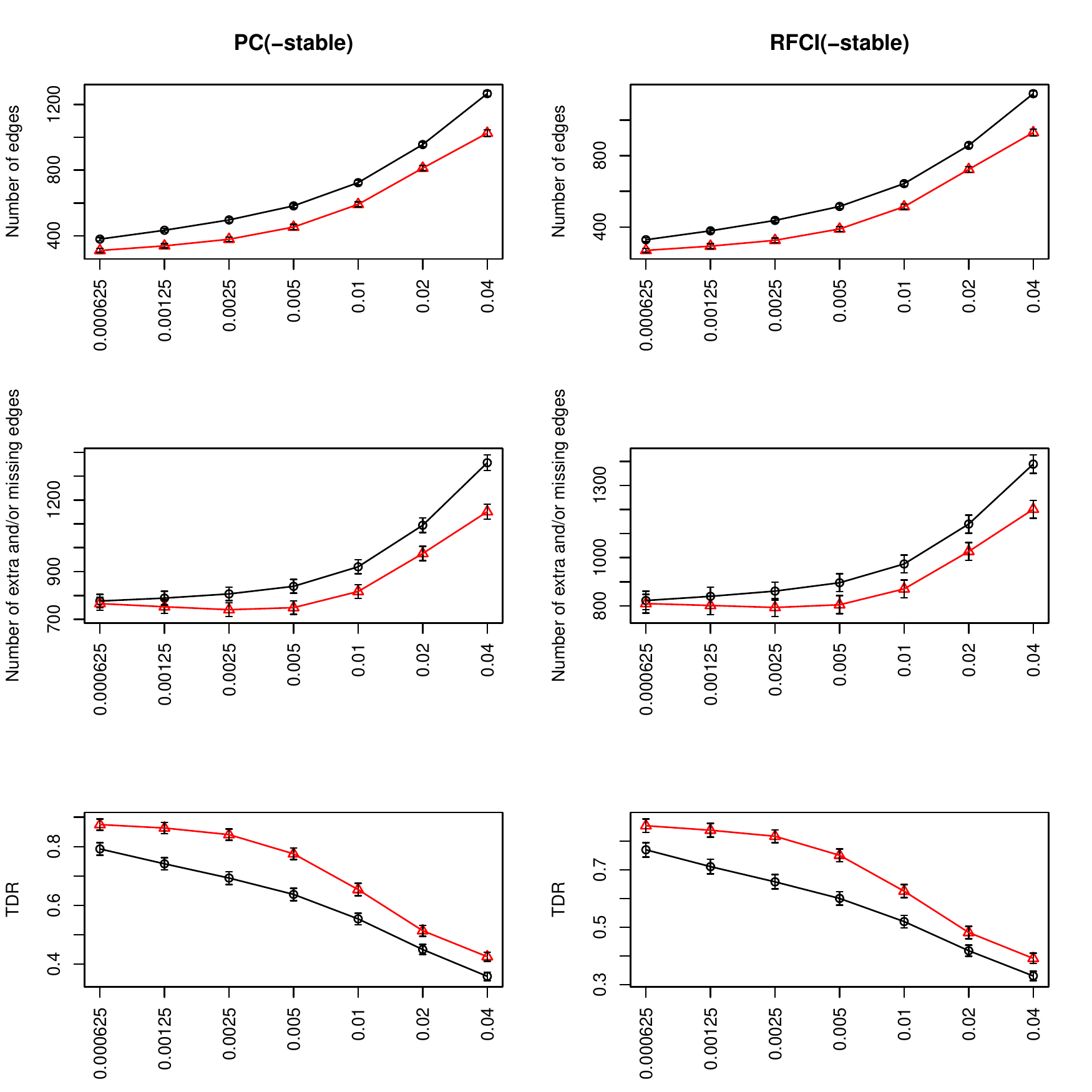}
\caption{Estimation performance of PC (circles; black line) and PC-stable
  (triangles; red line) for the skeleton of the CPDAGs (first column of
  plots), and of RFCI (circles; black line) and RFCI-stable
  (triangles; red line) for the skeleton of the PAGs (second column of plots),
  for different values of $\alpha$ ($x$-axis displayed in $\log$ scale). The
  results are shown as averages plus or minus one standard deviation,
  computed over 250 randomly generated graphs and 20 random variable
  orderings per graph.}
  \label{fig.sim.skelet}
\end{figure}

\begin{figure}[!bh]\centering%
  \subfigure[$\alpha=0.00125$]{
     \includegraphics[scale=0.3,angle=0]{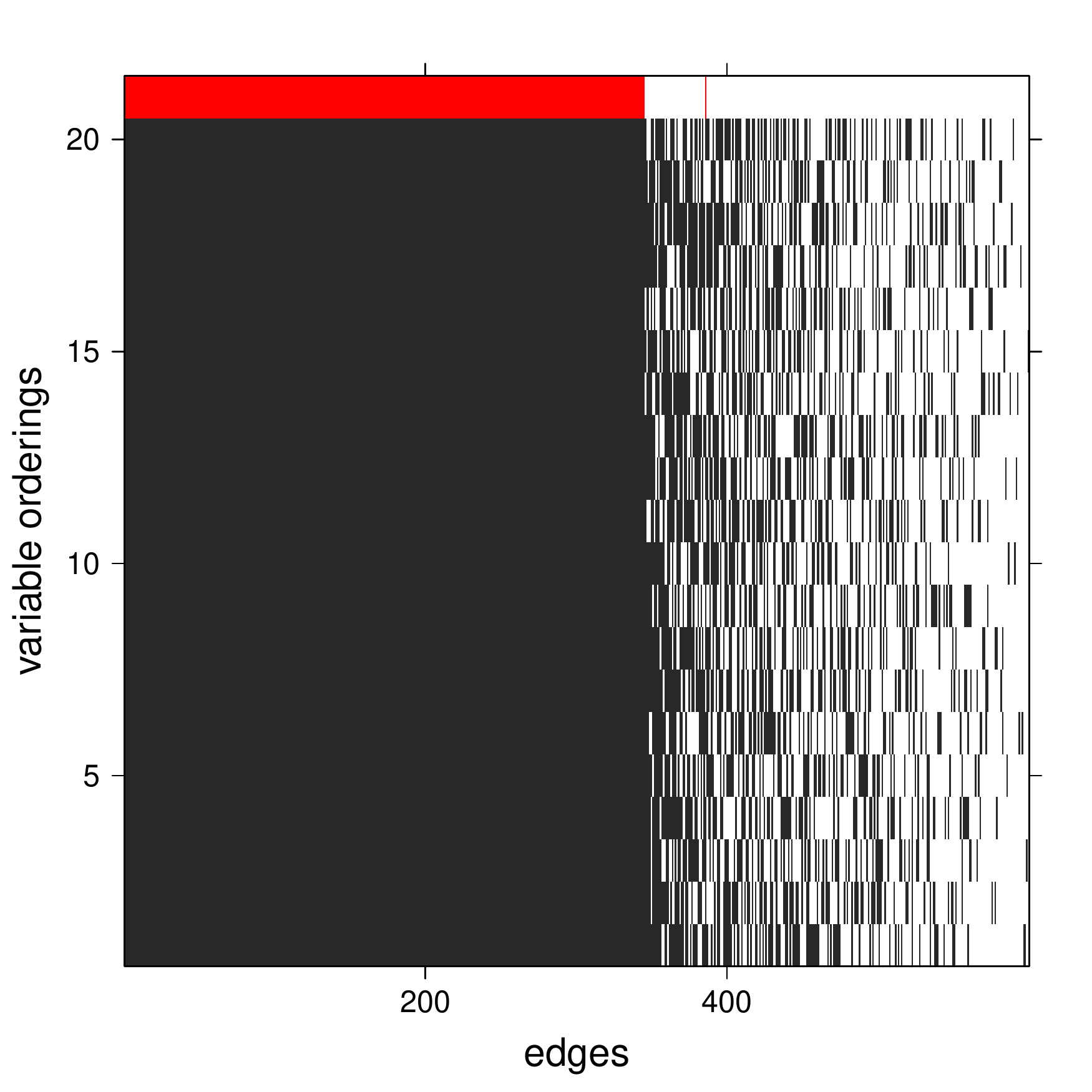}
     \label{fig.levelplot.00125}
  }\qquad
   \subfigure[$\alpha=0.005$]{
     \includegraphics[scale=0.3,angle=0]{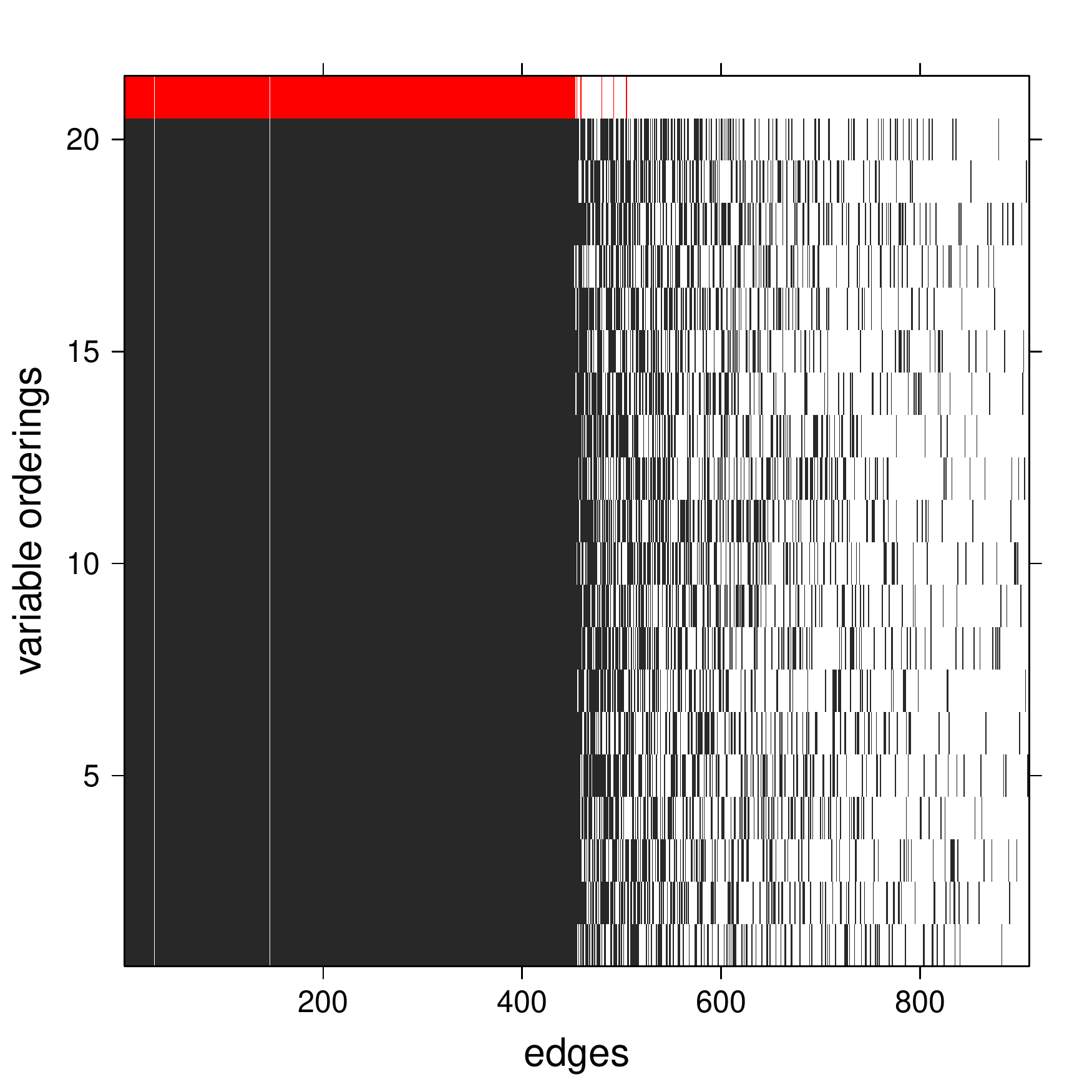}
     \label{fig.levelplot.005}
  }\\
  \subfigure[$\alpha=0.02$]{
     \includegraphics[scale=0.3,angle=0]{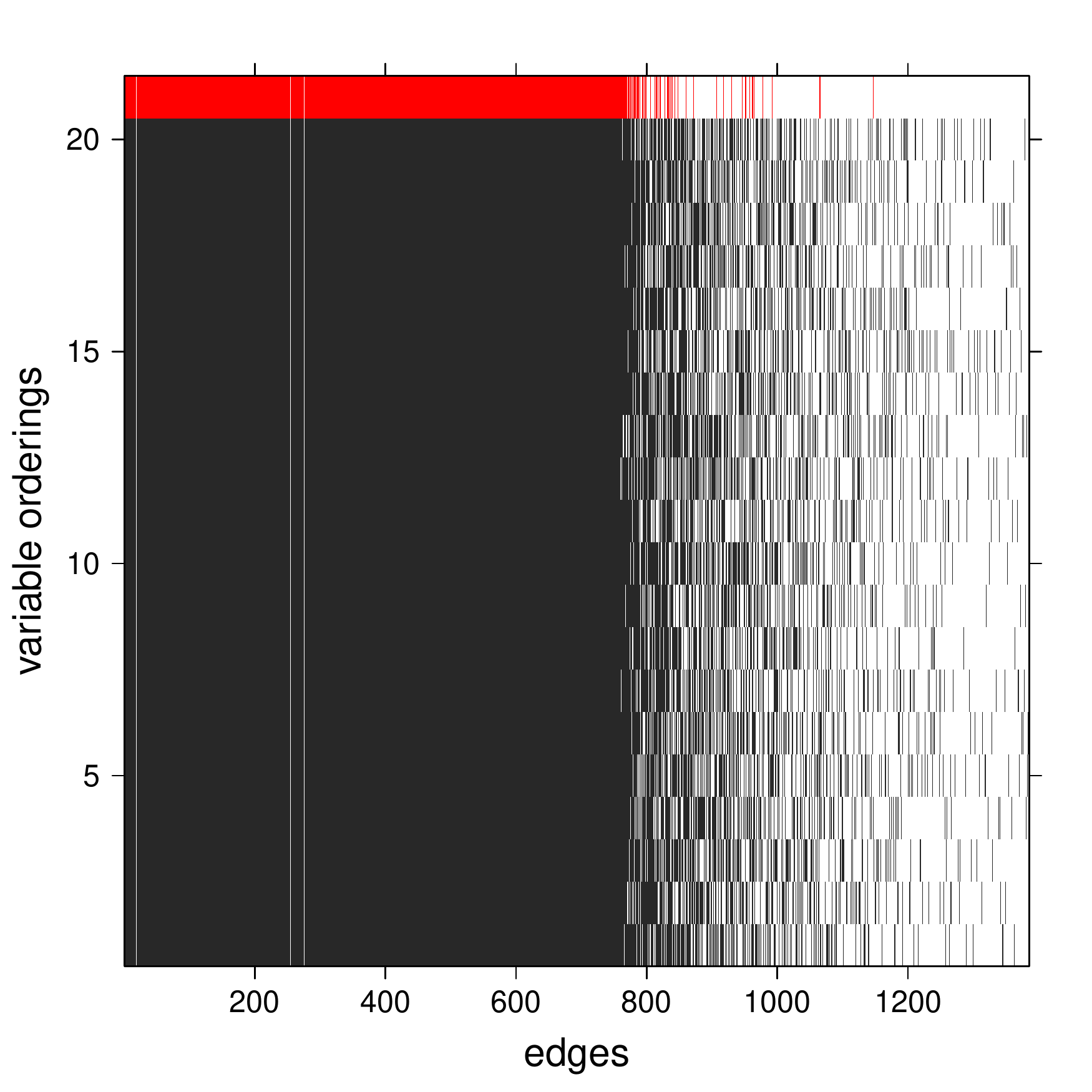}
     \label{fig.levelplot.02}
  }\qquad
   \subfigure[$\alpha=0.04$]{
     \includegraphics[scale=0.3,angle=0]{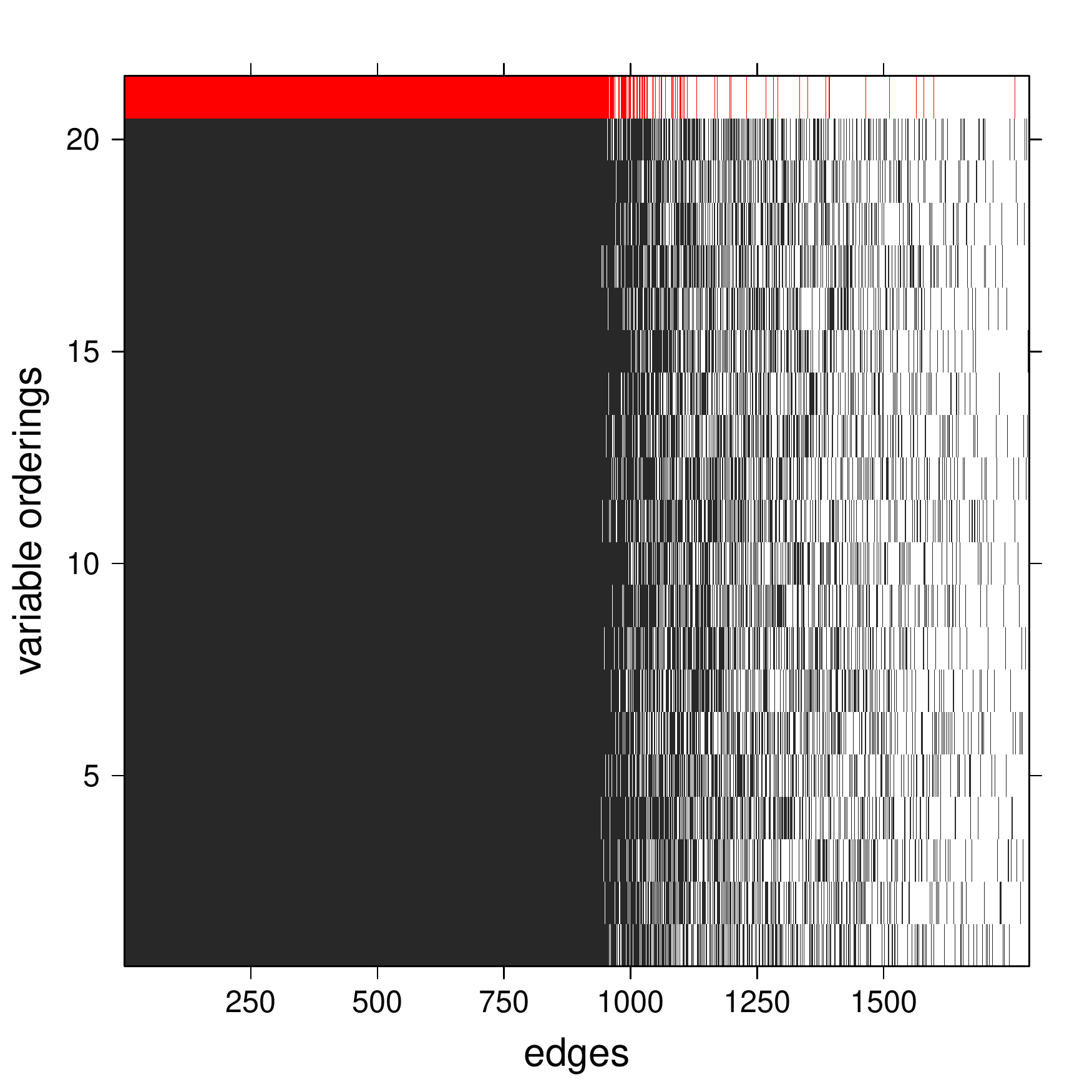}
     \label{fig.levelplot.04}
  }
  \caption{Estimated edges with the PC-algorithm (black) for 20 random
    orderings on the variables, as well as with the PC-stable algorithm
    (red, shown as variable ordering 21),
    for a random graph from the high-dimensional setting. The edges along the $x$-axes are ordered
    according to their presence in the 20 random orderings using the
    original PC-algorithm. Edges that did not occur for any of the
    orderings were omitted.}
  \label{fig.levelplot.alphas}
\end{figure}

We first consider the number of estimated errors in the skeleton, shown in
the first row of Figure \ref{fig.sim.skelet}. We see that PC-stable and
RFCI-stable return estimated skeletons with fewer edges than PC and RFCI,
for all values of $\alpha$. This can be explained by the fact that
PC-stable and RFCI-stable tend to perform more tests than PC and RFCI (see
also Appendix \ref{app.sim.time.pc}). Moreover, for both algorithms smaller
values of $\alpha$ lead to sparser outputs, as expected. When interpreting
these plots, it is useful to know that the average number of edges in the true
CPDAGs and PAGs are 1000 and 919, respectively. Thus, for both algorithms
and almost all values of $\alpha$, the estimated graphs are too sparse. 

The second row of Figure \ref{fig.sim.skelet} shows that PC-stable and
RFCI-stable make fewer errors in the estimation of the skeletons than PC
and RFCI, for all values of $\alpha$. This may be somewhat surprising given
the observations above: for most values of $\alpha$ the output of PC and
RFCI is too sparse, and the output of PC-stable and RFCI-stable is even
sparser. Thus, it must be that PC-stable and RFCI-stable yield a large
decrease in the number of false positive edges that outweighs any increase
in false negative edges. 

This conclusion is also supported by the last row of Figure
\ref{fig.sim.skelet}, which shows that PC-stable and RFCI-stable have a
better True Discovery Rate (TDR) for all values of $\alpha$, where the TDR
is defined as the proportion of edges in the estimated skeleton that are
also present in the true skeleton.



Figure \ref{fig.levelplot.alphas} shows more detailed results
for the estimated skeletons of PC and PC-stable for one of the 250 graphs
(randomly chosen), for four different values of $\alpha$. For each value of
$\alpha$ shown, PC yielded a certain number of
stable edges that were present for all 20 variable orderings, but also a
large number of extra edges that seem to pop in or out randomly for
different orderings. The PC-stable algorithm yielded far fewer edges (shown
in red), and roughly captured the edges that were stable among the
different variable orderings for PC. The results for RFCI and RFCI-stable
show an equivalent picture. 


\subsection{Estimation of the CPDAGs and PAGs}\label{sec.sim.outputs}

We now consider estimation of the CPDAG or PAG, that is, also taking into
account the edge orientations. For CPDAGs, we summarize the number of
estimation errors using the Structural Hamming Distance (SHD), which is
defined as the minimum number of edge insertions, deletions, and flips that
are needed in order to transform the estimated graph into the true one. For
PAGs, we summarize the number of estimation errors by counting the number
of errors in the edge marks, which we call ``SHD edge marks". For example,
if an edge $X_i \to X_j$ is present in the estimated PAG but the true PAG
contains $X_i \leftrightarrow X_j$, then that counts as one error, while it
counts as two errors if the true PAG contains, for example, $X_i \leftarrow
X_j$ or $X_i$ and $X_j$ are not adjacent. 

\begin{figure}[!h]\centering%
  \subfigure[Estimation performance of (L)PC(-stable), (L)CPC(-stable), and
    (L)MPC(-stable) for the CPDAGs in terms of SHD.]{
     \includegraphics[scale=0.36,angle=0]{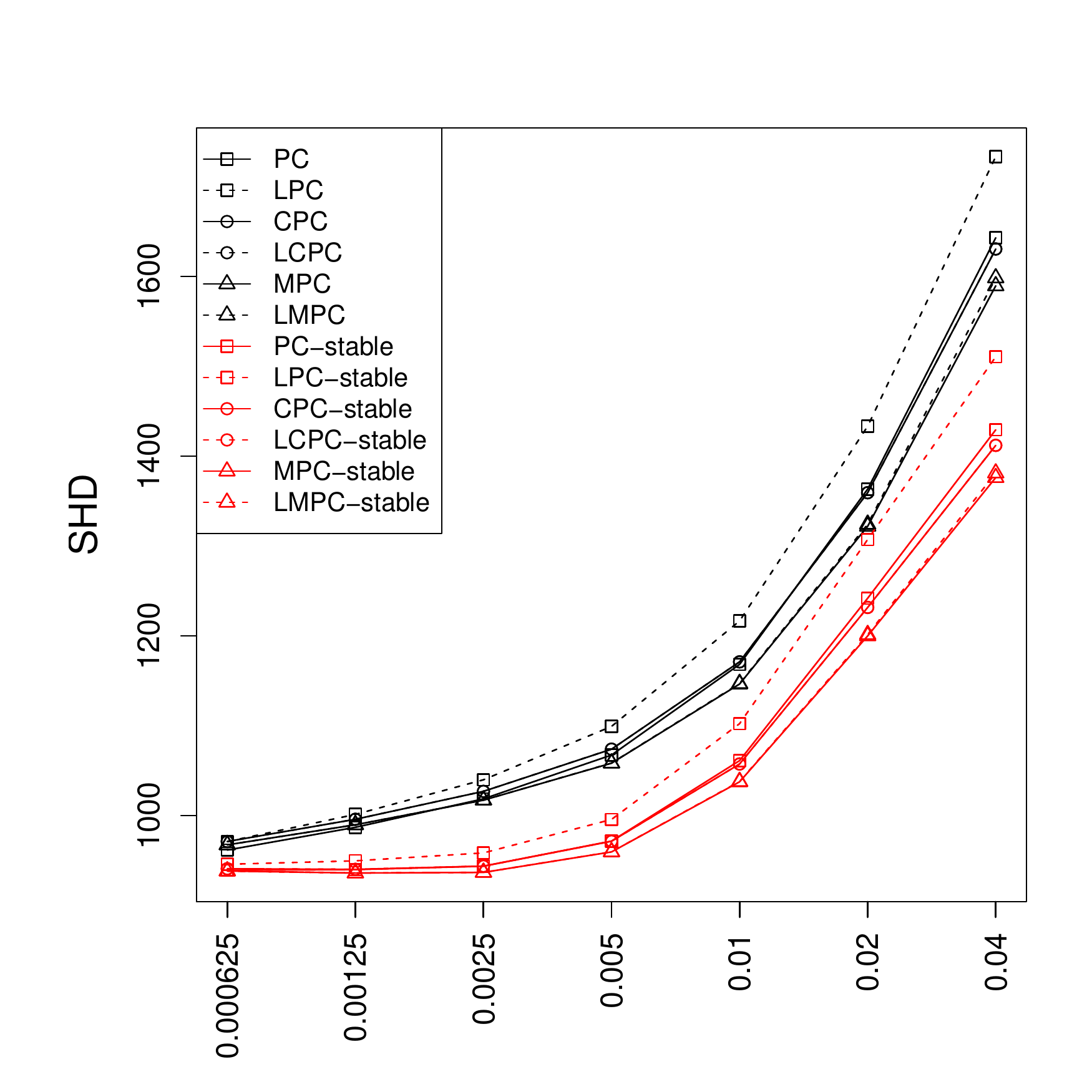}
     \label{fig.cpdags.shd}
  }\qquad
  \subfigure[Estimation performance of RFCI(-stable),
    CRFCI(-stable), and MRFCI(-stable) for the PAGs in terms of SHD edge
    marks.]{
     \includegraphics[scale=0.36,angle=0]{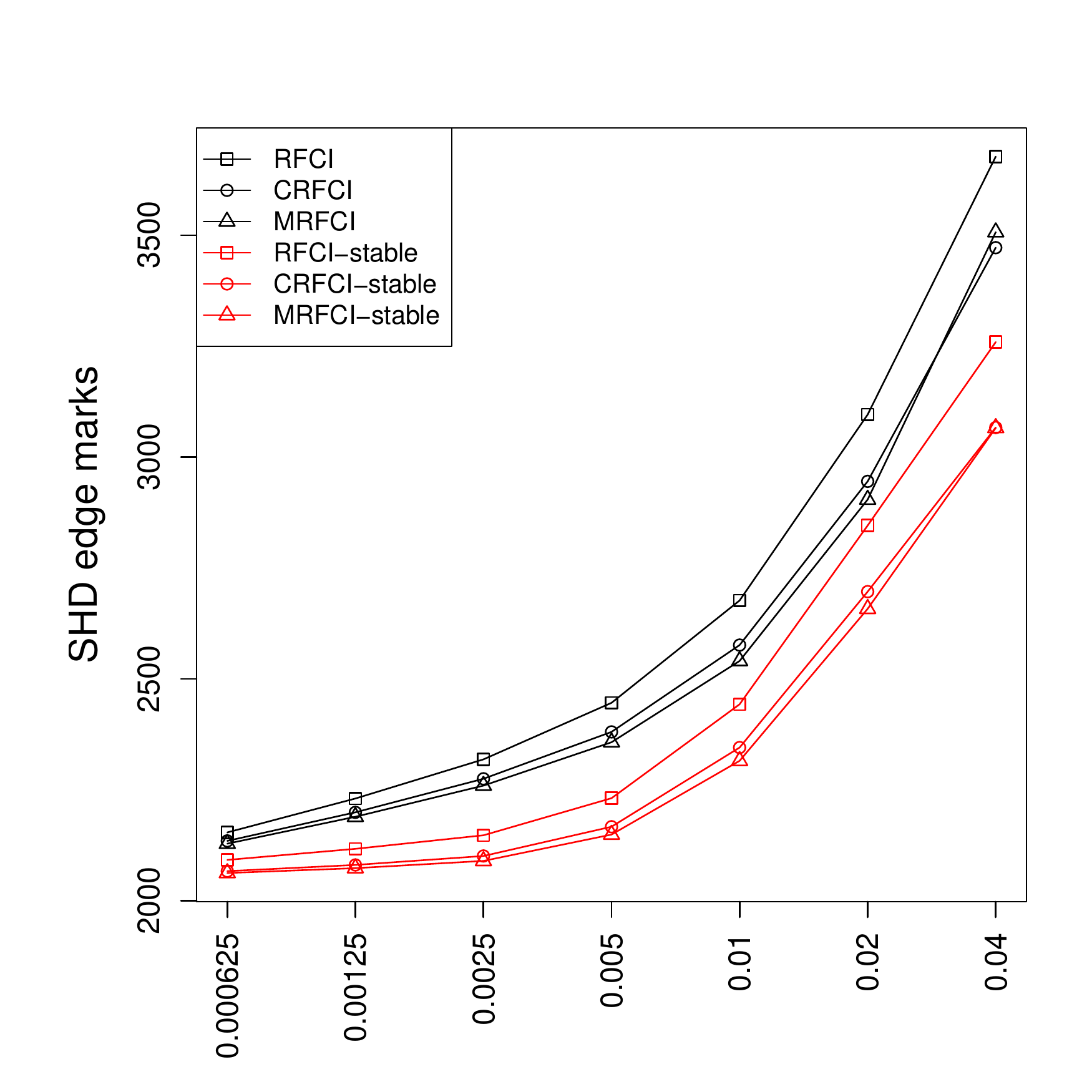}
     \label{fig.pags.shd}
  }
\caption{Estimation performance in terms of SHD for the CPDAGs and SHD edge
  marks for the PAGs, shown
  as averages over 250 randomly generated graphs and 20 random
  variable orderings per graph, for different values of
  $\alpha$ ($x$-axis displayed in $\log$ scale).} 
  \label{fig.sim.shd}
\end{figure}

Figure \ref{fig.sim.shd} shows that the PC-stable and
RFCI-stable versions have significantly better estimation performance than
the versions with the original skeleton, for all values of
$\alpha$. Moreover, MPC(-stable) and CPC(-stable) perform better than
PC(-stable), as do MRFCI(-stable) and CRFCI(-stable) with respect to
RFCI(-stable). Finally, for PC the idea to introduce bi-directed edges and
lists in LCPC(-stable) and LMPC(-stable) seems to make little difference.

Figure \ref{fig.sim.shd.variance} shows the variance in SHD for the CPDAGs,
see Figure \ref{fig.cpdags.variance}, and the variance in SHD edge marks
for the PAGs, see Figure \ref{fig.pags.variance}, both computed over
the 20 random variable orderings per graph, and then plotted as averages
over the 250 randomly generated graphs for the different values of
$\alpha$. The PC-stable and RFCI-stable versions yield significantly
smaller variances than their counterparts with unstabilized
skeletons. Moreover, the variance is further reduced for (L)CPC-stable and
(L)MPC-stable, as well as for CRFCI-stable and MRFCI-stable, as expected.

\begin{figure}[!h]\centering%
   \subfigure[Estimation performance of (L)PC(-stable), (L)CPC(-stable), and
    (L)MPC(-stable) for the CPDAGs in terms of the variance of SHD.]{
     \includegraphics[scale=0.36,angle=0]{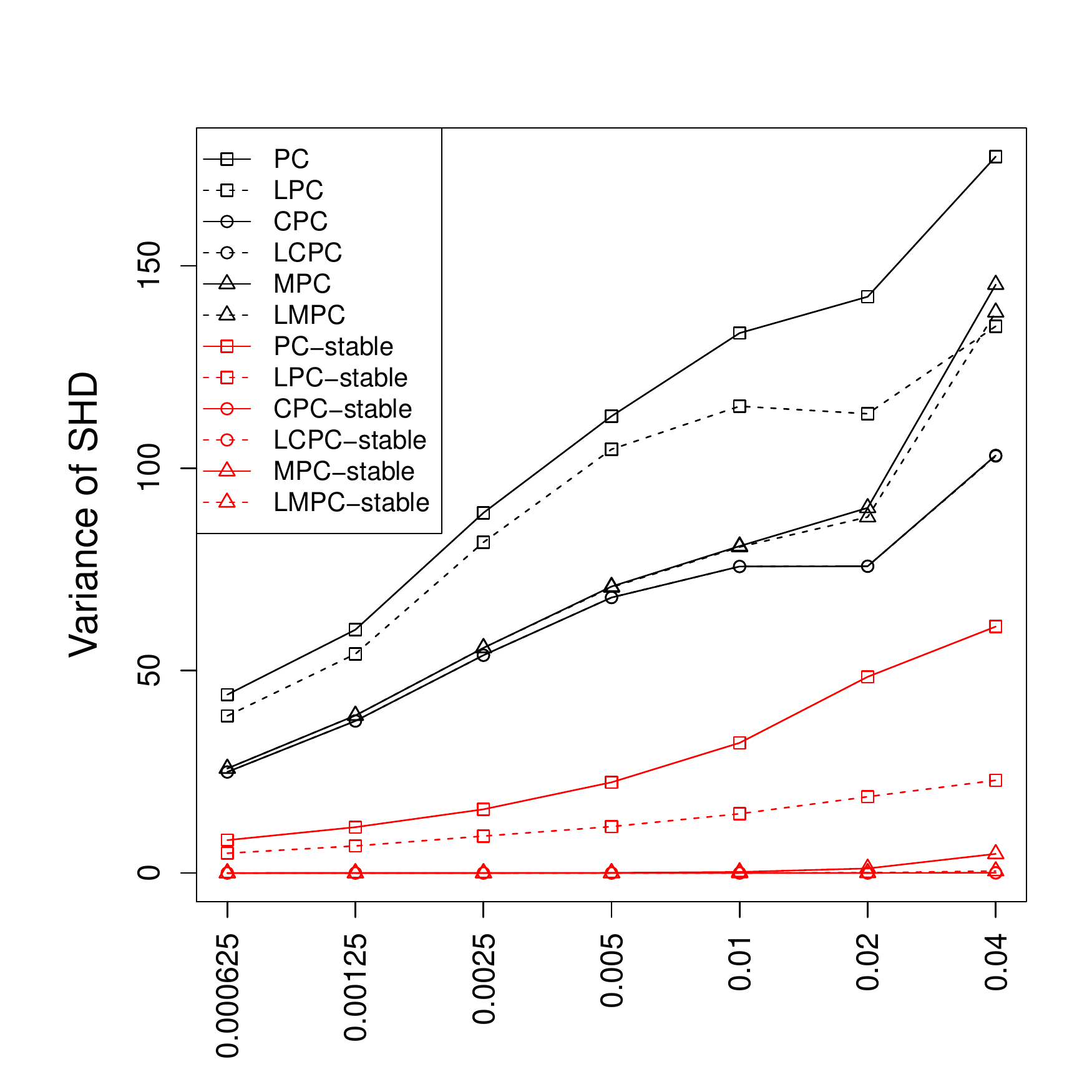}
     \label{fig.cpdags.variance}
  }\qquad
\subfigure[Estimation performance of RFCI(-stable),
    CRFCI(-stable), and MRFCI(-stable) for the PAGs in terms of the
    variance SHD edge marks.]{
     \includegraphics[scale=0.36,angle=0]{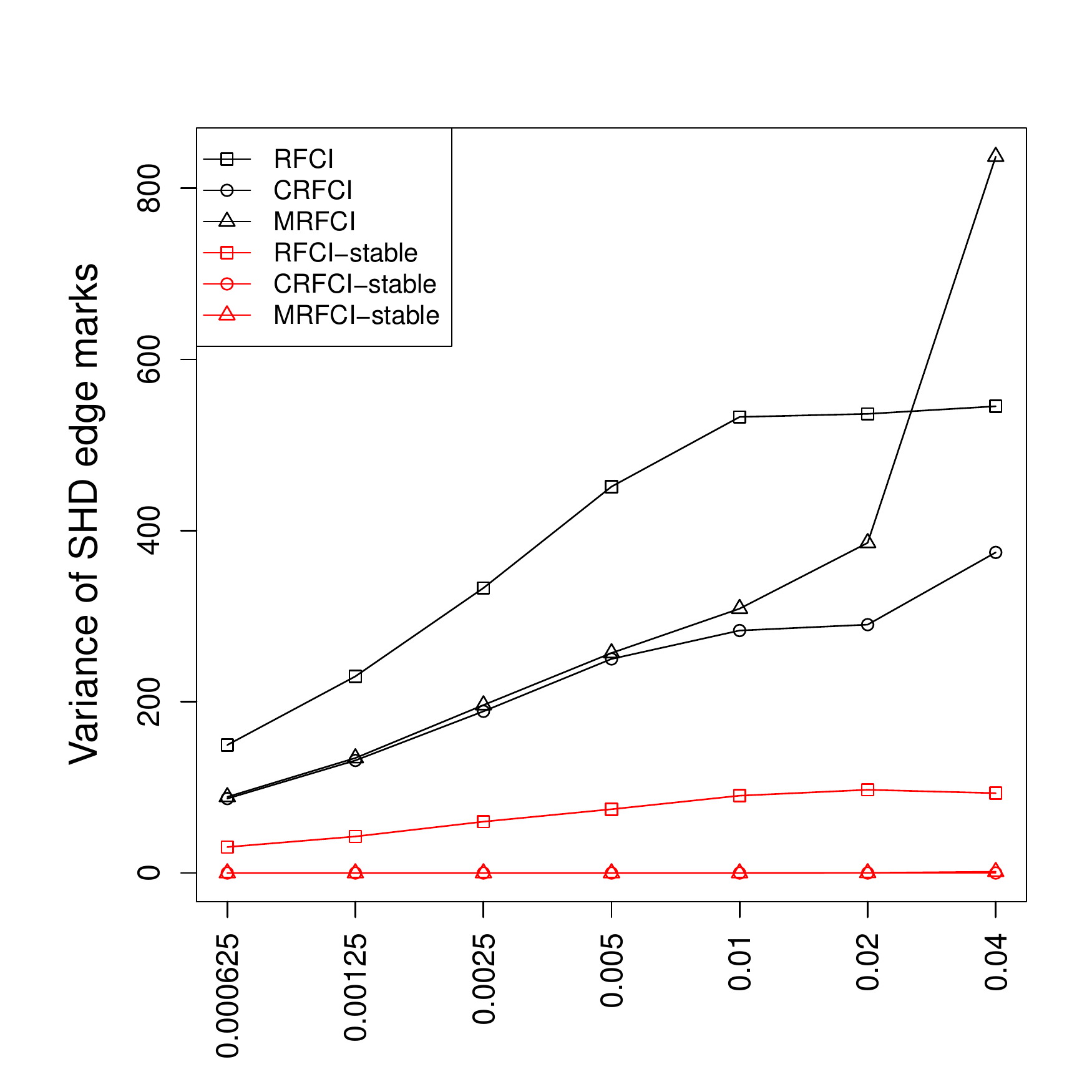}
     \label{fig.pags.variance}
  }
  \caption{Estimation performance in terms of the variance of the SHD for
    the CPDAGs and SHD edge marks for the PAGs over the 20 random variable
    orderings per graph, shown as averages over 250 randomly generated
    graphs, for different values of $\alpha$ ($x$-axis displayed in $\log$
    scale).} 
  \label{fig.sim.shd.variance}
\end{figure}

Figure \ref{fig.sim.directed} shows receiver operating characteristic
curves (ROC) for the directed edges in the estimated CPDAGs (Figure
\ref{fig.cpdags.directed}) and PAGs (Figure
\ref{fig.pags.directed}). We see that finding directed edges is much harder
in settings that allow for hidden variables, as shown by the lower true
positive rates (TPR) and higher false positive rates (FPR) in Figure
\ref{fig.pags.directed}. Within each figure, the different versions of the
algorithms perform roughly similar, and MPC-stable and MRFCI-stable yield
the best ROC-curves. 

\begin{figure}[!h]\centering%
    \subfigure[Estimation performance of (L)PC(-stable), (L)CPC(-stable), and
    (L)MPC(-stable) for the CPDAGs in terms of TPR and FPR for the
   directed edges.]{
     \includegraphics[scale=0.36,angle=0]{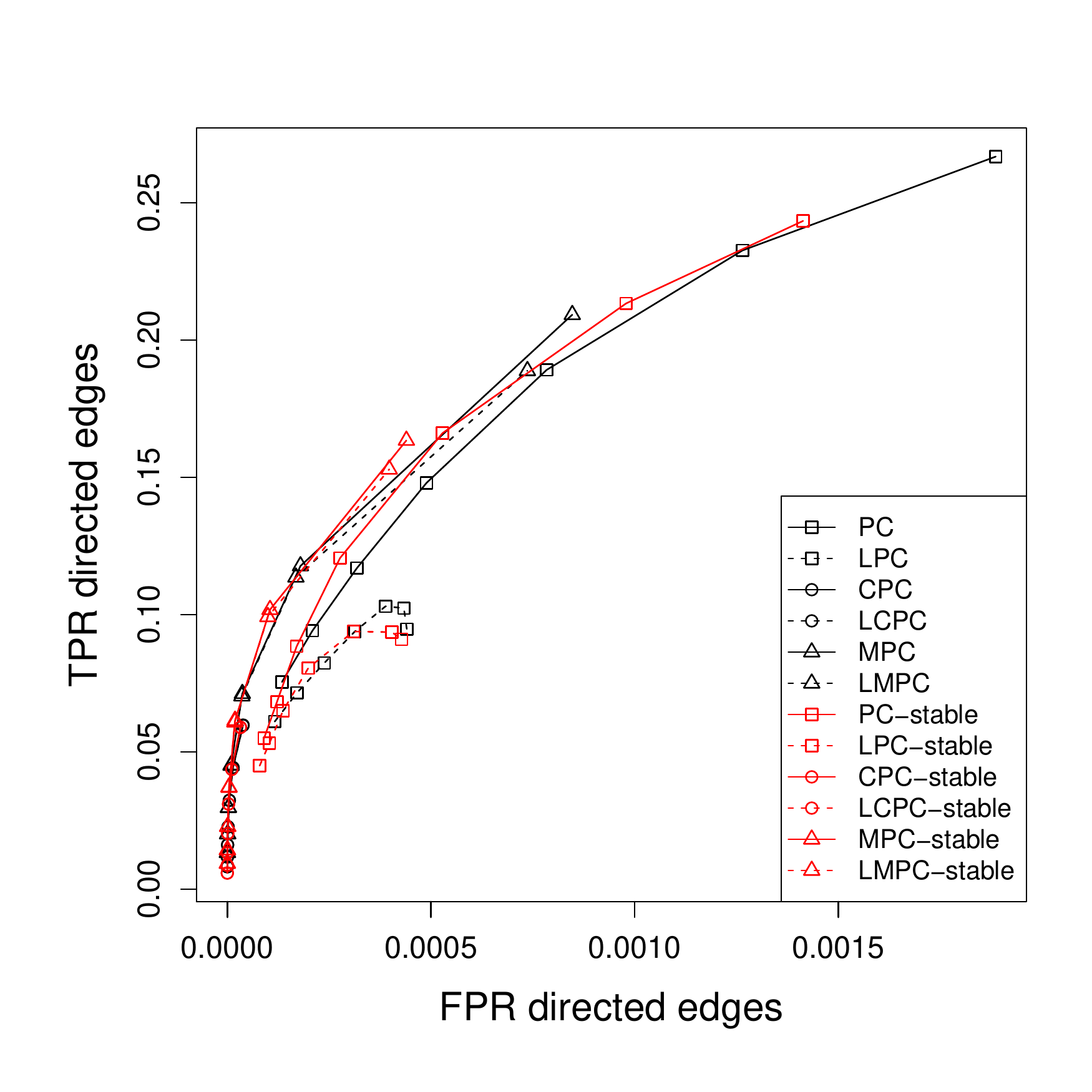}
     \label{fig.cpdags.directed}

  }\qquad
  \subfigure[Estimation performance of RFCI(-stable),
    CRFCI(-stable), and MRFCI(-stable) for the PAGs in terms of TPR and FPR
    for the directed edges.]{
     \includegraphics[scale=0.36,angle=0]{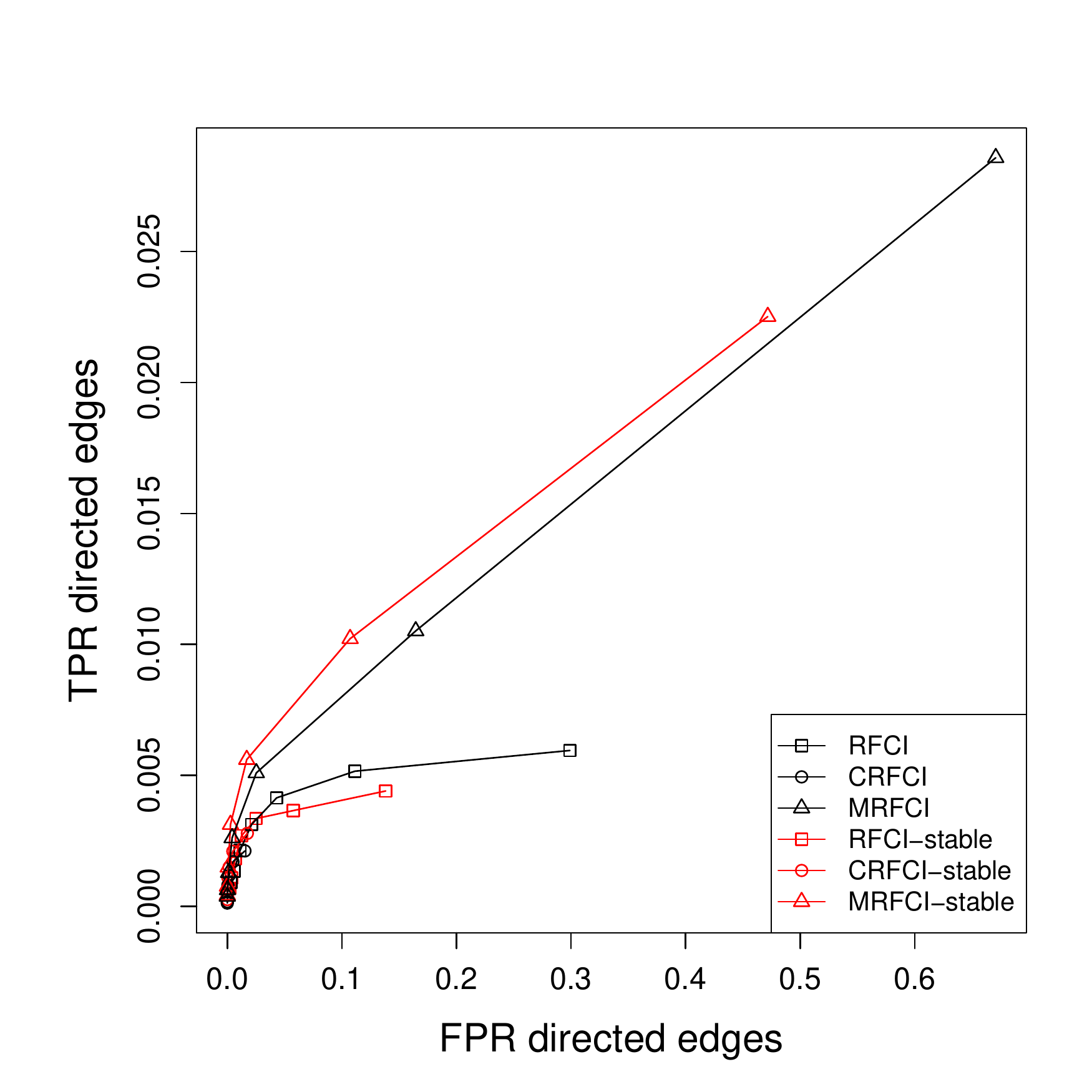}
     \label{fig.pags.directed}

  }
  \caption{Estimation performance in terms of TPR and FPR for the
   directed edges in CPDAGs and PAGs, shown as averages over 250 randomly
   generated graphs and 20 random variable orderings per graph, where every
   curve is plotted with respect to the different values of
  $\alpha$.} 
  \label{fig.sim.directed}
\end{figure}

\section{Yeast gene expression data}\label{sec.simulation.realdata}

We also compared the PC and PC-stable algorithms on
the yeast gene expression data \citep{Hughes00} that were already briefly
discussed in  Section \ref{sec.introduction}. In Section
\ref{sec.yeastdata.pc} we consider estimation of the skeleton of the CPDAG,
and in Section \ref{sec.yeastdata.ida} we consider estimation of bounds on
causal effects.

We used the same pre-processed data as in
\cite{MaathuisColomboKalischBuhlmann10}. These contain: (1) expression
measurements of 5361 genes for 63 wild-type cultures (observational data of
size $63 \times 5361$), and (2) expression measurements of the same 5361
genes for 234 single-gene deletion mutant strains (interventional data of
size $234 \times 5361$).

\subsection{Estimation of the skeleton}\label{sec.yeastdata.pc}

We applied PC and PC-stable to the observational data.
We saw in Section \ref{sec.introduction} that the PC-algorithm
yielded estimated skeletons that were highly dependent on
the variable ordering, as shown in black in Figure
\ref{fig.skeletanalysis.1} for the 26 variable orderings (the original
ordering and 25 random orderings of the variables). The PC-stable algorithm
does not suffer from this order-dependence, and consequently all these 26
random orderings over the variables produce the same skeleton which is
shown in the figure in red. We see that the PC-stable algorithm yielded a
far sparser skeleton (2086 edges for PC-stable versus 5015-5159 edges for
the PC-algorithm, depending on the variable ordering). Just as in the
simulations in Section \ref{sec.simulation.results} the order-independent
skeleton from the PC-stable algorithm roughly captured the edges that were
stable among the different order-dependent skeletons estimated from
different variable orderings for the original PC-algorithm.

\begin{figure}[!h]\centering%
  \subfigure[As Figure \ref{fig.levelplot.1}, plus the edges occurring in the
  unique estimated skeleton using the PC-stable algorithm over the same 26
  variable orderings (red, shown as variable ordering 27).]{
     \includegraphics[scale=0.32,angle=0]{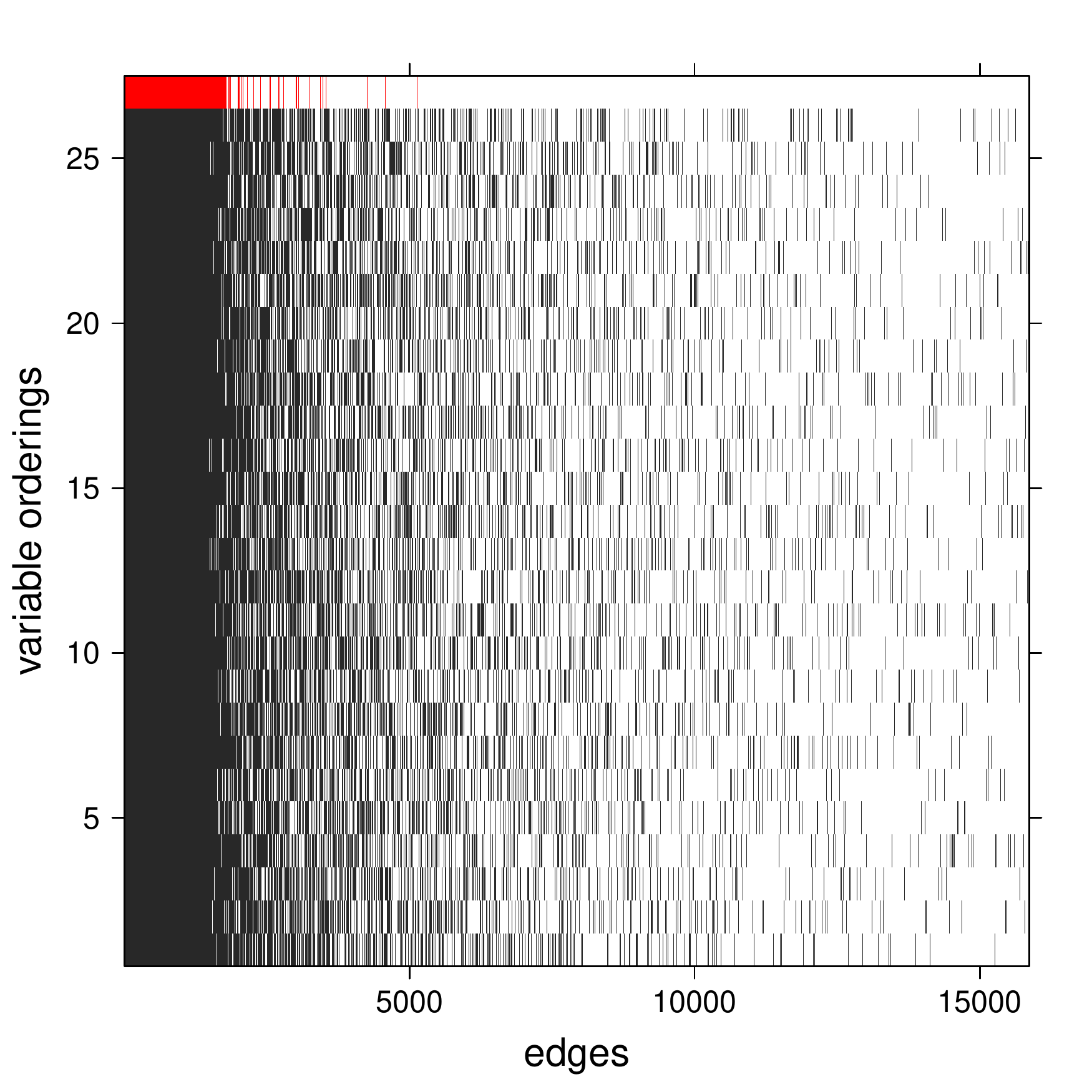}
     \label{fig.levelplot.2}
  }\qquad
   \subfigure[The step function shows the proportion of the 26 variable orderings
   in which the edges were present for the original PC-algorithm, where the
   edges are ordered as in Figure \ref{fig.levelplot.2}. The red bars show
   the edges present in the estimated skeleton using the PC-stable algorithm.]{
     \includegraphics[scale=0.34,angle=0]{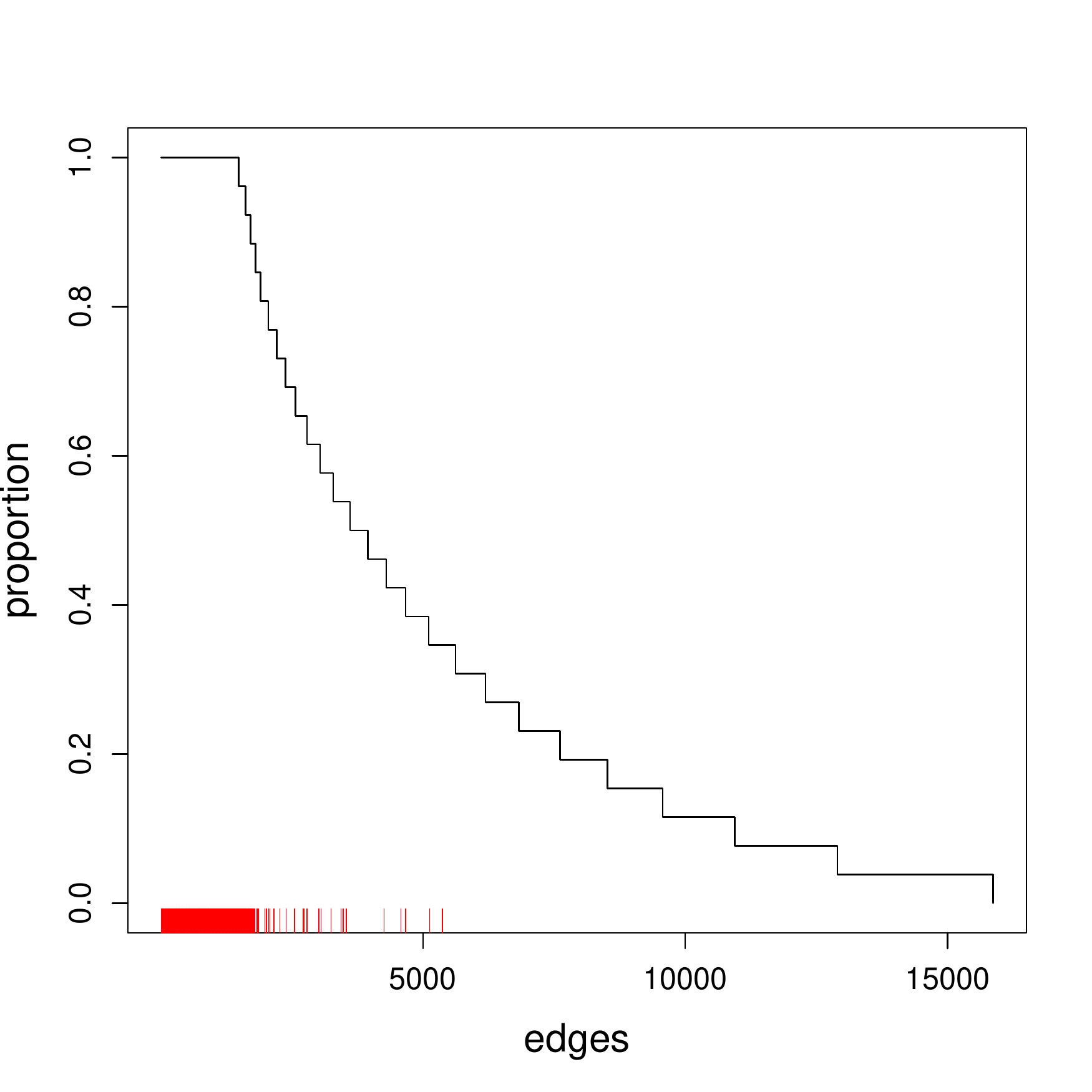}
     \label{fig.percplot.1}
  }
  \caption{Analysis of estimated skeletons of the CPDAGs for the yeast gene
    expression data \citep{Hughes00}, using the PC and PC-stable
    algorithms. The PC-stable algorithm yields an order-independent
    skeleton that roughly captures the edges that were stable among the
    different variable orderings for the original PC-algorithm.}
  \label{fig.skeletanalysis.1}
\end{figure}

To make ``captured the edges that were stable" somewhat more precise, we
defined the following two sets: Set 1 contained all edges (directed edges)
that were present for all 26 variable orderings using the original PC-algorithm,
and Set 2 contained all edges (directed edges) that were present for at least
$90\%$ of the 26 variable orderings using the original PC-algorithm. Set 1
contained 1478 edges (7 directed edges), while Set 2 contained 1700 edges
(20 directed edges).

Table \ref{table.tpfp} shows how well the PC and PC-stable algorithms
could find these stable edges in terms of number of edges in the estimated
graphs that are present in Sets 1 and 2 (IN), and the number of edges
in the estimated graphs that are not present in Sets 1 and 2 (OUT). We see that the number of estimated edges present in
Sets 1 and 2 is about the same for both algorithms, while the output of the
PC-stable algorithm has far fewer edges which are not present in the two
specified sets.

\begin{table}[h]
\centering
\begin{tabular}{cc|r|r|r|r|}
\cline{3-6}
& & \multicolumn{2}{c|}{Edges} & \multicolumn{2}{c|}{Directed edges}\\ \cline{3-6}
& & PC-algorithm & PC-stable algorithm & PC-algorithm & PC-stable algorithm \\ \cline{1-6}
\multicolumn{1}{|c}{\multirow{2}{*}{Set 1}} & \multicolumn{1}{|c|}{IN} &
1478 (0) & 1478 (0) & 7 (0) & 7 (0)     \\ \cline{2-6}
\multicolumn{1}{|c}{} & \multicolumn{1}{|c|}{OUT} & 3606 (38) & 607 (0) &
4786 (47)  & 1433 (7)   \\ \cline{1-6}
\multicolumn{1}{|c}{\multirow{2}{*}{Set 2}} &
\multicolumn{1}{|c|}{IN} & 1688 (3) & 1688 (0) & 19 (1) & 20 (0)\\ \cline{2-6}
\multicolumn{1}{|c}{} &
\multicolumn{1}{|c|}{OUT} & 3396 (39) & 397 (0) & 4774 (47) & 1420 (7) \\
\cline{1-6}
\end{tabular}
\caption{Number of edges in the estimated graphs that are present in Sets 1 and 2 (IN), and the number of edges
in the estimated graphs that are not present in Sets 1 and 2 (OUT). The results are shown as averages (standard deviations)
  over the 26 variable orderings.}
  \label{table.tpfp}
\end{table}

\subsection{Estimation of causal effects}\label{sec.yeastdata.ida}

We used the interventional data as the gold standard for
estimating the total causal effects of the 234 deleted genes on the
remaining 5361 (see \cite{MaathuisColomboKalischBuhlmann10}).
We then defined the top $10\%$ of the largest effects in absolute value
as the target set of effects, and we evaluated how well IDA
\citep{MaathuisKalischBuehlmann09,MaathuisColomboKalischBuhlmann10}
identified these effects from the observational data.

We saw in Figure \ref{fig.roc.old} that IDA with the original
PC-algorithm is highly order-dependent. Figure \ref{fig.roc.new001} shows
the same analysis with PC-stable (solid black lines). We see that using
PC-stable generally yielded better and more stable results than the
original PC-algorithm. Note that some of the curves for PC-stable are worse
than the reference curve of \cite{MaathuisColomboKalischBuhlmann10} towards
the beginning of the curves. This can be explained by the fact that the
original variable ordering seems to be especially ``lucky'' for this part
of the curve (cf. Figure \ref{fig.roc.old}). There is still variability in
the ROC curves in Figure \ref{fig.roc.new001} due to the order-dependent
v-structures (because of order-dependent separating sets) and orientations in the
PC-stable algorithm, but this variability is less prominent than in
Figure \ref{fig.roc.old}. Finally, we see that there are 3 curves that
produce a very poor fit.

Using CPC-stable and MPC-stable helps in
stabilizing the outputs, and in fact all the 25 random variable orderings
produce almost the same CPDAGs for both modifications. Unfortunately, these
estimated CPDAGs are almost entirely undirected (around 90 directed edges
among the 2086 edges) which leads to a large equivalence class and
consequently to a poor performance in IDA, see the dashed black line in Figure
\ref{fig.roc.new001} which corresponds to the 25 random variable orderings
for both CPC-stable and MPC-stable algorithms.

\begin{figure}[!h]\centering%
  \includegraphics[scale=0.50,angle=0]{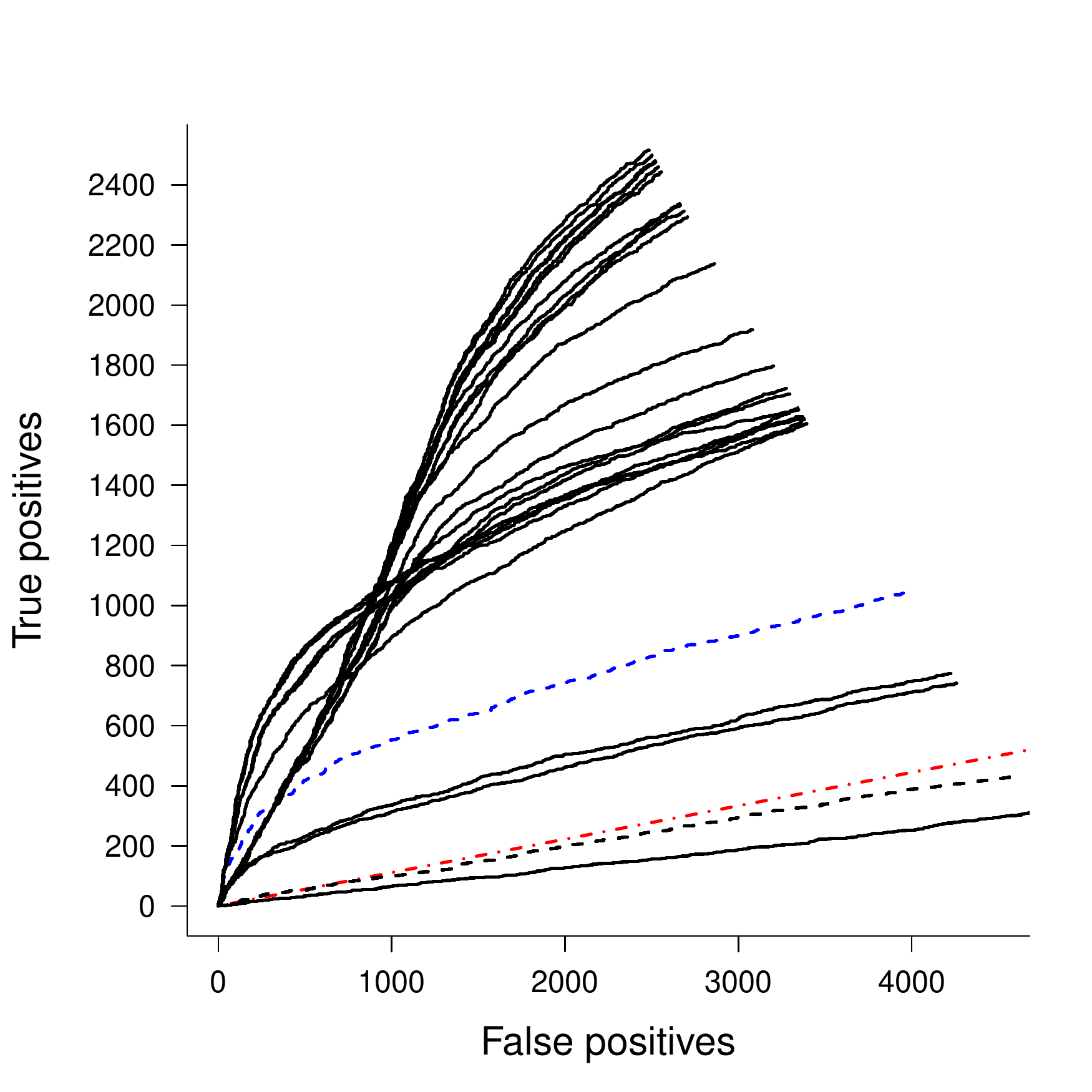}
  \caption{ROC curves corresponding to the 25 random orderings of the
  variables for the analysis of yeast gene expression data
  \citep{Hughes00}, where the curves are generated as in
  \cite{MaathuisColomboKalischBuhlmann10} but using PC-stable (solid black
  lines) and MPC-stable and CPC-stable (dashed black lines) with
  $\alpha=0.01$. The ROC curves from
  \cite{MaathuisColomboKalischBuhlmann10} (dashed blue) and the one for
  random guessing (dashed-dotted red) are shown as references. The
  resulting causal rankings are less order-dependent.}
  \label{fig.roc.new001}
\end{figure}


Another possible solution for the order-dependence orientation issues would
be to use stability selection \citep{MeinshausenBuehlmann10} to find the
most stable orientations among the runs. In fact, \cite{StekhovenEtAll12}
already proposed a combination of IDA and stability selection which led to
improved performance when compared to IDA alone, but they used the original
PC-algorithm and did not permute the variable ordering. We present here a
more extensive analysis, where we consider the PC-algorithm (black lines),
the PC-stable algorithm (red lines), and the MPC-stable algorithm (blue
lines). Moreover, for each one of these algorithms we propose three
different methods to estimate the CPDAGs and the causal effects: (1) use
the original ordering of the variables (solid lines); (2) use the same
methodology used in \cite{StekhovenEtAll12} with 100 stability selection
runs but without permuting the variable orderings (labelled as + SS; dashed
lines); and (3) use the same methodology used in \cite{StekhovenEtAll12}
with 100 stability selection runs but permuting the variable orderings in
each run (labelled as + SSP; dotted lines). The results are shown in Figure
\ref{fig.hughesetal.stabsel} where we investigate the performance for the
top 20000 effects instead of the 5000 as in Figures \ref{fig.roc.old} and
\ref{fig.roc.new001}.

\begin{figure}[!h]\centering%
     \includegraphics[scale=0.5,angle=0]{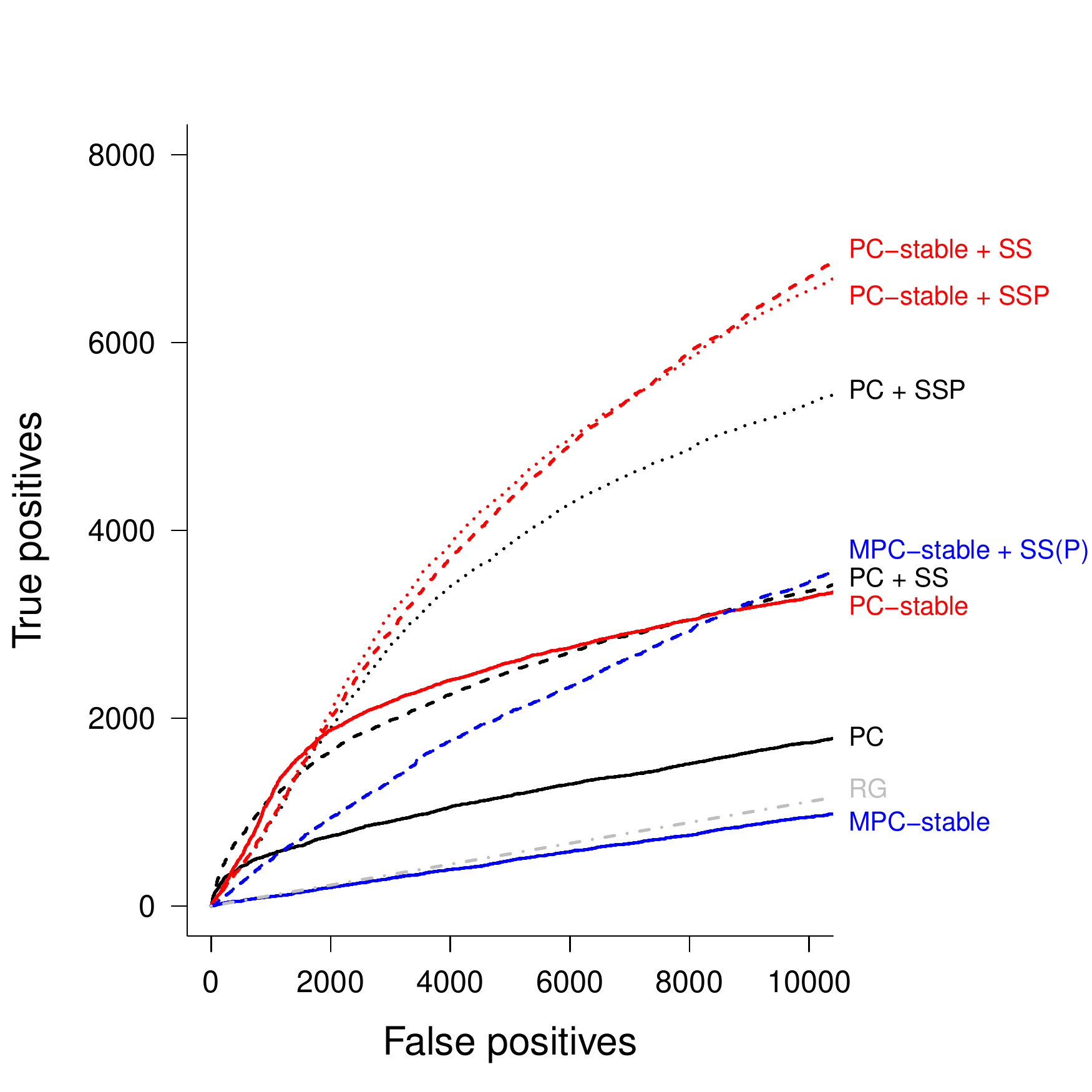}
  \caption{Analysis of the yeast gene expression data \citep{Hughes00} for PC,
    PC-stable, and MPC-stable algorithms using the original ordering over
    the variables (solid lines), using 100 runs stability selection without
    permuting the variable orderings, labelled as + SS (dashed lines), and
    using 100 runs stability selection with permuting the variable
    orderings, labelled as + SSP (dotted lines). The grey line labelled as
    RG represents the random guessing.}
  \label{fig.hughesetal.stabsel}
\end{figure}

We see that PC with stability selection and permuted variable orderings (PC
+ SSP) loses some performance at the beginning of the curve when compared
to PC with standard stability selection (PC + SS), but it has much better
performance afterwards. The PC-stable algorithm with the original variable
ordering performs very similar to PC plus stability selection (PC +
SS) along the whole curve. Moreover, PC-stable plus stability selection
(PC-stable + SS and PC-stable + SSP), loses a bit at the beginning of the
curves but picks up much more signal later on in the curve. It is
interesting to note that for PC-stable with stability selection, it makes
little difference if the variable orderings are further permuted or not,
even though PC-stable is not fully order-independent (see Figure
\ref{fig.roc.new001}). In fact, PC-stable plus stability selection (with or
without permuted variable orderings) produces the best fit over all results.

\section*{Acknowledgments}
We thank Richard Fox, Markus Kalisch, and Thomas Richardson for their valuable comments.

\section{Discussion}\label{sec.discussion}

Due to their computational efficiency, constraint-based causal structure learning algorithms are
often used in sparse high-dimensional settings. We have seen, however, that especially in these
settings the order-dependence in these algorithms is highly problematic.


In this paper, we investigated this issue systematically, and resolved the various sources of order-dependence.
There are of course many ways in which the order-dependence issues could be resolved, and we designed our modifications to be as simple as possible. Moreover, we made sure that existing high-dimensional consistency
results for PC-, FCI- and RFCI-algorithms remain valid for their
modifications under the same conditions.  We showed that our
proposed modifications yield improved and more stable estimation in sparse
high-dimensional settings for simulated data, while their performances are similar to
the performances of the original algorithms in low-dimensional settings.

Additionally to the order-dependence discussed in this
paper, there is another minor type of order-dependence in the sense that
the output of these algorithms also depends on the order in which the
final orientation rules for the edges are applied. The reason is that an edge(mark) could be eligible
for orientation by several orientation rules, and might be oriented
differently depending on which rule is applied first. In our analyses, we
have always used the original orderings in which the rules were given.

Compared to the adaptation of \cite{CanoEtAl08}, the modifications we
propose are much simpler and we made sure that they preserve existing
soundness, completeness, and high-dimensional consistency
results. Finally, our modifications can be
easily used together with other adaptations of constraint-based algorithms,
for example hybrid versions of PC with
score-based methods \cite{SinghValtorta93,SpirtesMeek95,vanDijketAl03} or
the PC$^*$ algorithm \cite[Section 5.4.2.3]{SpirtesEtAl00}.

All software is implemented in the R-package \texttt{pcalg}
\citep{KalischEtAl12}.

\appendix

\section{Additional simulation results}\label{app.sim.results}

We now present additional simulation results for low-dimensional settings (Appendix \ref{app.sim.results.low}), high-dimensional settings (Appendix \ref{app.sim.time.pc}) and medium-dimensional settings (Appendix \ref{app.sim.results.equal}).

\subsection{Estimation performance in low-dimensional
  settings}\label{app.sim.results.low}

We considered the estimation performance in low-dimensional settings
with less sparse graphs.

For the scenario without latent variables, we generated 250 random weighted DAGs
with $p=50$ and $E(N)=\{2,4\}$, as described in Section \ref{sec.sim.setup}.
For each weighted DAG we generated an i.i.d.\
sample of size $n=1000$. We then estimated each graph for 50 random
orderings of the variables, using the sample versions of (L)PC(-stable),
(L)CPC(-stable), and (L)MPC(-stable) at levels $\alpha \in \{0.000625,
0.00125, 0.0025, 0.005, 0.01, 0.02, 0.04\}$ for $E(N)=2$ and
$\alpha \in \{0.005, 0.01, 0.02, 0.04, 0.08, 0.16, 0.32\}$ for $E(N)=4$ for
the partial correlation tests. Thus, for each randomly generated graph, we
obtained 50 estimated CPDAGs from each algorithm, for each value of
$\alpha$. Figure \ref{fig.sim.pc.skelet.add1} shows the estimation
performance of PC (circle; black line) and PC-stable (triangles; red line)
for the skeleton. Figure \ref{fig.sim.pc.cpdags.add1} shows the
estimation performance of all modifications of PC and PC-stable with
respect to the CPDAGs in terms of SHD, and in terms
of the variance of the SHD over the 50 random variable orderings per graph.

\begin{figure}[h]
\centering
\includegraphics[scale=0.80]{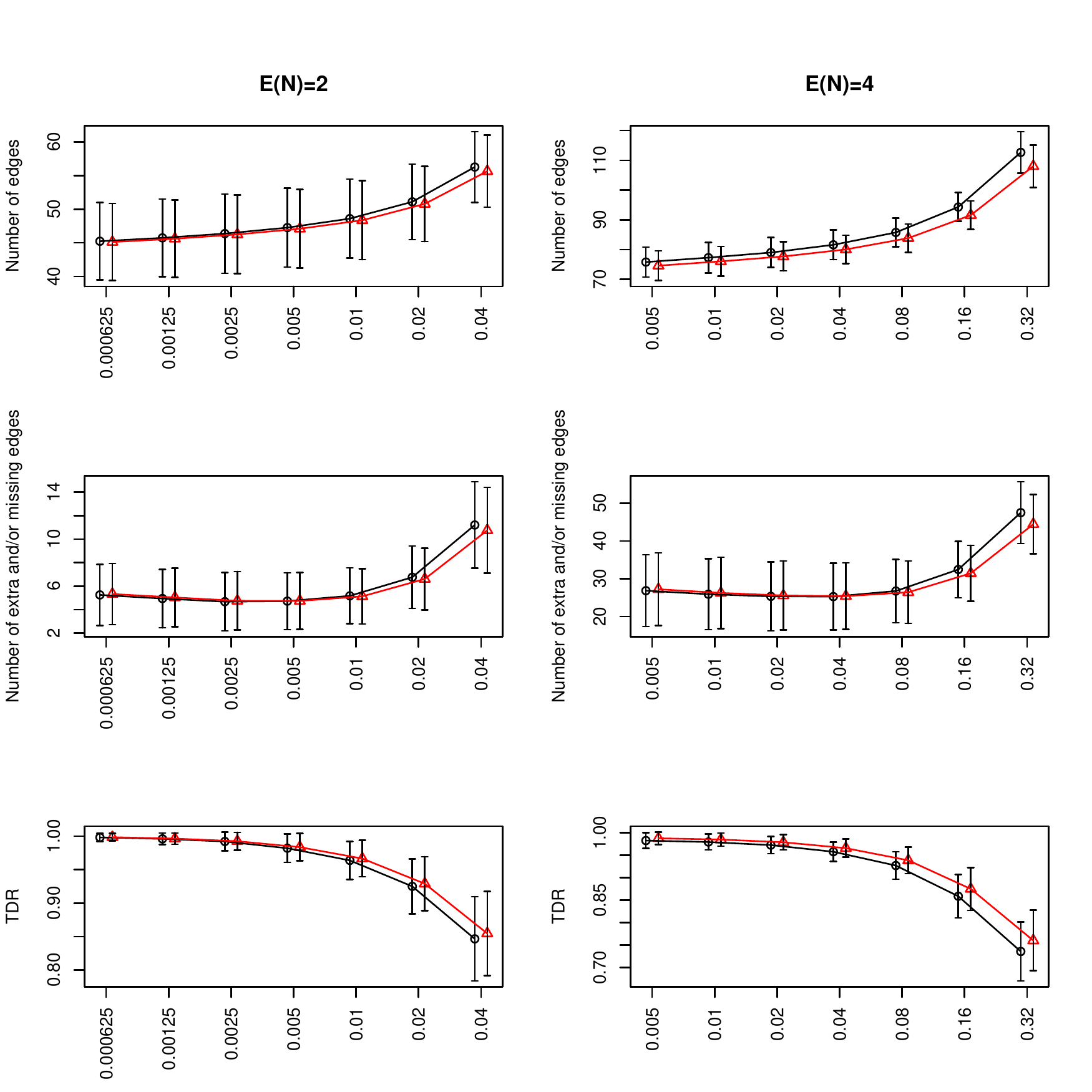}
\caption{Estimation performance of PC (circles; black line) and PC-stable
  (triangles; red line) for the skeleton of the CPDAGs, for different
  values of $\alpha$ ($x$-axis displayed in $\log$ scale) in both
  low-dimensional settings. The results are shown as averages plus or minus
  one standard deviation, computed over 250 randomly generated graphs and
  50 random variable orderings per graph, and slightly shifted up and down from
  the real values of $\alpha$ for a better visualization.}
  \label{fig.sim.pc.skelet.add1}
\end{figure}


\begin{figure}[!h]\centering%
     \includegraphics[scale=0.36,angle=0]{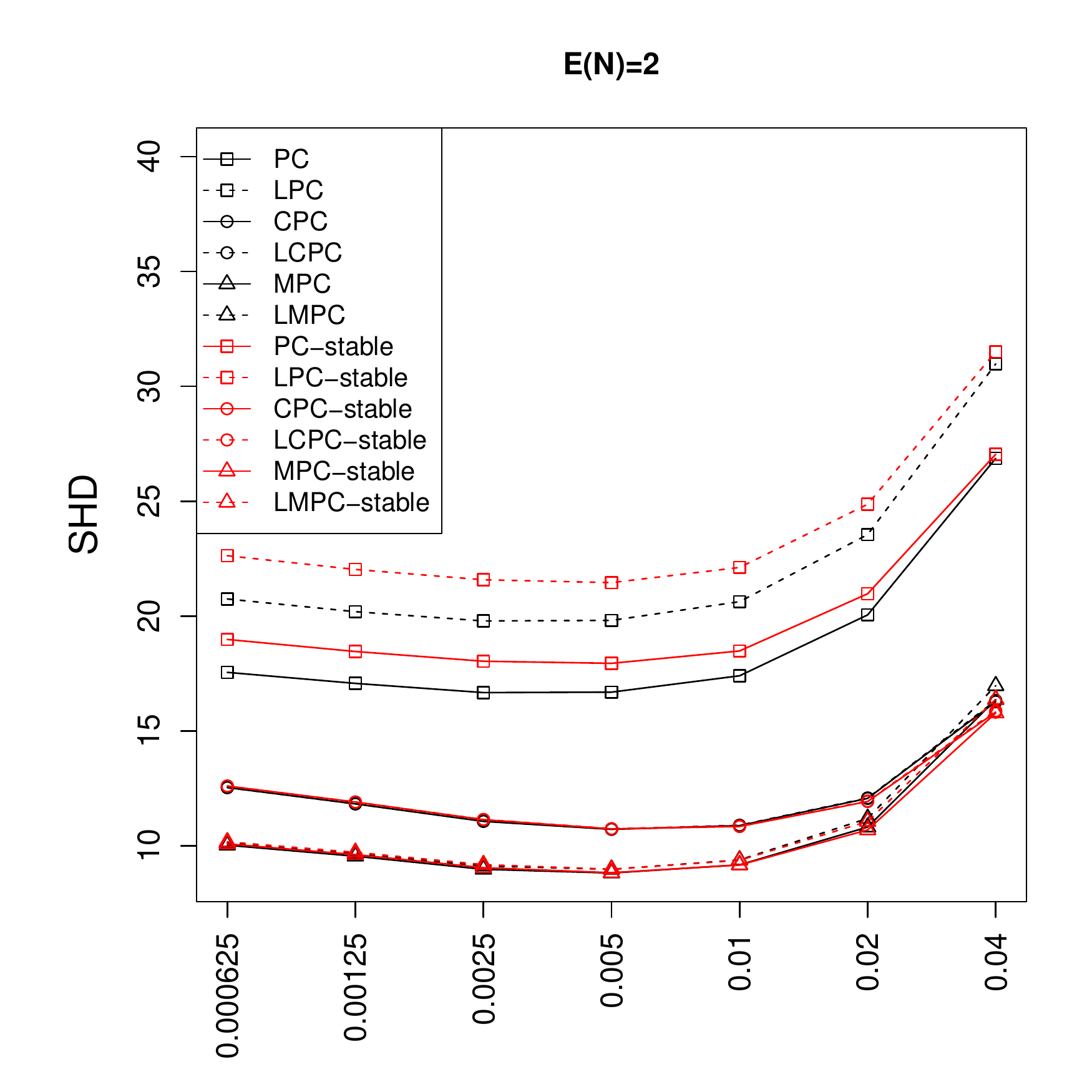}\qquad
     \includegraphics[scale=0.36,angle=0]{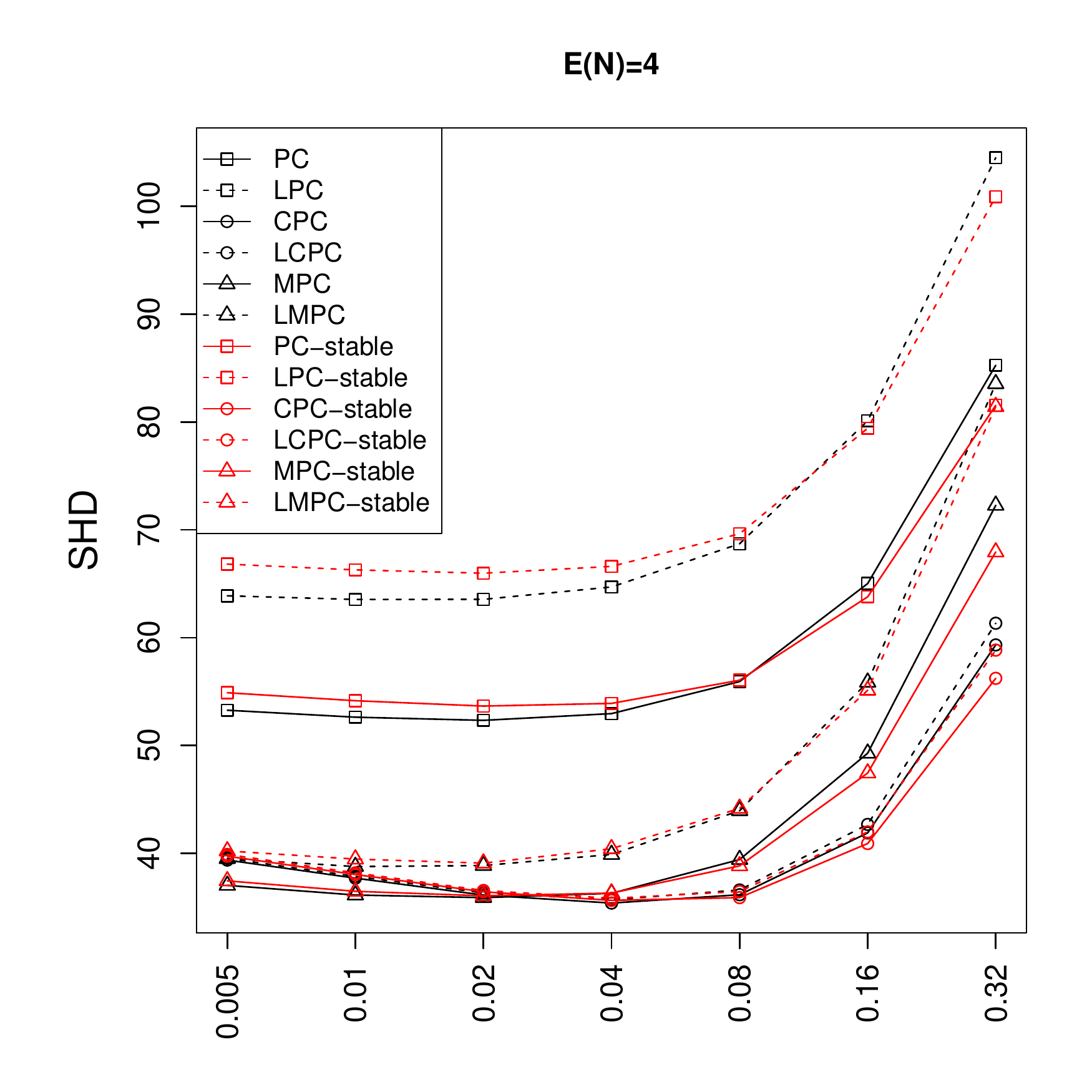}\\
     \includegraphics[scale=0.36,angle=0]{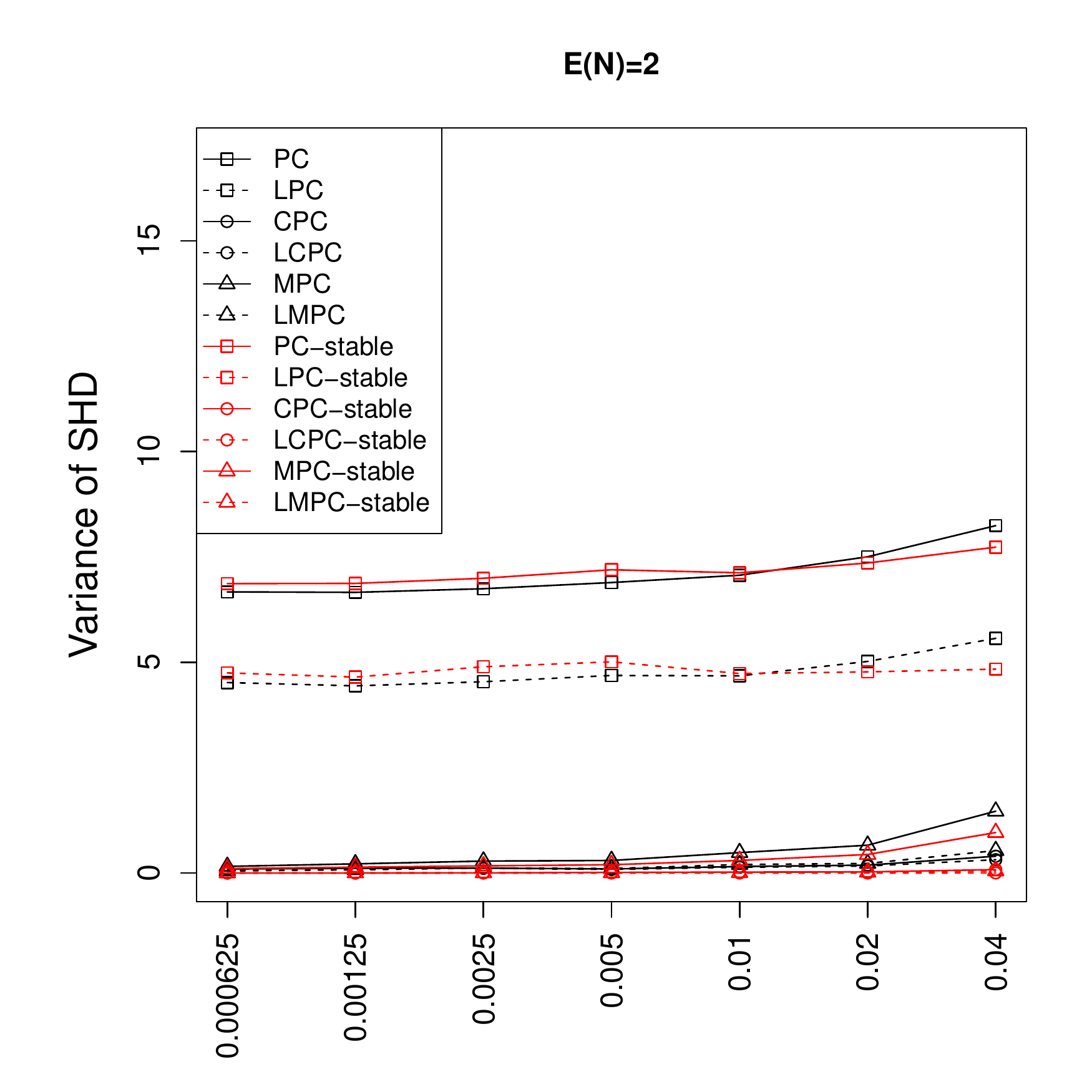}\qquad
     \includegraphics[scale=0.36,angle=0]{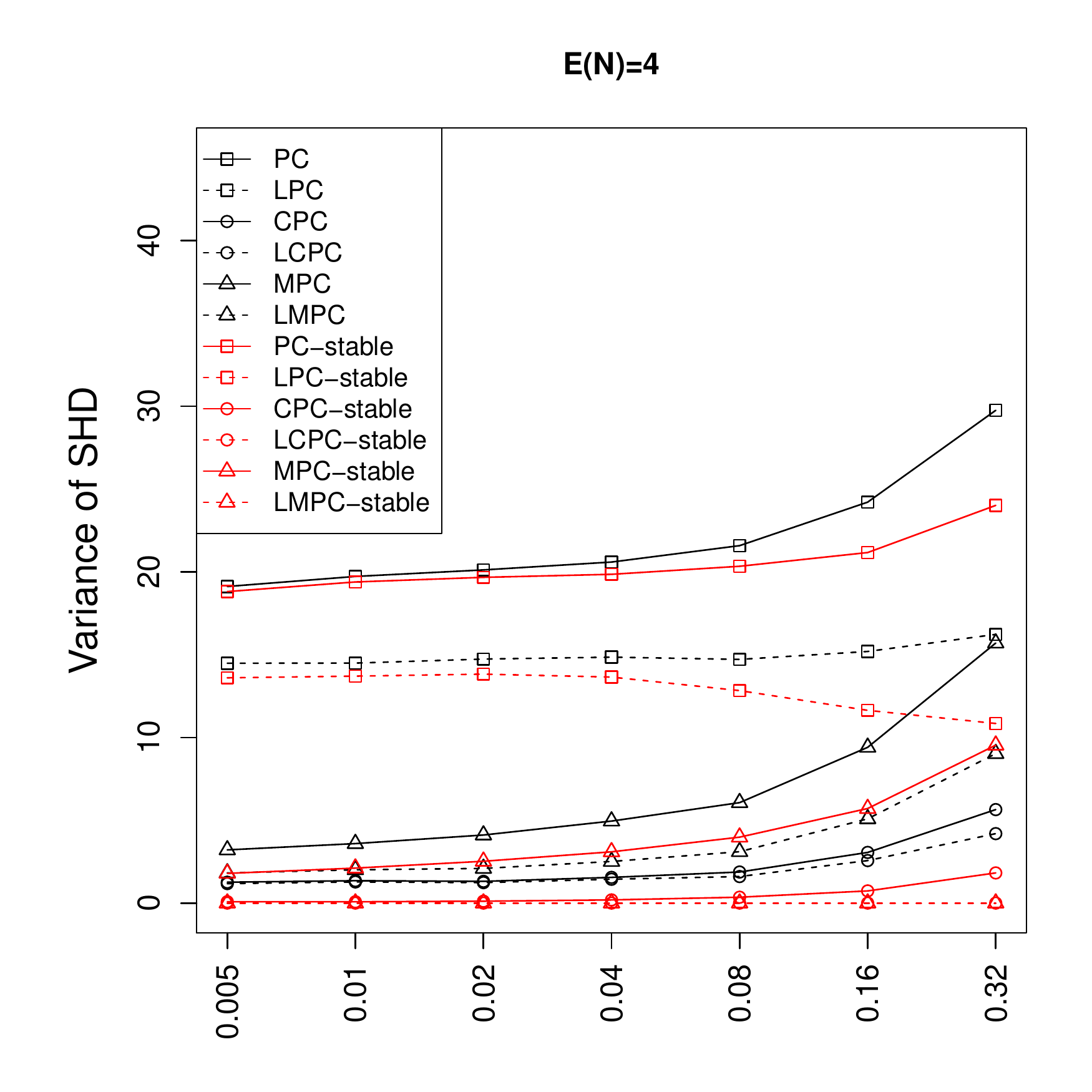}
  \caption{Estimation performance of (L)PC(-stable), (L)CPC(-stable), and
    (L)MPC(-stable) for the CPDAGs in the low-dimensional settings, for
    different values of $\alpha$. The first row of plots shows the
    performance in terms of SHD, shown as averages over 250 randomly
    generated graphs and 50 random variable orderings per graph. The second
    row of plots shows the performance in terms of the variance of the SHD
  over the 50 random variable orderings per graph, shown as averages over
  250 randomly generated graphs.}
  \label{fig.sim.pc.cpdags.add1}
\end{figure}


For the scenario with latent variables, we generated 120 random weighted DAGs with
$p=50$ and $E(N)=2$, as described in Section \ref{sec.sim.setup}.
For each DAG we generated an i.i.d.\ sample size of
$n=1000$. To assess the impact of latent variables,
we randomly defined in each DAG half of the
variables that have no parents and at least two children to be latent. We
then estimated each graph for 20 random orderings of the observed
variables, using the sample versions of FCI(-stable), CFCI(-stable),
MFCI(-stable), RFCI(-stable), CRFCI(-stable), and MRFCI(-stable) at levels
$\alpha \in \{0.0025, 0.005, 0.01, 0.02, 0.04, 0.08\}$ for the partial
correlation tests. Thus, for each randomly generated graph, we obtained 20
estimated PAGs from each algorithm, for each value of $\alpha$. Figure
\ref{fig.sim.fci.skelet.add1} shows the estimation performance of FCI
(circles; black dashed line), FCI-stable (triangles; red dashed line), RFCI
(circles; black solid line), and RFCI-stable (triangles; red solid line)
for the skeleton. Figure \ref{fig.sim.fci.pags.add1} shows the
estimation performance for the PAGs in terms of SHD edge marks, and in
terms of the variance of the SHD edge marks over the
20 random variable orderings per graph.

\begin{figure}[h]
\centering
\includegraphics[scale=0.80]{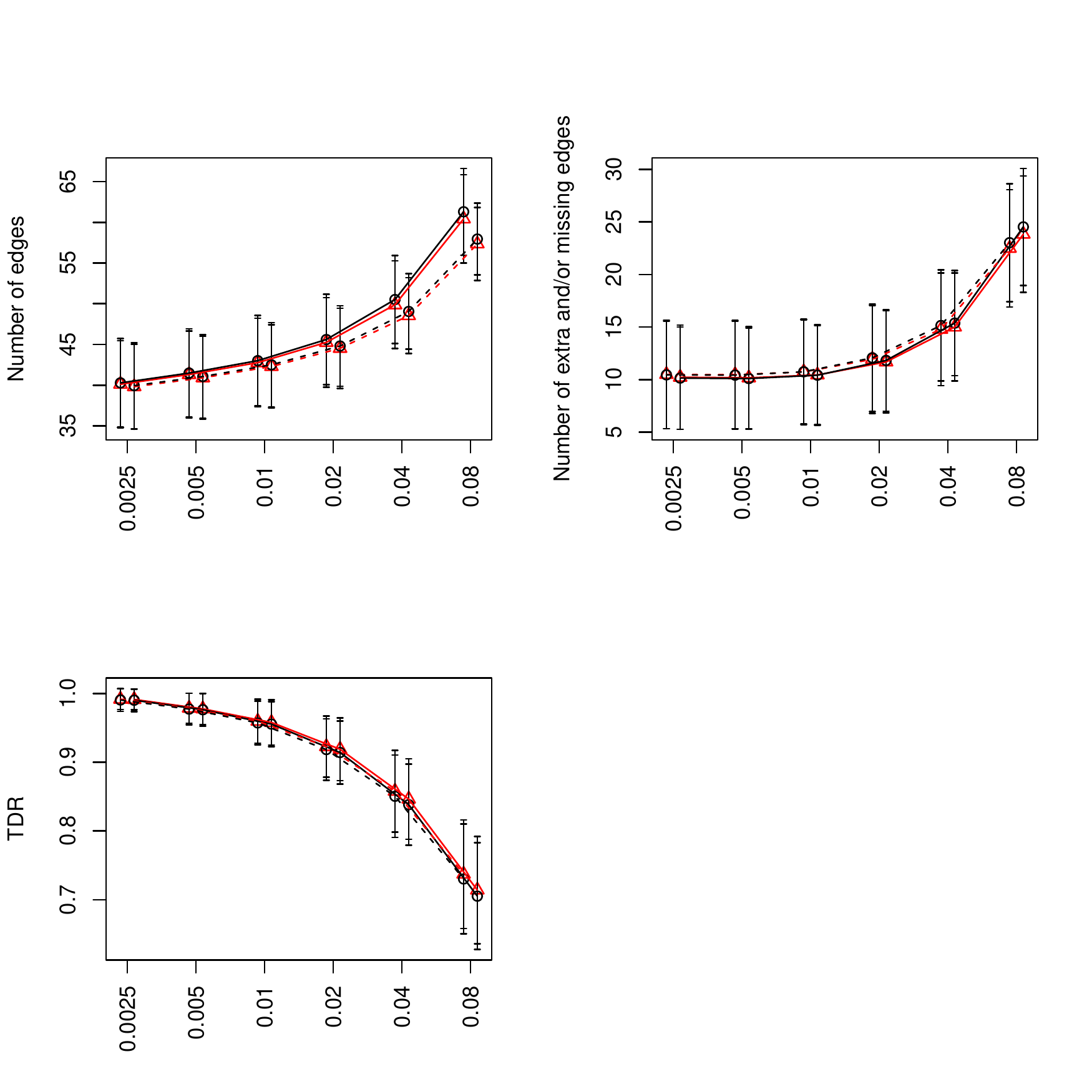}
\caption{Estimation performance of FCI (circles; black dashed line),
  FCI-stable (triangles; red dashed line), RFCI (circles; black solid
  line), and RFCI-stable (triangles; red solid line), for the skeleton of
  the PAGs for different values of $\alpha$ ($x$-axis displayed in $\log$
  scale) in the low-dimensional setting. The results are shown as averages
  plus or minus one standard deviation, computed over 120 randomly
  generated graphs and 20 random variable orderings per graph, and slightly
  shifted up and down from the real values of $\alpha$ for a better
  visualization.}
  \label{fig.sim.fci.skelet.add1}
\end{figure}

\begin{figure}[!h]\centering%
     \includegraphics[scale=0.36,angle=0]{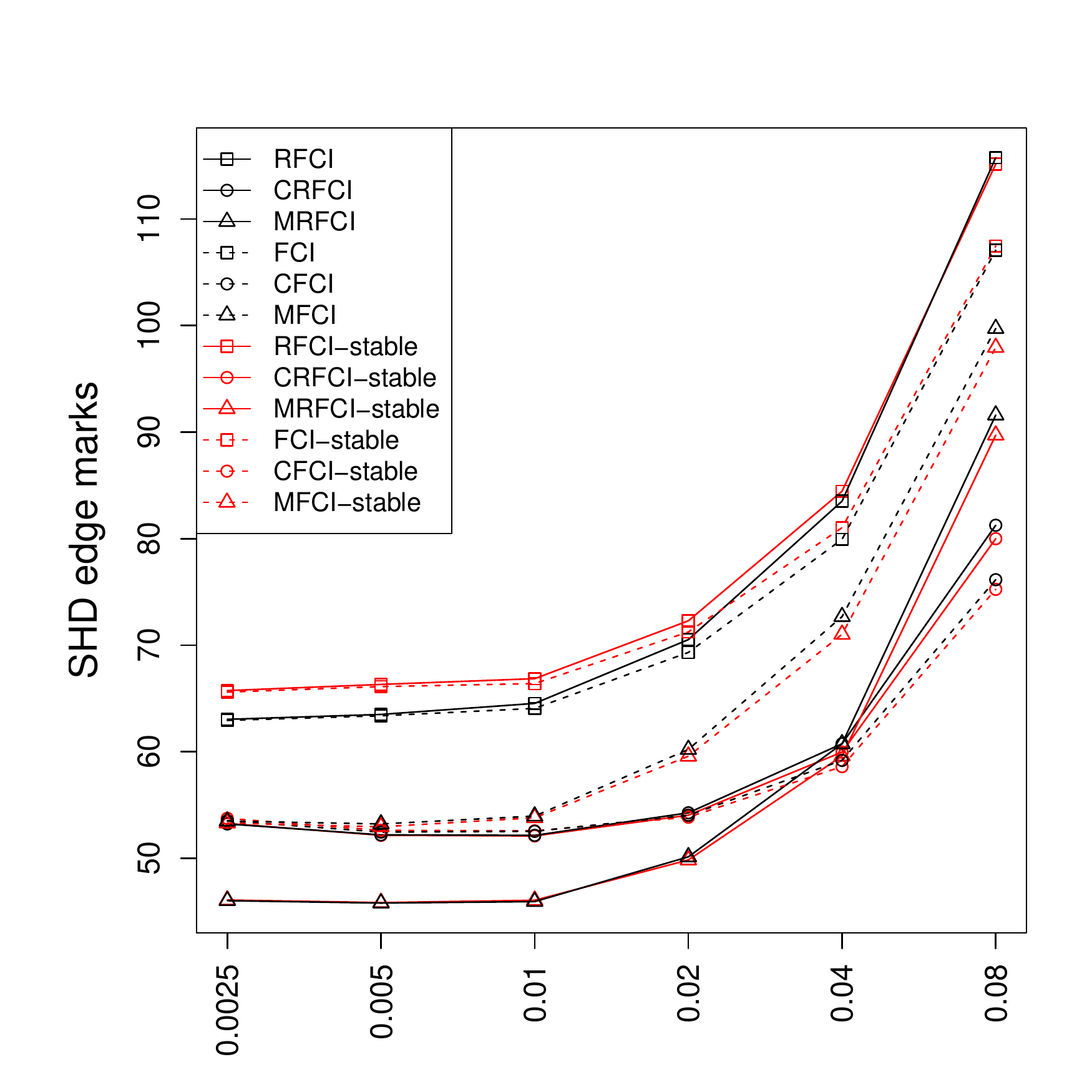}\qquad
     \includegraphics[scale=0.36,angle=0]{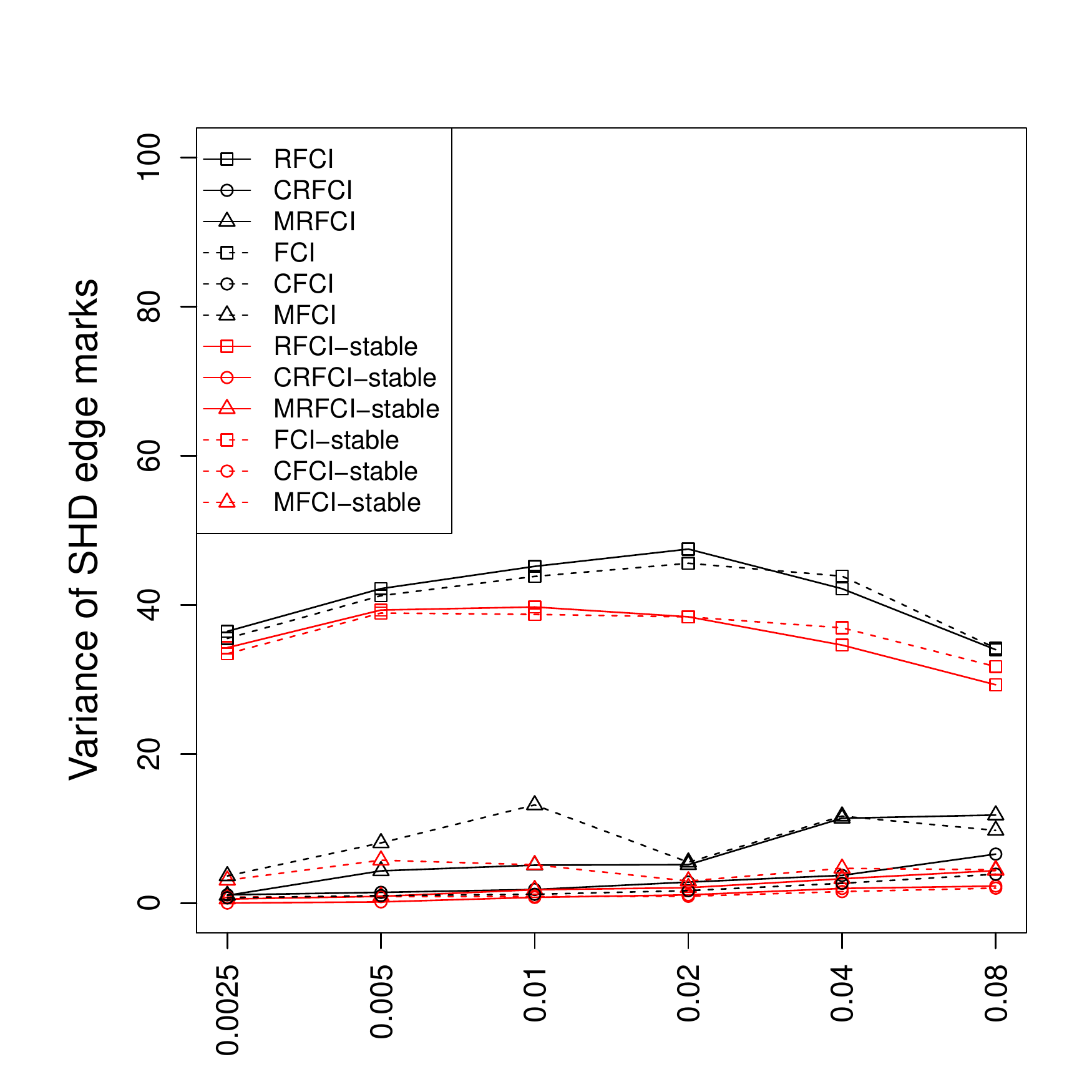}
  \caption{Estimation performance of the modifications of FCI(-stable) and
    RFCI(-stable) for the PAGs in the low-dimensional setting, for
    different values of $\alpha$. The left panel shows the performance in
    terms of SHD edge marks, shown as averages over 120 randomly generated
    graphs and 20 random variable orderings per graph. The right panel
    shows the performance in terms of the variance of the SHD
  edge marks over the 20 random variable orderings per graph, shown as
  averages over 120 randomly generated graphs.}
  \label{fig.sim.fci.pags.add1}
\end{figure}

Regarding the skeletons of the CPDAGs and PAGs, the estimation performances
between PC and PC-stable, as well as between (R)FCI and (R)FCI-stable are
basically indistinguishable for all values of $\alpha$. However, Figure
\ref{fig.sim.fci.skelet.add1} shows that FCI(-stable) returns graphs with
slightly fewer edges than RFCI(-stable), for all values of $\alpha$. This
is related to the fact that FCI(-stable) tends to perform more tests than
RFCI(-stable).

Regarding the CPDAGs and PAGs, the performance of the modifications of PC
and (R)FCI (black lines) are very close to the performance of PC-stable and
(R)FCI-stable (red lines). Moreover, CPC(-stable) and MPC(-stable) as well
as C(R)FCI(-stable) and M(R)FCI(-stable) perform better in particular in
reducing the variance of the SHD and SHD edge marks, respectively. This
indicates that most of the order-dependence in the low-dimensional setting
is in the orientation of the edges.

We also note that in all proposed measures there are only small differences
between modifications of FCI and of RFCI.

\subsection{Number of tests and computing time}\label{app.sim.time.pc}

We consider the number of tests and the computing time of PC and PC-stable
in the high-dimensional setting described in Section \ref{sec.sim.setup}.

One can easily deduce that Step 1 of the PC- and PC-stable algorithms
perform the same number of tests for $\ell=0$, because the adjacency sets
do not play a role at this stage. Moreover, for $\ell=1$ PC-stable performs
at least as many tests as PC, since the
adjacency sets $a(X_i)$ (see Algorithm \ref{pseudo.new.pc}) are
always supersets of the adjacency sets $\text{adj}(X_i)$ (see Algorithm
\ref{pseudo.old.pc}). For larger values of $\ell$, however, it is
difficult to analyze the number of tests analytically.

Table \ref{table.nrtests} therefore shows the average number of tests that
were performed by Step 1 of the two algorithms, separated by size of the
conditioning set, where we considered the high-dimensional setting with
$\alpha=0.04$ (see Section \ref{sec.sim.skeleton}) since this was most
computationally intensive. As expected the number of marginal correlation
tests was identical for both algorithms. For $\ell=1$,
PC-stable performed slightly more than twice as many tests as PC, amounting
to about $1.36 \times 10^5$ additional tests. For $\ell=2$, PC-stable
performed more tests than PC, amounting to $3.4 \times 10^3$. For
larger values of $\ell$, PC-stable performed fewer tests than PC, since the
additional tests for $\ell=1$ and $\ell=2$ lead to a sparser
skeleton. However, since PC also performed relatively few tests for larger
values of $\ell$, the absolute difference in the number of tests for large
$\ell$ is rather small. In total, PC-stable performed about $1.39 \times
10^5$ more tests than PC.

\begin{table}[h!]
\centering
\begin{tabular}{crr}
  \cline{2-3}
  & \multicolumn{1}{|c|}{PC-algorithm} &
   \multicolumn{1}{|c|}{PC-stable algorithm}\\ \cline{1-3}
  \multicolumn{1}{|c|}{$\ell=0$} & \multicolumn{1}{|r|}{$5.21 \times 10^5$
  ($1.95 \times 10^2$)} & \multicolumn{1}{|r|}{$5.21 \times 10^5$ ($1.95
  \times 10^2$)}\\ \cline{1-3}
  \multicolumn{1}{|c|}{$\ell=1$} & \multicolumn{1}{|r|}{$1.29 \times 10^5$
    ($2.19 \times 10^3$)} & \multicolumn{1}{|r|}{$2.65 \times 10^5$ ($4.68
    \times 10^3$)}\\
  \cline{1-3}
  \multicolumn{1}{|c|}{$\ell=2$} & \multicolumn{1}{|r|}{$1.10 \times 10^4$
    ($5.93 \times 10^2$)} & \multicolumn{1}{|r|}{$1.44 \times 10^4$
    ($8.90 \times 10^2$)}\\ \cline{1-3}
  \multicolumn{1}{|c|}{$\ell=3$} & \multicolumn{1}{|r|}{$1.12 \times 10^3$
    ($1.21 \times 10^2$)} & \multicolumn{1}{|r|}{$5.05 \times 10^2$
    ($8.54 \times 10^1$)}\\ \cline{1-3}
  \multicolumn{1}{|c|}{$\ell=4$} & \multicolumn{1}{|r|}{$9.38 \times 10^1$
    ($2.86 \times 10^1$)} & \multicolumn{1}{|r|}{$3.08 \times 10^1$ ($1.78
    \times 10^1$)}\\ \cline{1-3}
  \multicolumn{1}{|c|}{$\ell=5$} &  \multicolumn{1}{|r|}{$2.78 \times 10^0$
    ($4.53 \times 10^0$)} &
  \multicolumn{1}{|r|}{$0.65 \times 10^0$ ($1.94 \times 10^0$)}\\ \cline{1-3}
  \multicolumn{1}{|c|}{$\ell=6$} & \multicolumn{1}{|r|}{$0.02 \times 10^0$
    ($0.38 \times 10^0$)} &
  \multicolumn{1}{|c|}{-}\\ \hline \hline
  \multicolumn{1}{|c|}{Total} & \multicolumn{1}{|r|}{$6.62 \times 10^5$
  ($2.78 \times 10^3$)} & \multicolumn{1}{|r|}{$8.01 \times 10^5$ ($5.47
  \times 10^3$)}\\ \cline{1-3}
\end{tabular}
\caption{Number of tests performed by Step 1 of the PC and PC-stable algorithms
  for each size of the conditioning sets $\ell$, in the
  high-dimensional setting with $p=1000$, $n=50$ and $\alpha=0.04$. The
  results are shown as averages (standard deviations) over 250 random
  graphs and 20 random variable orderings per graph.}
  \label{table.nrtests}
\end{table}

Table \ref{table.runtime} shows the average runtime of the PC- and PC-stable
algorithms. We see that PC-stable is somewhat slower than PC for all values
of $\alpha$, which can be explained by the fact that PC-stable tends to
perform a larger number of tests (cf. Table \ref{table.nrtests}).

\begin{table}[h!]
\centering
\begin{tabular}{lrrr}
  \cline{2-4}
  & \multicolumn{1}{|c|}{PC-algorithm} &
  \multicolumn{1}{|c|}{PC-stable algorithm} &
  \multicolumn{1}{|c|}{PC / PC-stable}\\ \cline{1-4}
  \multicolumn{1}{|l|}{$\alpha=0.000625$} & \multicolumn{1}{|r|}{111.79 (7.54)} &
  \multicolumn{1}{|r|}{115.46 (7.47)}  & \multicolumn{1}{|c|}{0.97}\\ \cline{1-4}
  \multicolumn{1}{|l|}{$\alpha=0.00125$} & \multicolumn{1}{|r|}{110.13 (6.91)} & \multicolumn{1}{|r|}{113.77 (7.07)}& \multicolumn{1}{|c|}{0.97}\\ \cline{1-4}
  \multicolumn{1}{|l|}{$\alpha=0.025$} & \multicolumn{1}{|r|}{115.90 (12.18)} & \multicolumn{1}{|r|}{119.67 (12.03)}& \multicolumn{1}{|c|}{0.97}\\
  \cline{1-4}
  \multicolumn{1}{|l|}{$\alpha=0.05$} & \multicolumn{1}{|r|}{116.14 (9.50)}
  & \multicolumn{1}{|r|}{119.91 (9.57)}& \multicolumn{1}{|c|}{0.97}\\
  \cline{1-4}
  \multicolumn{1}{|l|}{$\alpha=0.01$} & \multicolumn{1}{|r|}{121.02
    (8.61)} & \multicolumn{1}{|r|}{125.81 (8.94)}& \multicolumn{1}{|c|}{0.96}\\ \cline{1-4}
  \multicolumn{1}{|l|}{$\alpha=0.02$} & \multicolumn{1}{|r|}{131.42 (13.98)} & \multicolumn{1}{|r|}{139.54 (14.72)} & \multicolumn{1}{|c|}{0.94} \\ \cline{1-4}
  \multicolumn{1}{|l|}{$\alpha=0.04$} &  \multicolumn{1}{|r|}{148.72 (14.98)} &
  \multicolumn{1}{|r|}{170.49 (16.31)} & \multicolumn{1}{|c|}{0.87} \\ \cline{1-4}
\end{tabular}
\caption{Run time in seconds (computed on an AMD Opteron(tm) Processor 6174
  using R 2.15.1.) of PC and PC-stable for the high-dimensional setting
  with $p=1000$ and $n=50$. The results are shown as averages (standard
  deviations) over 250 random graphs and 20 random variable orderings per graph.}
  \label{table.runtime}
\end{table}

\subsection{Estimation performance in settings where $p=n$}\label{app.sim.results.equal}

Finally, we consider two settings for the scenario with latent
variables, where we generated 250 random weighted DAGs
with $p=50$ and $E(N)=\{2,4\}$, as described in Section \ref{sec.sim.setup}.
 For each DAG we generated an i.i.d.\
sample size of $n=50$. We again randomly defined in each DAG half of the
variables that have no parents and at least two children to be latent. We
then estimated each graph for 50 random orderings of the observed
variables, using the sample versions of FCI(-stable), CFCI(-stable),
MFCI(-stable), RFCI(-stable), CRFCI(-stable), and MRFCI(-stable) at levels
$\alpha \in \{0.0025, 0.005, 0.01, 0.02, 0.04, 0.08, 0.16\}$ for $E(N)=2$
and $\alpha \in \{0.005, 0.01, 0.02, 0.04, 0.08, 0.16, 0.32\}$ for
$E(N)=4$. Thus, for each randomly generated graph, we obtained 50 estimated
PAGs from each algorithm, for each value of $\alpha$.

Figure \ref{fig.sim.fci.skelet.add2} shows the estimation performance for the
skeleton. The (R)FCI-stable versions (red lines)
lead to slightly sparser graphs and slightly better performance in TDR than
(R)FCI versions (black lines) in both settings.

\begin{figure}[h]
\centering
\includegraphics[scale=0.80]{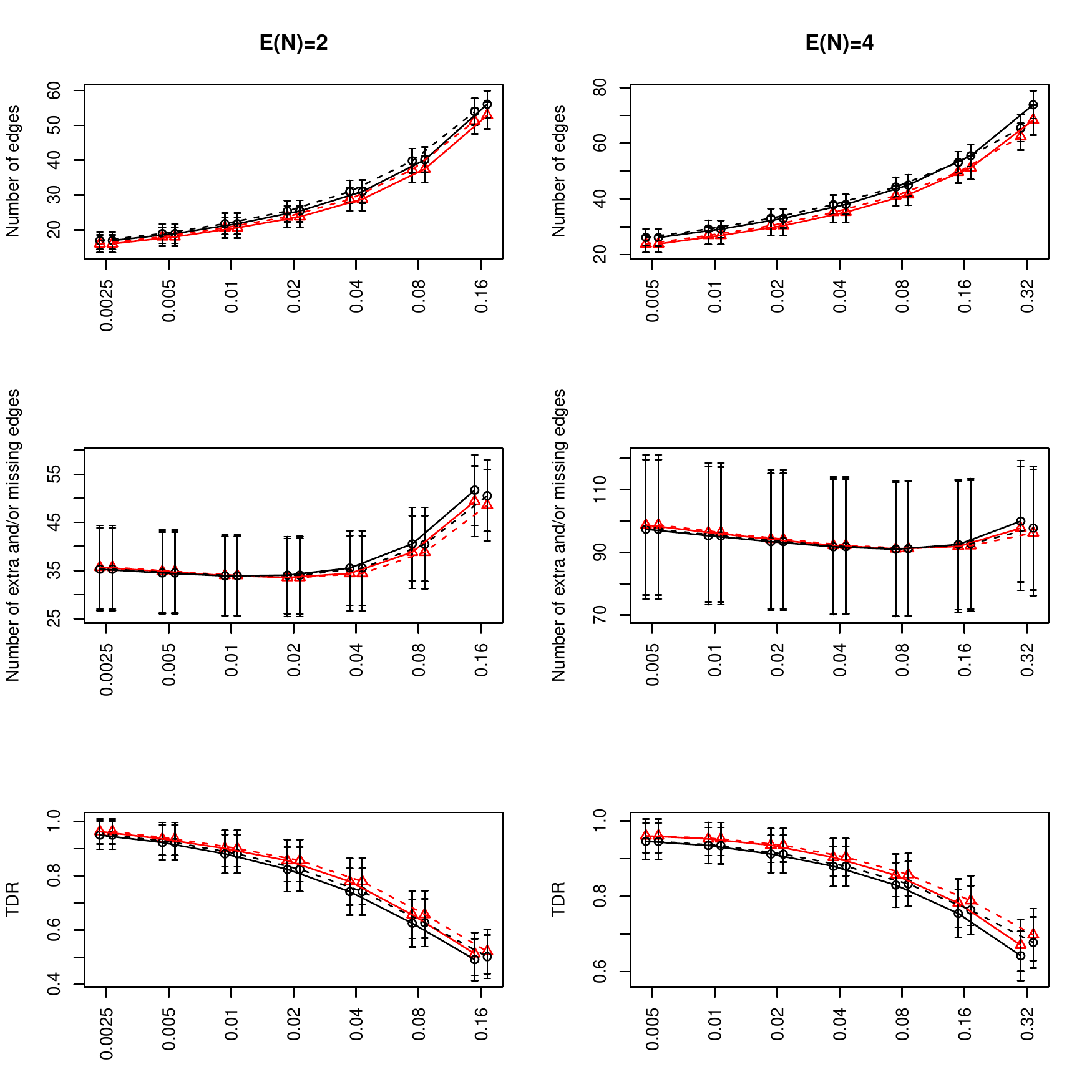}
\caption{Estimation performance of FCI (circles; black dashed line),
  FCI-stable (triangles; red dashed line), RFCI (circles; black solid
  line), and RFCI-stable (triangles; red solid line), for the skeleton of
  the PAGs for different values of $\alpha$ ($x$-axis displayed in $\log$
  scale) in two settings where $p=n$. The results are shown as averages
  plus or minus one standard deviation, computed over 250 randomly
  generated graphs and 50 random variable orderings per graph, and slightly
  shifted up and down from the real values of $\alpha$ for a better
  visualization.}
  \label{fig.sim.fci.skelet.add2}
\end{figure}

Figures \ref{fig.sim.fci.pags.add2} shows
the estimation performance of all modifications of (R)FCI with
respect to the PAGs in terms of SHD edge marks, and
in terms of the variance of the SHD edge marks over the 50 random variable
orderings per graph. The (R)FCI-stable versions produce a better fit than
the (R)FCI versions. Moreover, C(R)FCI(-stable) and M(R)FCI(-stable)
perform similarly for sparse graphs and they improve the fit, while in
denser graphs M(R)FCI(-stable) still improves the fit and it performs much
better than C(R)FCI(-stable) for the SHD edge marks. Again we see little
difference between modifications of RFCI and FCI with respect to all measures.


\begin{figure}[!h]\centering%
     \includegraphics[scale=0.36,angle=0]{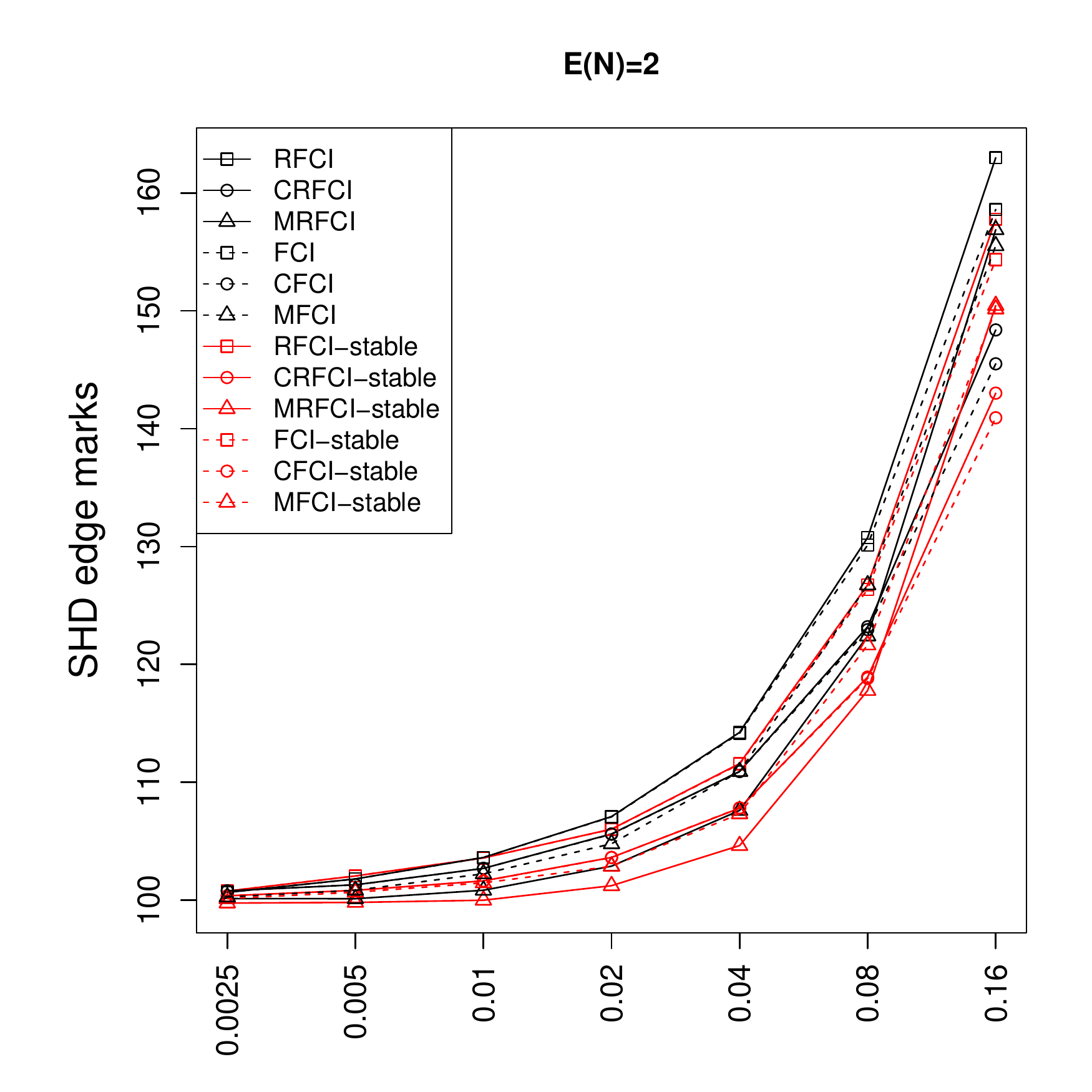}\qquad
     \includegraphics[scale=0.36,angle=0]{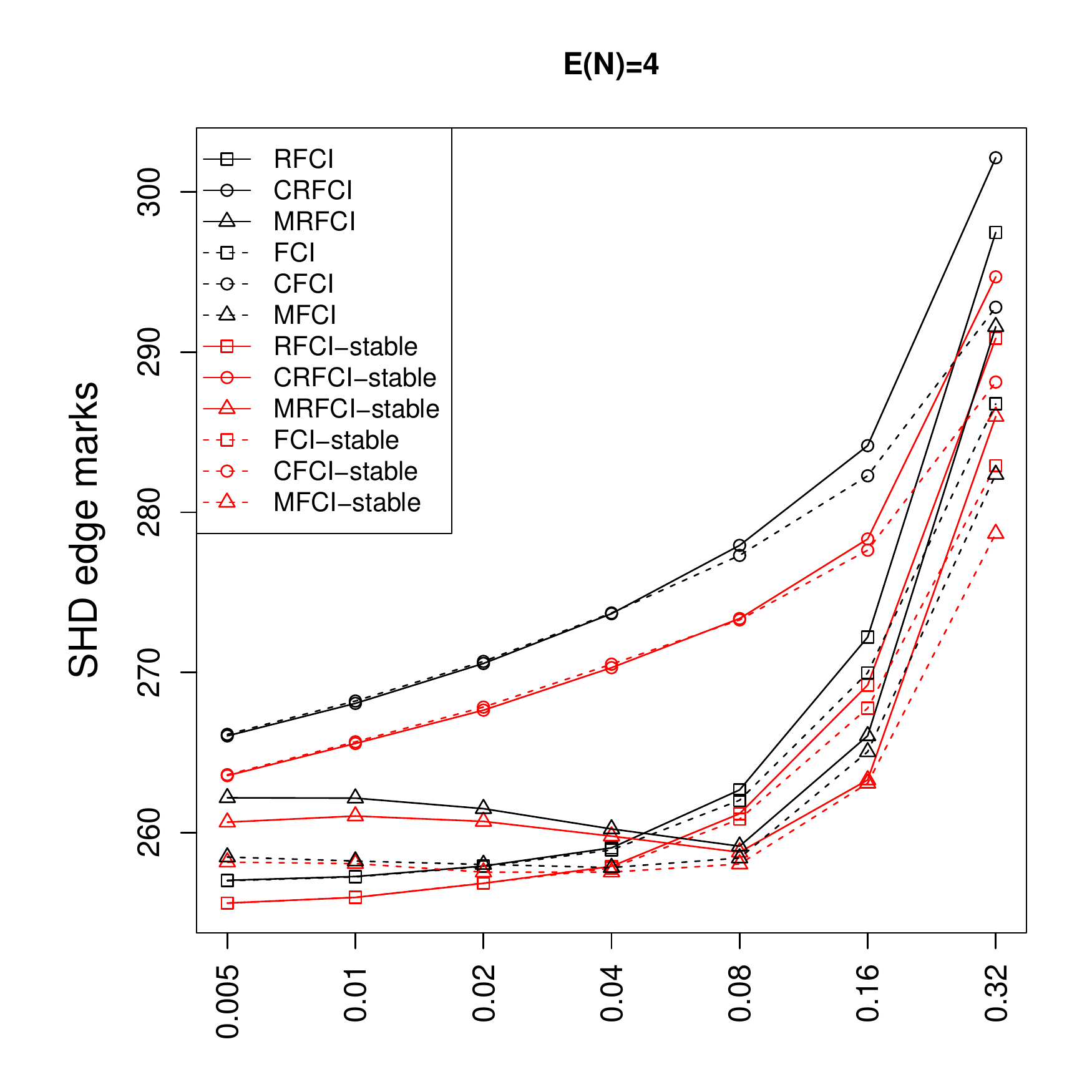}\\
     \includegraphics[scale=0.36,angle=0]{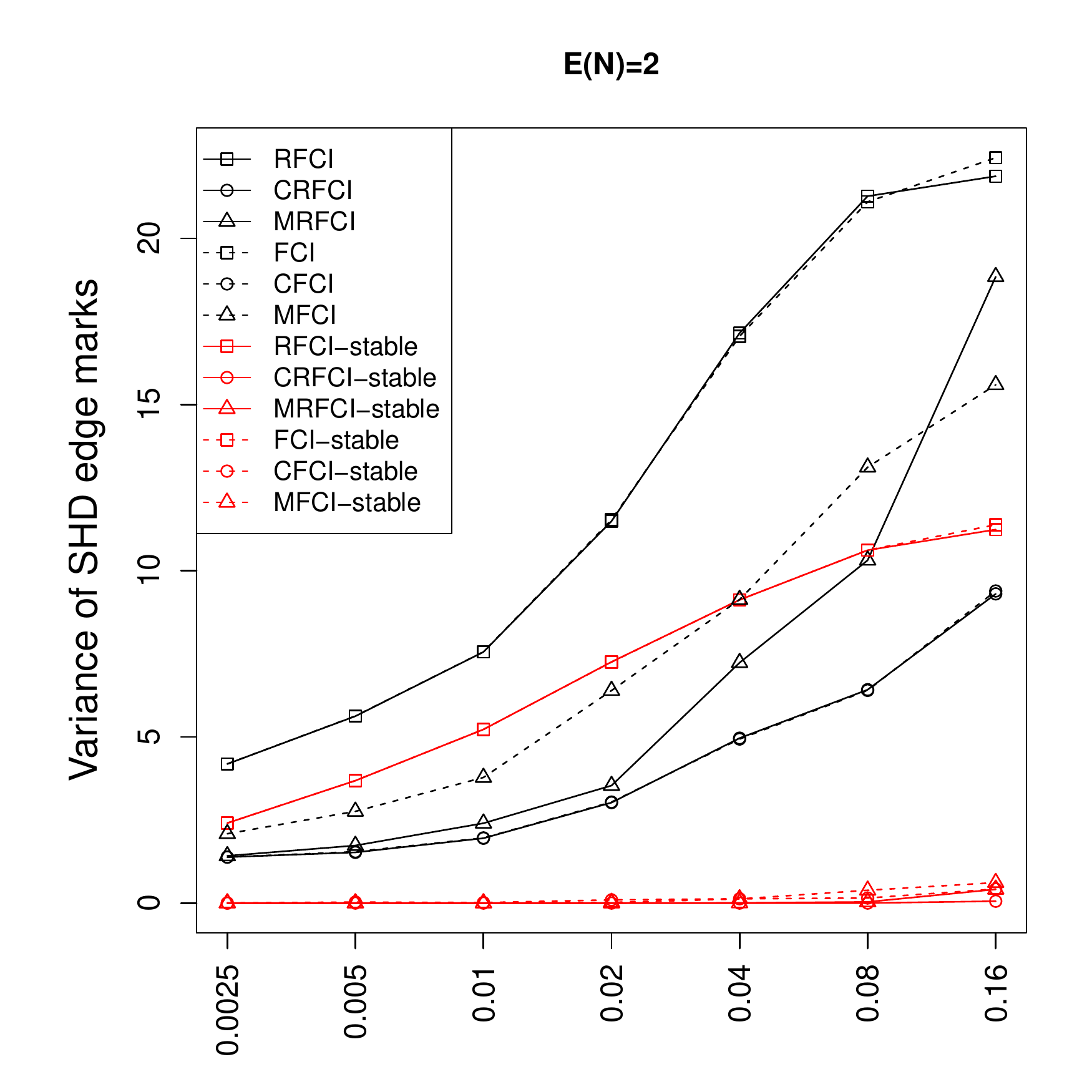}\qquad
\includegraphics[scale=0.36,angle=0]{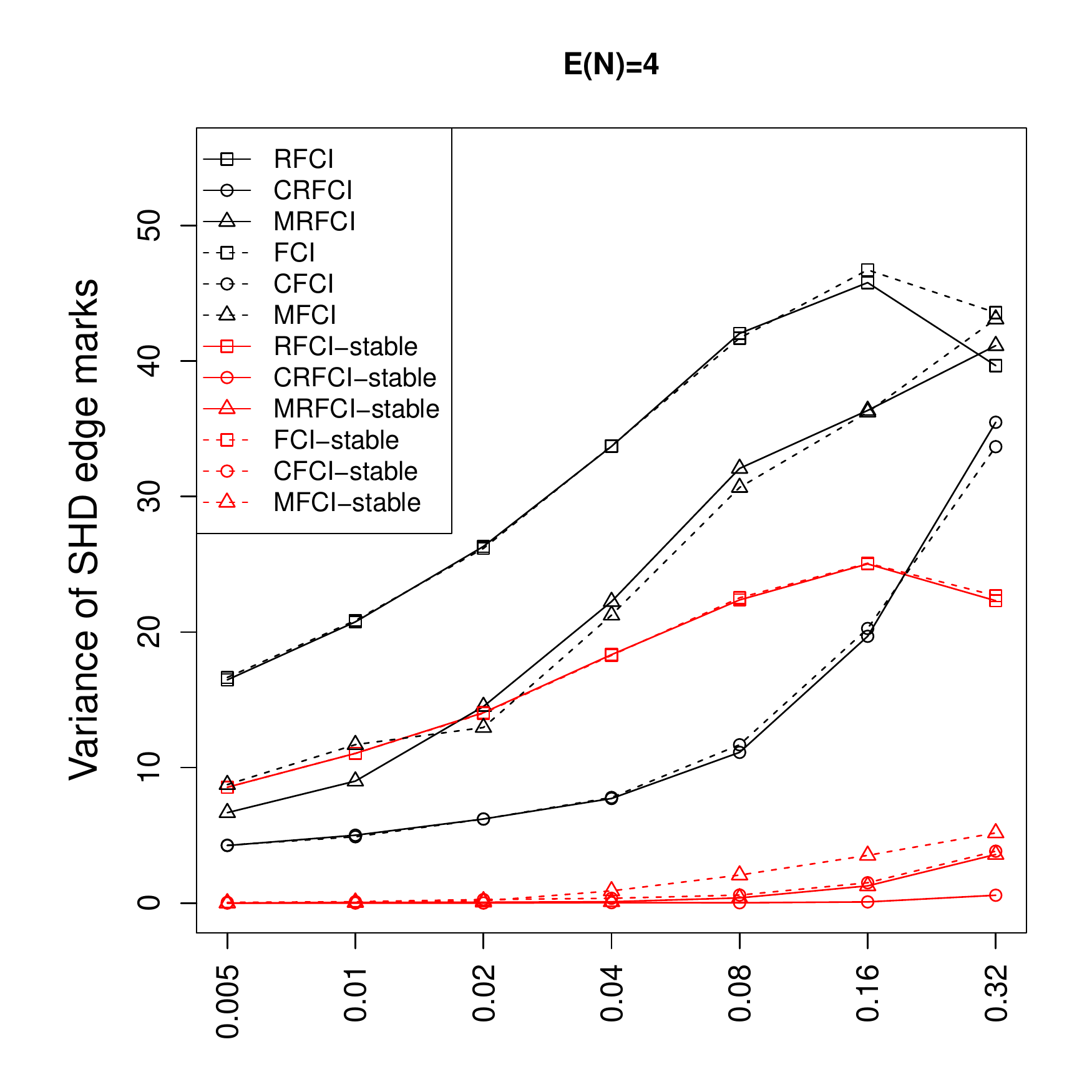}

  \caption{Estimation performance of the modifications of FCI(-stable) and
    RFCI(-stable) for the PAGs in settings where $p=n$, for different values of
  $\alpha$. The first row of plots shows the performance in terms of SHD
  edge marks, shown as averages over 250 randomly generated graphs and 50 random
  variable orderings per graph. The second row of plots shows the
  performance in terms of the variance of the SHD edge marks over the 50
  random variable orderings per graph, shown as averages over 250 randomly
  generated graphs.}
  \label{fig.sim.fci.pags.add2}
\end{figure}

\clearpage
\newpage

\bibliography{Mybib}
\end{document}